\definecolor{codegreen}{rgb}{0,0.6,0}
\definecolor{codegray}{rgb}{0.5,0.5,0.5}
\definecolor{codepurple}{rgb}{0.58,0,0.82}
\definecolor{backcolour}{rgb}{0.95,0.95,0.92}
\definecolor{headercolor}{rgb}{0.42, 0.56, 0.14}
\definecolor{rowcolor}{rgb}{0.9, 0.9, 0.9}
\definecolor{Gray}{gray}{0.9}
\definecolor{lightgray}{gray}{0.75}
\definecolor{pastelblue}{RGB}{46, 95, 127}
\definecolor{pastelorange}{RGB}{201, 171, 102}
\definecolor{pastelgreen}{RGB}{76, 124, 49}
\definecolor{lighteconomist}{RGB}{252, 233, 237}
\definecolor{economist}{RGB}{115,00,00}
\definecolor{customgreen}{RGB}{116, 154, 114}
\definecolor{lightgreen}{RGB}{240, 246, 232}
\definecolor{greylight}{RGB}{242, 242, 242}
\definecolor{greydark}{RGB}{179, 179, 179}
\definecolor{ForestGreen}{RGB}{34, 139, 34}
\lstdefinestyle{mystyle}{
    backgroundcolor=\color{backcolour},   
    commentstyle=\color{codegreen},
    keywordstyle=\color{magenta},
    numberstyle=\tiny\color{codegray},
    stringstyle=\color{codepurple},
    basicstyle=\ttfamily\footnotesize,
    breakatwhitespace=false,         
    breaklines=true,                 
    captionpos=b,                    
    keepspaces=true,                 
    numbers=left,                    
    numbersep=5pt,                  
    showspaces=false,                
    showstringspaces=false,
    showtabs=false,                  
    tabsize=2
}
\lstdefinestyle{pythonstyle}{
    backgroundcolor=\color{backcolour},   
    commentstyle=\color{codegreen},
    keywordstyle=\color{codeblue},
    numberstyle=\tiny\color{codegray},
    stringstyle=\color{codepurple},
    basicstyle=\ttfamily\footnotesize,
    breakatwhitespace=false,         
    breaklines=true,                 
    captionpos=b,                    
    keepspaces=true,                 
    numbers=left,                    
    numbersep=5pt,                  
    showspaces=false,                
    showstringspaces=false,
    showtabs=false,                  
    tabsize=4,
    frame=single,
    framesep=4pt,
    framerule=0.4pt,
    rulecolor=\color{codegray},
    keywords={import, def, class, return, None, True, False, 
             self, if, else, for, while, break, continue},
    morekeywords=[2]{print, len, range, str, int, float, list},
    keywordstyle={[2]\color{codeblue}},
    morestring=[b]",
    morestring=[b]',
    comment=[l]{\#}
}
\newcolumntype{g}{>{\columncolor{Gray}}c}
\newtcolorbox{customblockquote}{
  colframe=customgreen,
  colback=lightgreen,
  boxrule=0pt,
  left=5pt,
  right=4pt,
  top=5pt,
  bottom=3pt,
  arc=0pt,
  breakable,
  before skip=1.2\baselineskip,
  after skip=0.7\baselineskip,
  left skip=0pt,
  right skip=0pt,
  enhanced jigsaw,
  frame hidden,
  overlay={
    \draw[customgreen, line width=2pt] 
      ([yshift=1pt]frame.north west) -- (frame.south west);
    \node[inner sep=0pt] at ([xshift=0pt, yshift=-1.3pt]frame.north west) {\lightbulbicon};
  },
  fontupper=\fontfamily{lmr}\selectfont,
  boxsep=1pt
}
\newtcolorbox{greycustomblock}{
  colframe=greydark,
  colback=greylight,
  boxrule=1pt,
  left=2.5pt,
  right=3pt,
  top=5pt,
  bottom=3pt,
  arc=0pt,
  breakable,
  before skip=0.2\baselineskip,
  after skip=0.2\baselineskip,
  left skip=0pt,
  right skip=0pt,
  enhanced jigsaw,
  frame hidden,
  overlay={
    \draw[greydark, line width=2pt]
      ([yshift=-1pt]frame.north west) -- ([yshift=1pt]frame.south west);
  },
  fontupper=\small\fontfamily{lmr}\selectfont
}
\newtcolorbox{shadebox}[1][]{
    colback=gray!4,
    colframe=white, 
    boxsep=5pt,
    arc=0mm,
    top=0.5pt,
    bottom=0.5pt,
    breakable,
    left=0.5pt,
    right=0.5pt,
    auto outer arc,
    #1
}
\newcommand{\lightbulbicon}{%
  \begin{tikzpicture}[baseline=-0.5ex]
    \draw[fill=white, draw=customgreen, thick] (0,0) circle (1.5ex);
    \node[scale=0.8, color=customgreen] at (0,0) {\faLightbulbO~};
  \end{tikzpicture}%
}
\DeclareMathOperator*{\argmin}{arg\,min}
\newcommand{\indep}{\perp\!\!\!\perp}
\newtheorem{definition}{Definition}
\newtheorem{assumption}{Assumption}
\newtheorem{proposition}{Proposition}
\renewcommand \thepart{}
\renewcommand \partname{}
\title{Self-Healing Machine Learning: A Framework for Autonomous Adaptation in Real-World Environments}
\author{%
  Paulius Rauba \\
  University of Cambridge\\
  \texttt{pr501@cam.ac.uk} 
  \And
  Nabeel Seedat \\
  University of Cambridge\\
  \texttt{ns741@cam.ac.uk} 
  \AND
  Krzysztof Kacprzyk \\
  University of Cambridge\\
  \texttt{kk751@cam.ac.uk} 
  \And
  Mihaela van der Schaar \\
  University of Cambridge\\
  \texttt{mv472@cam.ac.uk}
}
\begin{document}

\doparttoc
\faketableofcontents

\maketitle

\begin{abstract}
Real-world machine learning systems often encounter model performance degradation due to distributional shifts in the underlying data generating process (DGP). Existing approaches to addressing shifts, such as concept drift adaptation, are limited by their \textit{reason-agnostic} nature. By choosing from a pre-defined set of actions, such methods implicitly assume that the causes of model degradation are irrelevant to what actions should be taken, limiting their ability to select appropriate adaptations. In this paper, we propose an alternative paradigm to overcome these limitations, called \textit{self-healing machine learning} (SHML). Contrary to previous approaches, SHML autonomously diagnoses the reason for degradation and proposes diagnosis-based corrective actions. We formalize SHML as an optimization problem over a space of adaptation actions to minimize the expected risk under the shifted DGP. We introduce a theoretical framework for self-healing systems and build an agentic self-healing solution \textit{$\mathcal{H}$-LLM} which uses large language models to perform self-diagnosis by reasoning about the structure underlying the DGP, and self-adaptation by proposing and evaluating corrective actions. Empirically, we analyze different components of $\mathcal{H}$-LLM to understand \textit{why} and \textit{when} it works, demonstrating the potential of self-healing ML.
\end{abstract}

\section{Introduction}

\begin{wrapfigure}[11]{r}{0.4\columnwidth}
    \vspace*{-4.5mm}
    \centering
    \includegraphics[width=\linewidth]{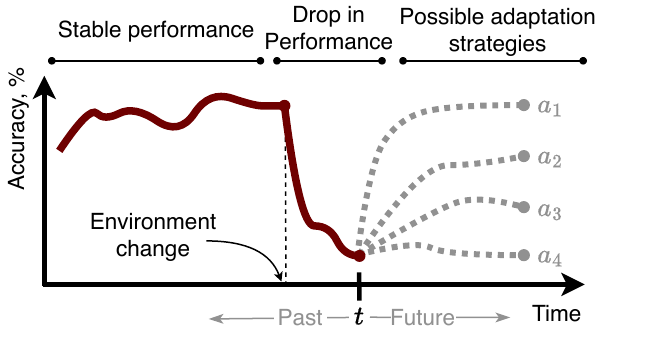}
    \caption{Different adaptation strategies $a_1, \ldots, a_4$ might result in different performance after an environment change.}
    \label{fig:adaptation_options}
    \vspace{-2mm}
    \rule{\linewidth}{.5pt}
\end{wrapfigure}
\vspace{-5pt}

Consider the following scenario: You are tasked with monitoring the performance of a black-box model $f$ deployed in production. After some time, you notice that the predictive performance of $f$ has started to degrade. What would be the appropriate action $a$ you should take to ensure that the model's performance returns to its prior performance levels: $a_1$: re-train the model on a subset of the data; $a_2$: change the type of the model used; $a_3$: remove discovered corrupted values; $a_4$: add new covariates? 

Clearly, the answer to this question is ``it depends''. Different actions might result in different behavior of the model over time, as illustrated in Fig. \ref{fig:adaptation_options}. If we could pinpoint \textit{why} the performance of the model has degraded, it could help us understand \textit{what} actions are most promising, since we could select an action which would directly address the root cause of the problem. 

While we take this intuition for granted, state-of-the-art techniques for handling model degradation do not reflect this line of reasoning and rely on \textit{pre-determined actions}, such as model retraining \citep{lu2018learning,raza2014adaptive,rabanser2019failing,bayram2022concept,agrahari2022concept}, re-using old models \citep{gonccalves2013rcd,alippi2013just,lu2018learning,bach2008paired}, or other specialized methods \citep{gonccalves2013rcd,alippi2013just,gama2014recurrent,widmer1996learning,hulten2001mining,wang2022noise}. Such approaches share a common, implicit assumption ---the \textit{reason} for the degradation in model performance is irrelevant. We refer to this as \textit{reason-agnostic methods}. 

\begin{wrapfigure}[14]{r}{0.65\columnwidth}
\vspace*{-6mm} 
\centering
\includegraphics[width=0.9\linewidth, trim=5cm 7.5cm 11cm 8.5cm, clip]{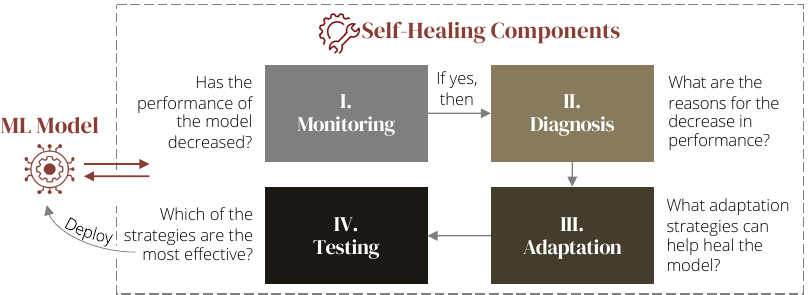}
    \caption{Our work introduces self-healing machine learning. A healing mechanism $\mathcal{H}$ interacts with a deployed model $f$. $\mathcal{H}$ contains four \textit{components}: monitoring, diagnosis, adaptation, and testing. The overall goal of SHML is to find optimal adaptation actions to maximize the predictive performance of a model $f$.}
    \label{fig:self-healing-ml}
    \vspace{-2mm}
    \rule{\linewidth}{.5pt}
\end{wrapfigure}

The practical implications of methods being \textit{reason-agnostic} are quite concerning. By not considering the causes for drop in performance, the corrective actions are, essentially, shots in the dark. In high-stakes applications like healthcare, finance, or policing, misguided adaptations can lead to real-world harms, such as inaccurate diagnoses, financial losses, or system failures. In some industries---such as healthcare--- this has resulted in \textit{avoiding} automated model retraining altogether \citep{vela2022temporal}.

We propose \textit{self-healing machine learning} (SHML) to overcome the limitations of reason-agnostic approaches. SHML equips ML models with the ability to diagnose the reasons for performance degradation and take targeted corrective actions. We define a \textit{self-healing system} as a tuple $\langle \mathcal{H}, f \rangle$, where $f$ is a black-box \textit{model} and $\mathcal{H}$ is a \textit{healing mechanism} that can \textit{modulate} the behavior of $f$. An example of $\mathcal{H}$ modulating $f$ is by deciding what data to use to re-train $f$, as illustrated in our introductory example. $\mathcal{H}$ contains four \textit{components}: \textit{monitoring, diagnosis, adaptation}, and \textit{testing} (Fig. \ref{fig:self-healing-ml}). The goal of $\mathcal{H}$ is to decide what actions to take in response to model degradation which are chosen based on an \textit{adaptation policy} that provides a mapping from diagnoses to actions. We therefore formalize the goal of $\mathcal{H}$ as finding optimal actions under the shifted data generating process (DGP) which are sampled from an adaptation policy conditioned on a diagnosis (Sec. \ref{sec:four_stages}). In our introductory example, the optimal action is taking action $a_1$ (Fig. \ref{fig:adaptation_options}).  Building upon these insights, we propose the first self-healing ML algorithm, $\mathcal{H}$-LLM (Sec. \ref{sec:hllm}) which generates diagnoses behind model degradation and suggests diagnosis-based adaptation strategies. 

\textbf{Significance beyond technical contributions}. By enabling systems to autonomously diagnose and adapt to model degradation, we lay the groundwork for a new class of self-healing algorithms. We envision self-healing systems as crucial for high-stakes applications where optimal model performance is essential. We also believe this work has immediate practical relevance in high-stakes areas where model degradation is common, such as medicine \citep{vela2022temporal, young2022empirical, adam2020hidden, sahiner2023data, beyene2015improved, huggard2020detecting}, fraud detection \citep{mai2021customs} or finance \citep{vzliobaite2016overview}.

\vspace{-3mm}
\begin{customblockquote}
\textbf{Contributions}. \textbf{\textcolor{pastelgreen}{\textcircled{1}}} We identify fundamental limitations in existing \textit{reason-agnostic} adaptation approaches that do not consider the reason for model degradation (Sec. \ref{sec:limitations}). \textbf{\textcolor{pastelgreen}{\textcircled{2}}} We introduce the paradigm of \textit{self-healing machine learning} and establish a theoretical foundation for finding adaptation actions with diagnosis-guided action sampling (Sec. \ref{sec:four_stages} - \ref{sec:reason_diagnosis}). \textbf{\textcolor{pastelgreen}{\textcircled{3}}} We propose the first self-healing ML algorithm $\mathcal{H}$-LLM which reasons about the causes of degradation and modulates the behavior of ML models (Sec. \ref{sec:hllm}). \textbf{\textcolor{pastelgreen}{\textcircled{4}}} We demonstrate the viability of SHML by studying \textit{why} and \textit{when} it works (Sec. \ref{sec:experiments}).
\end{customblockquote}
\vspace{-4mm}

\section{Related work}
\label{sec:related_work}
SHML is most closely related to \textit{concept drift adaptation} or \textit{specialized drift handling} methods. We provide an extended discussion on related work within each component in Appendix \ref{sec:extended_related_work}.

\textbf{Concept drift adaptation.} The field of concept drift adaptation focuses on developing algorithms to maintain the performance of machine learning models in changing environments. Such algorithms are predominantly proposed within the setting of tabular data. Most common adaptation techniques are re-training models on new data \citep{lu2018learning,raza2014adaptive,rabanser2019failing,bayram2022concept, gama2014survey,klinkenberg1998adaptive,gama2004learning,baena2006early, agrahari2022concept}, re-using stored models \citep{gonccalves2013rcd,lu2018learning,bach2008paired, alippi2013just,gama2014recurrent} or obtaining new data altogether \citep{krawczyk2018combining, astorga2024poca}. These approaches can be implicit, like continuous retraining, or explicit, based on drift detection in data or model error \citep{agrahari2022concept,gama2004learning,halstead2022analyzing}. Because these approaches do not explicitly incorporate the reason for model degradation, we refer to them as \textit{reason agnostic}. SHML diverges from common approaches by introducing the core idea of \textit{diagnosing the root cause} to search for optimal adaptation actions. 

\textbf{Specialized drift handling.} Techniques have also been developed to adapt in the presence of various drift scenarios, such as sliding windows \citep{widmer1996learning} or  \textit{adaptive classifiers} \citep{farid2013adaptive, hulten2001mining, dries2009adaptive, museba2021recurrent, yu2024online, yu2022learn} which ``repair'' concept drift \citep{halstead2022analyzing}. However, these methods lack an explicit diagnosis mechanism and operate under fixed decision rules, which do not incorporate the root causes of degradation into the adaptation strategy. Similarly, works that aim to understand distribution shifts \citep{liu2024need, masegosa2020analyzing, kulinski2023towards} or attribute shifts to specific variables through causal mechanisms \citep{budhathoki2021did,zhang2022did} provide valuable insights but do not offer a comprehensive framework for adaptation. We note that some work could be in principle a part of some SHML system components (discussed in Appendix \ref{sec:extended_related_work}). 


\section{Self-healing Machine Learning}
This section introduces self-healing machine learning and its four components. We present the problem setting (Sec. \ref{sec:degradation}), explain current limitations (Sec. \ref{sec:limitations}), and outline the four stages of SHML (Sec. \ref{sec:four_stages}). {\footnotesize \textbf{\textcolor{pastelblue}{NOTE}}}: Table \ref{tab:shml_components} serves as a guide for navigating the paper.

\vspace{-3mm}
\begin{table}[ht]
\renewcommand{\arraystretch}{0.7}
\tiny
\centering
\begin{tabular}{
    >{\centering\arraybackslash}p{1.3cm} |
    >{\centering\arraybackslash}p{0.7cm}
    >{\centering\arraybackslash}p{2.7cm}
    >{\centering\arraybackslash}p{1.3cm}
    >{\centering\arraybackslash}p{5.9cm}
}\toprule
\textbf{Component} & \textbf{Definition} & \textbf{Methodological contribution} & \textbf{Experimental contribution} & \textbf{Main practical implications} \\
\midrule
\multirow{2}{*}{\textbf{Monitoring}} & Eq. \ref{eq:monitoring} &  \textcolor{gray}{n/a} & Sec. \ref{sec:monitoring_exp} & More robust models against false positive drift detection (Sec.\ref{sec:monitoring_exp})  \\
\addlinespace
\multirow{2}{*}{\textbf{Diagnosis}} & Eq. \ref{eq:diagnosis} & Def. \ref{def:certainty}, Def. \ref{def:optimaldiagnosis}, Prop. \ref{prop:optimal_diagnosis} & Sec. \ref{sec:diagnosis_exp} & Established framework to reason about \textit{why} models degrade (Sec. \ref{sec:reason_diagnosis})\\ 
\addlinespace
\multirow{2}{*}{\textbf{Adaptation}} & Eq. \ref{eq:adaptation_policy} & Asmp. \ref{asmp:independent} & Sec. \ref{sec:adaptation_exp} & Targeted adaptation by identifying the root cause (Sec. \ref{sec:reason_diagnosis}) \\
\addlinespace
\textbf{Testing} & Eq. \ref{eq:testing} & Def. \ref{def:backtesting_window} & Sec. \ref{sec:testing_exp} & Principled framework to evaluate actions (Sec. \ref{sec:testing_exp})  \\
\addlinespace
\hline
\multirow{2}{*}{\textbf{Self-healing ML}} & Eq. \ref{eq:optimal_policy}, Sec. \ref{sec:four_stages} & Sec. \ref{sec:four_stages}, Sec. \ref{sec:challenges_of_shml}, Sec. \ref{sec:benefit_llms}, Sec. \ref{sec:design_hllm}  & Sec. \ref{sec:experimental_comparison} & New self-healing paradigm (Sec. \ref{sec:four_stages}) addressing prior limitations (Sec. \ref{sec:limitations}); first self-healing system (Sec. \ref{sec:hllm}). \\
\bottomrule
\end{tabular}
\caption{A summary table of self-healing machine learning and its four stages, providing links to relevant sections and serving as a navigation guide for the paper.}
\label{tab:shml_components}
\end{table}
\vspace*{-4mm}

\vspace{-2mm}
\subsection{Model degradation over time}
\label{sec:degradation}

\textbf{Preliminaries}. Let $\mathcal{X}$ and $\mathcal{Y}$ denote the input and output spaces, respectively, and let $\mathcal{P}_t$ denote the data distribution over $\mathcal{X} \times \mathcal{Y}$ at time step $t \in [T]$. At each $t$, we observe a batch of data $\mathcal{D}_t = \{(\mathbf{x}_t^{(i)}, y_t^{(i)})\}_{i=1}^{n_t} \sim \mathcal{P}_t^{n_t}$, where $n_t = 1$ in the streaming setting and $n_t > 1$ in the batch setting. We will drop the superscripts where clear from context.

The goal is to learn a sequence of functions $\{f_t \in \mathcal{F}\}_{t=1}^T$ that minimize the cumulative risk:
\begin{equation}
\label{eq:seq_risk_min}
R(f_1, \ldots, f_T) \coloneqq \sum_{t=1}^T \mathbb{E}_{(\mathbf{x}, y) \sim \mathcal{P}_t}[\ell(f_t(\mathbf{x}), y)]
\end{equation}
where $\mathcal{F}$ is a function class and $\ell: \mathcal{Y} \times \mathcal{Y} \to \mathbb{R}_{\geq 0}$ is a loss function.

In the time-invariant setting where $\mathcal{P}_t = \mathcal{P}$ $\forall t \in [T]$, the goal reduces to learning a single function $f^* \in \mathcal{F}$ that minimizes the risk $R(f) = \mathbb{E}_\mathcal{P}[\ell(f(\mathbf{x}), y)]$. When $\mathcal{P}$ is unknown, $f^*$ is often approximated by minimizing the empirical risk on a training set $\{(\mathbf{x}^{(i)}, y^{(i)})\}_{i=1}^n \sim \mathcal{P}^n$. However, when $\mathcal{P}_t$ evolves over time, the optimal predictor $f_t^* \in \argmin_{f \in \mathcal{F}} \mathbb{E}_{\mathcal{P}_t}[\ell(f(\mathbf{x}), y)]$ changes across time steps\footnote{Three primary mechanisms through which $\mathcal{P}_t$ varies are covariate shift: $\mathcal{P}_t(\mathbf{x}) \neq \mathcal{P}_{t+1}(\mathbf{x}) \land \mathcal{P}_t(y|\mathbf{x}) = \mathcal{P}_{t+1}(y|\mathbf{x})$, label shift: $\mathcal{P}_t(y) \neq \mathcal{P}_{t+1}(y) \land \mathcal{P}_t(\mathbf{x}|y) = \mathcal{P}_{t+1}(\mathbf{x}|y)$, and concept drift: $\mathcal{P}_t(y|\mathbf{x}) \neq \mathcal{P}_{t+1}(y|\mathbf{x})$.}. Failing to adapt $f_t$ in this time-varying setting leads to model degradation, as the learned function becomes increasingly suboptimal w.r.t. the current data distribution.

\vspace{-3mm}
\subsection{Limitations of existing approaches in adapting to changing environments}
\label{sec:limitations}

Maintaining stable model performance in the presence of a changing environment poses unique challenges. As the optimal predictor $f_t^*$ evolves over time, the estimated predictor should also adapt. Ideally, we could obtain a large batch of data $\mathcal{D}_{t+1} = \{(\mathbf{x}_{t+1}^{(i)}, y_{t+1}^{(i)})\}_{i=1}^{n_{t+1}}$ and minimize the empirical risk over this dataset. However, this is often impractical due to constraints such as (i) ground-truth labels not being immediately available \citep{xu2021concept}; (ii) the streaming setting, where each new batch contains only one data point \citep{wang2015concept}; (iii) gradual shifts, where past data remains relevant \citep{webb2016characterizing}; or (iv) the presence of corrupted data in new batches \citep{hernandez1998real}. 

To address this, the research community has developed specialized methods determining the appropriate corrective actions in such drifts. As discussed in Sec. \ref{sec:related_work}, these methods primarily execute pre-defined actions upon detecting a change, such as model retraining \citep{lu2018learning,raza2014adaptive,rabanser2019failing,bayram2022concept,agrahari2022concept}, re-using old models \citep{gonccalves2013rcd,alippi2013just,lu2018learning,bach2008paired}, or other more specialized methods \citep{gonccalves2013rcd,alippi2013just,gama2014recurrent,widmer1996learning,hulten2001mining,wang2022noise}. However, such methods are \textit{reason-agnostic}, disregarding valuable information that inform better adaptation actions. Consider an illustrative example: suppose a batch of new data arrives, but due to a sensor malfunction \citep{pirinen2004detection}, 80\% of the labels become corrupted and are independent of the input for that batch only. Naively retraining the model on this noisy batch would degrade its performance. This is because this strategy implicitly assumes:

\vspace{-3mm}
\begin{equation}
\mathbb{E}_{(\mathbf{x}, y) \sim \mathcal{P}_{t+1}}[\ell(f_{t+1}(\mathbf{x}), y)] < \mathbb{E}_{(\mathbf{x}, y) \sim \mathcal{P}_{t+1}}[\ell(f_t(\mathbf{x}), y)].
\end{equation}
\vspace{-3mm}

where $f_{t+1}$ is a model trained on $\mathcal{D}_{t+1}$. Relying on this assumption results in \textit{worse performance} than doing nothing. Similarly, re-using old models assumes that the past data distribution is still relevant:

\vspace{-3mm}
\begin{equation}
\mathbb{E}_{(\mathbf{x}, y) \sim \mathcal{P}_{t+1}}[\ell(f_{t-k}(\mathbf{x}), y)] < \mathbb{E}_{(\mathbf{x}, y) \sim \mathcal{P}_{t+1}}[\ell(f_t(\mathbf{x}), y)],
\end{equation}
\vspace{-3mm}

for some $k > 0$. Similarly, this might result in suboptimal performance due to the nature of the shift. Each adaptation method discussed in Sec. \ref{sec:related_work} has such implicit assumptions about the model or DGP.

\textit{By not taking into account the reason for the model degradation (such as corrupted data), the adaptation strategy is defaulting to suboptimal corrective actions}. While any adaptation strategy inherently involves some assumptions about the relationship between the predictor and the data, we would like to prioritize making \textit{informed assumptions}. As we discuss in Sec. \ref{sec:four_stages}, a key source of such information is diagnosing \textit{why} the model's performance has dropped.


To address the \textit{reason-agnostic} nature of such adaptation methods, we propose a paradigm shift called \textit{self-healing machine learning} (SHML), where deployed models autonomously diagnose the reason for degradation and take \textit{diagnosis-guided} corrective actions.

\vspace{-3mm}
\begin{customblockquote}
\textbf{Takeaway}. Existing adaptation methods make implicit, pre-defined assumptions about the nature of model degradation. Neglecting its \textit{reason} can lead to poorly chosen actions.
\end{customblockquote}
\vspace{-3mm}

\subsection{The four stages of self-healing machine learning}
\label{sec:four_stages}
Self-healing machine learning is a framework for autonomously detecting, diagnosing, and correcting performance degradation in deployed ML models. It aims to maintain model performance in changing environments without constant human intervention. The motto of self-healing ML is ``understanding your problem is half the solution'' (and the most important half). A SHML system is defined by a tuple $\langle \mathcal{H}, f \rangle$, where $f: \mathcal{X} \rightarrow \mathcal{Y}$ is the deployed machine learning model we aim to heal (i.e., the function that makes predictions on input data), and $\mathcal{H}$ is a healing mechanism that interacts with the environment and acts upon the model $f$ by proposing and implementing actions, such as selecting when to retrain a model, what data to use or how to change the input data before making predictions. Thus, $\mathcal{H}$ can \textit{modulate} the behavior of the deployed model $f$. 

\vspace{1mm}
\begin{greycustomblock}
\textbf{Self-Healing Machine Learning in a nutshell.} 

\vspace{-2.5mm}
\rule{\linewidth}{.5pt}

\vspace{1mm}
Self-healing ML contains four components: monitoring, diagnosis, adaptation, and testing. After these steps, the best action is implemented on the ML model, illustrated in Fig. 2.
\vspace{1mm}

\textbf{I. Monitoring}. The first step is the detection of degradation, potentially due to a shift in the data distribution. We formalize this as a \textit{monitoring} component $\mathcal{H}_M$ that takes as input the sequence of data batches $\{\mathcal{D}_i\}_{i=1}^t$, up to time $t$, and outputs $s_t \in [0,1]$, indicating the likelihood of model degradation. Formally,
\begin{equation}
\label{eq:monitoring}
\mathcal{H}_M: (\mathcal{X} \times \mathcal{Y})^* \rightarrow [0,1],
\end{equation}
where higher values of $s_t$ indicate a greater likelihood of a shift. 

\vspace{1mm}
\textbf{II. Diagnosis}. The diagnosis component $\mathcal{H}_D$ detects the reason of degradation. It takes data batches $\{\mathcal{D}_i\}_{i=1}^t$, up to time $t$, along with any available contextual information $c \in \mathcal{C}$ (e.g. background knowledge), and outputs a distribution $\zeta \in \Delta(\mathcal{Z})$ over a space of possible reasons $\mathcal{Z}$:
\begin{equation}
\label{eq:diagnosis}
\mathcal{H}_D: (\mathcal{X} \times \mathcal{Y} \times \mathcal{C})^* \rightarrow \Delta(\mathcal{Z}).
\end{equation}

$\mathcal{Z}$ represents the finite space of possible reasons of the shift and $\zeta$ is a stochastic vector.

\vspace{1mm}
\textbf{III. Adaptation}. The adaptation component is a policy $\pi$ that outputs a distribution over actions. Given a diagnosis vector $\zeta$, actions $a \in \mathcal{A}$ are selected from a finite space $\mathcal{A}$ by:

\begin{equation}
\label{eq:adaptation_policy}
a \sim \pi(\cdot|\zeta), \text{ where } \pi: \Delta(\mathcal{Z}) \rightarrow \Delta(\mathcal{A}).
\end{equation}

Each action $a$ modifies $f$. We denote the model used at time $t$, selected by action $a$, as $f^t_a$.

\vspace{1mm}
\textbf{IV. Testing}. The \textit{testing} component $\mathcal{H}_T$ evaluates each action $a \in \mathcal{A}$ on a relevant distribution and outputs a performance measure:

\begin{equation}
\label{eq:testing}
\mathcal{H}_T: \mathcal{A} \times \mathcal{P} \rightarrow \mathbb{R}.
\end{equation}

\textbf{Objective}. The goal of self-healing ML is to select the optimal action $a^*$ that minimizes the expected loss $\mathbb{E}[\ell(f_{t}^a(\mathbf{x}), y)]$ on the data distribution $\mathcal{P}_{t}$:
\begin{equation}
\label{eq:optimal_policy} 
a^* = \argmin_{a \in \mathcal{A}} \mathbb{E}_{(\mathbf{x}, y) \sim \mathcal{P}_{t}}[\ell(f_{t}^{a}(\mathbf{x}), y)],
\end{equation}
where $\ell: \mathcal{Y} \times \mathcal{Y} \rightarrow \mathbb{R}_{\ge 0}$ is a loss function, and $f_{t}^a$ denotes the model selected by action $a$ to be used at time $t$. The action $a$ is selected according to the adaptation policy $\pi(\cdot|\zeta)$, which maps the diagnosis vector $\zeta$ (a distribution over possible reasons for degradation) to a distribution over actions.
\end{greycustomblock}

\begin{wrapfigure}[11]{r}{0.27\textwidth}
    \vspace{-5mm}
    \centering
    \includegraphics[width=\linewidth]{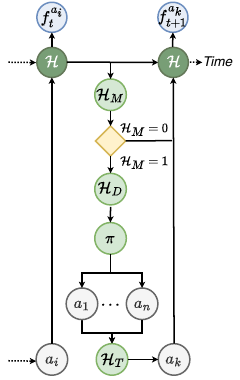}
    \vspace{-2mm}
    \caption{\footnotesize The self-healing mechanism $\mathcal{H}$ modulates the function $f$ via four stages. The chosen adaptation action $a$ is implemented onto the function $f$ at the next time step.}
    \vspace{-2mm}
     \rule{\linewidth}{.5pt}
     \label{fig:self-healing-design}
\end{wrapfigure}

Suppose the following motivating example to guide the notation above.

\begin{greycustomblock}
\textbf{Illustrative example}. Consider a deployed ML model $f_t$ for predicting diabetes. The monitoring component $\mathcal{H}_M$ detects a significant drop in performance, with $s_t = 0.95$ (Eq. \ref{eq:monitoring}). The diagnosis component $\mathcal{H}_D$ outputs three most likely reasons $z_1, z_2, z_3 \in \mathcal{Z}$, with $\zeta(z_1) = 0.95$ for data quality issues, $\zeta(z_2) = 0.03$ for concept drift, and $\zeta(z_3) = 0.02$ for model overfitting (Eq. \ref{eq:diagnosis}). Based on, $\zeta$, the adaptation policy $\pi$ samples two actions $a_1, a_2 \sim \pi(\cdot|\zeta)$ (Eq. \ref{eq:adaptation_policy}): $a_1$: remove detected biologically implausible values (e.g. Age > 200) and retrain $f_t$; $a_2$: include interaction terms between features to capture non-linearities. The testing component $\mathcal{H}_T$ evaluates the adapted models $f_t^{a_1}$ and $f_t^{a_2}$ on new incoming data (Eq. \ref{eq:testing}) and selects $a_1$ due to lower estimated loss.
\end{greycustomblock}

The primary insight of SHML is that the action $a \sim \pi(\cdot|\zeta)$ should be based on the diagnosis $\zeta$, which is a distribution over possible reasons $z \in \mathcal{Z}$ for model degradation. In contrast, standard approaches (Sec. \ref{sec:limitations}) assume that $\pi \indep \zeta$. SHML formalizes this as an optimization problem over a space of adaptation actions---we aim to find the optimal actions to take each time the model $f$ degrades, with these actions chosen by the policy $\pi$ of the self-healing system $\mathcal{H}$ (Fig. \ref{fig:self-healing-design}). Different policies $\pi_1$ and $\pi_2$ might propose different actions in response to the same performance drop. While the diagnosis $\zeta$ informs the policy, we do not assume it is necessarily useful. Since $\zeta \in \Delta(\mathcal{Z})$ is a probability distribution, it can encode no knowledge by being uniform over the diagnosis space: $\zeta(z) = \frac{1}{|\mathcal{Z}|}, \forall z \in \mathcal{Z}.$ These components and their interactions between two time points $t$ and $t+1$ are shown in Fig. \ref{fig:self-healing-design}.

The effectiveness of the adaptation actions depends on the diagnosis, i.e. how well can we identify the root cause. Therefore, we turn to the \textit{diagnosis} component next. 

\vspace{-2mm}
\begin{customblockquote}
\textbf{Takeaway}. SHML is a framework which selects actions based on the reason for model degradation. It contains four stages: monitoring, diagnosis, adaptation, and testing.
\end{customblockquote}
\vspace{-3mm}

\section{An analysis of the properties of self-healing diagnosis}
\label{sec:reason_diagnosis}
Self-healing systems have the unique property of having a diagnosis stage. But what constitutes a good diagnosis? In this section, we analyze the properties of self-healing diagnosis and establish its connection to the performance of adaptation actions.

To effectively use diagnosis information to guide the search for adaptation actions, we require a way to quantify the usefulness of a diagnosis. We propose three desirable properties for such a measure: (i) \textbf{concentration}: it should favor diagnoses that provide more information, i.e. assign higher probabilities to fewer possible reasons; (ii) \textbf{sensitivity}: it should be sensitive to changes in the diagnosis distribution, such that small changes in probabilities would result in small changes in the measure; (iii) \textbf{maximum uncertainty}: it should reach its maximum value when the diagnosis distribution is uniform, indicating no knowledge about the reason for degradation. Therefore, we propose using the entropy of the diagnosis vector as a useful proxy for quality which satisfies all three properties. Because entropy measures uncertainty, we refer to this as the \textit{certainty of the diagnosis}.

\begin{definition}[Certainty of the Diagnosis]
\label{def:certainty}
Let $\mathcal{Z}$ be the finite space of possible reasons for degradation and $\Delta(\mathcal{Z})$ be the diagnosis space. The certainty of a diagnosis $\zeta \in \Delta(\mathcal{Z})$ in a self-healing machine learning system is measured by its entropy $\mathbb{H}(\zeta)$, defined as:
\begin{equation}
    \mathbb{H}(\zeta) = -\sum_{z \in \mathcal{Z}} \zeta(z) \log \zeta(z),
\end{equation}
where $\zeta(z)$ is the probability of reason $z$ under the distribution $\zeta$.
\end{definition}

The link between diagnosis quality and adaptation performance highlights the importance of obtaining informative diagnoses in SHML. Building upon these concepts, we define the optimal diagnosis:

\begin{definition}[Optimal Diagnosis]
\label{def:optimaldiagnosis}
The optimal diagnosis $\zeta^*$ is defined as:
\begin{equation}
\zeta^* = \mathop{\arg\min}\limits_{\zeta \in \Delta(\mathcal{Z})} \mathbb{E}_{a \sim \pi(\cdot | \zeta)}[R(a)]
\end{equation}
where $\Delta(\mathcal{Z})$ is the diagnosis space, $\pi(\cdot|\zeta)$ is the conditional distribution over actions induced by diagnosis vector $\zeta$, and $R(a)$ denotes the risk of $f_t^a$ associated with action $a \in \mathcal{A}$.
\end{definition}

This formalizes the intuition that the best diagnosis is the one that leads to the best adaptation actions, on average. To characterize the properties of this optimal diagnosis, we introduce an assumption about the structure of the adaptation policy:

\begin{assumption}[Independent actions]
\label{asmp:independent}
We assume that $\pi(\cdot | \zeta)$ has a hierarchical structure. First, a reason $z\in\mathcal{Z}$ is sampled according to the diagnosis $\zeta$: $z \sim \zeta$. Then, an action is sampled conditioned on this reason, $a \sim \pi(\cdot|z^{\dagger})$, where $z^{\dagger}\in \Delta(\mathcal{Z})$ such that $z^{\dagger}(z)=1$. $\pi(\cdot | \zeta)$ can then be described as the following mixture.
\begin{equation}
    \pi(a|\zeta) = \sum_{z\in\mathcal{Z}} \pi(a|z^{\dagger})\zeta(z)
\end{equation}
$\zeta$ are the mixture weights and $\{\pi(\cdot|z^{\dagger}) \ | \ z\in\mathcal{Z} \}$ are the mixture components.
\end{assumption}

Under this hierarchical structure, we can prove a useful property of the optimal diagnosis:

\begin{proposition}
\label{prop:optimal_diagnosis_zero_entropy}
Under Assumption \ref{asmp:independent}, the optimal diagnosis $\zeta^*$ has a zero entropy, i.e., $\mathbb{H}(\zeta^*) = 0$.
\end{proposition}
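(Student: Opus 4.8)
The plan is to show that, under Assumption~\ref{asmp:independent}, the objective $\mathbb{E}_{a \sim \pi(\cdot|\zeta)}[R(a)]$ is a \emph{linear} functional of $\zeta$ on the probability simplex $\Delta(\mathcal{Z})$, so that its minimum is attained at an extreme point of the simplex, and every such extreme point is a degenerate (point-mass) distribution, which has zero entropy. The main technical content is recognizing the linearity; the only delicate point is handling ties.

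First I would substitute the mixture form of $\pi$ from Assumption~\ref{asmp:independent} into the objective and exchange the two finite sums over $a$ and $z$:
\[
\mathbb{E}_{a \sim \pi(\cdot|\zeta)}[R(a)]
= \sum_{a \in \mathcal{A}} R(a)\,\pi(a|\zeta)
= \sum_{z \in \mathcal{Z}} \zeta(z) \underbrace{\sum_{a \in \mathcal{A}} R(a)\,\pi(a|z^{\dagger})}_{=: \, g(z)}
= \sum_{z \in \mathcal{Z}} \zeta(z)\, g(z),
\]
where $g(z) = \mathbb{E}_{a \sim \pi(\cdot|z^{\dagger})}[R(a)]$ is the expected risk under the ``certain'' diagnosis that places all mass on reason $z$. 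The key observation is that the coefficients $g(z)$ do not depend on $\zeta$, so the objective is an inner product between the weight vector $\zeta$ and the fixed vector $(g(z))_{z\in\mathcal{Z}}$, i.e. linear in $\zeta$.

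Then I would invoke the elementary fact that a linear function on the compact convex polytope $\Delta(\mathcal{Z})$ attains its minimum at an extreme point (the fundamental theorem of linear programming). The extreme points of $\Delta(\mathcal{Z})$ are precisely the degenerate distributions $z^{\dagger}$, $z \in \mathcal{Z}$. Concretely, taking $z^\star \in \argmin_{z \in \mathcal{Z}} g(z)$ gives the bound $\sum_{z} \zeta(z) g(z) \ge \min_{z} g(z)$ for every $\zeta$, with equality when $\zeta = (z^\star)^{\dagger}$; hence $\zeta^\star = (z^\star)^{\dagger}$ is a minimizer. Its entropy is $\mathbb{H}\big((z^\star)^{\dagger}\big) = -\sum_{z} (z^\star)^{\dagger}(z)\log (z^\star)^{\dagger}(z) = -1\cdot\log 1 = 0$ (with the usual convention $0\log 0 = 0$), which establishes $\mathbb{H}(\zeta^\star) = 0$.

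The step I expect to require the most care is non-uniqueness: if several reasons tie for the minimal value of $g$, then any mixture supported on those reasons is also a minimizer, and such mixtures have strictly positive entropy. I would therefore phrase the conclusion as asserting the existence of a zero-entropy optimal diagnosis---equivalently, that the minimum of the objective is always attained at a zero-entropy $\zeta$---which is the content of Proposition~\ref{prop:optimal_diagnosis_zero_entropy}. Under the generic assumption that $g$ has a unique minimizing reason, the minimizer $\zeta^\star$ is unique and the zero-entropy conclusion holds verbatim.
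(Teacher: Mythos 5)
Your proof is correct and follows essentially the same route as the paper's: substitute the mixture form from Assumption~\ref{asmp:independent}, exchange the finite sums to obtain the linear objective $\sum_{z\in\mathcal{Z}} \zeta(z)\, g(z)$ with $g(z) = \mathbb{E}_{a \sim \pi(\cdot|z^{\dagger})}[R(a)]$, and lower-bound it by $\min_{z} g(z)$, attained at the point mass $(z^{\star})^{\dagger}$, which has zero entropy. Your care about ties is actually a refinement the paper's proof glosses over --- when several reasons attain the minimum of $g$, mixtures over them are also optimal and have positive entropy, so stating the conclusion as the existence of a zero-entropy minimizer, as you do, is the more precise formulation.
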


Proof in Appendix \ref{sec:assumption_upd}. To ensure that the optimal diagnosis is well-defined, we also prove its existence under mild assumptions:

\begin{proposition}[Existence of Optimal Diagnosis]
\label{prop:optimal_diagnosis}
Suppose that the action space $\mathcal{A}$ is a compact subspace of $\mathbb{R}^n$ and $R$ is continuous. Then there exists at least one optimal diagnosis $\zeta^*$.
\end{proposition}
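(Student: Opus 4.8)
The plan is to treat this as a direct application of the Weierstrass extreme value theorem: a continuous real-valued function on a nonempty compact set attains its minimum. Concretely, the objective in Definition \ref{def:optimaldiagnosis} is $J(\zeta) := \mathbb{E}_{a\sim\pi(\cdot|\zeta)}[R(a)]$, minimized over $\zeta \in \Delta(\mathcal{Z})$, so I need just two ingredients: (i) the feasible set $\Delta(\mathcal{Z})$ is compact and nonempty, and (ii) the map $\zeta \mapsto J(\zeta)$ is continuous.

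First I would dispatch the domain. Since $\mathcal{Z}$ is finite, $\Delta(\mathcal{Z})$ is the standard probability simplex $\{\zeta \in \mathbb{R}^{|\mathcal{Z}|}_{\ge 0} : \sum_{z} \zeta(z) = 1\}$, which is closed, bounded, and nonempty, hence compact. So (i) is immediate and requires no hypotheses. Next I would establish continuity of $J$, which is the real content. Here I use that $R$ is continuous on the compact set $\mathcal{A}$, so $R$ is bounded: $|R(a)| \le M$ for some finite $M$ and all $a \in \mathcal{A}$. Invoking the mixture structure of Assumption \ref{asmp:independent}, namely $\pi(a|\zeta) = \sum_{z} \pi(a|z^{\dagger})\zeta(z)$, and exchanging the finite sum with the expectation (justified by $|R| \le M$ and $|\mathcal{Z}| < \infty$), the objective collapses to a finite linear form
\[
J(\zeta) = \sum_{z\in\mathcal{Z}} \zeta(z)\, r_z, \qquad r_z := \mathbb{E}_{a\sim\pi(\cdot|z^{\dagger})}[R(a)],
\]
where each coefficient $r_z$ is a finite constant precisely because $R$ is bounded on the compact $\mathcal{A}$. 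A finite linear function is continuous, giving (ii).

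With both ingredients in hand, Weierstrass yields at least one minimizer $\zeta^* \in \Delta(\mathcal{Z})$, establishing existence and completing the proof.

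The main obstacle is continuity of $J$, which is \emph{not} automatic from ``$R$ continuous and $\mathcal{A}$ compact'' alone: it additionally requires that $\zeta \mapsto \pi(\cdot|\zeta)$ be continuous. The cleanest way to secure this is through Assumption \ref{asmp:independent}, under which the dependence is in fact linear, with the compactness and continuity hypotheses serving exactly to guarantee that each mixture-component risk $r_z$ is finite and well-defined. A side benefit worth noting is that once $J$ is linear over the simplex, its minimum is attained at a vertex, which is consistent with the zero-entropy characterization of $\zeta^*$ in Proposition \ref{prop:optimal_diagnosis_zero_entropy}.
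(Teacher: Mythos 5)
Your proof is correct and follows the same skeleton as the paper's: compactness of the simplex $\Delta(\mathcal{Z})$ plus continuity of $\zeta \mapsto \mathbb{E}_{a \sim \pi(\cdot|\zeta)}[R(a)]$ plus the extreme value theorem. The difference is in how the continuity step is handled, and here your version is actually the stronger one. The paper's proof asserts that the expected risk is continuous in $\zeta$ ``by the continuity of $R$ and the compactness of $\mathcal{A}$,'' which, as you correctly point out, does not follow from those hypotheses alone: without some regularity of the map $\zeta \mapsto \pi(\cdot|\zeta)$, the objective can be discontinuous no matter how nice $R$ and $\mathcal{A}$ are. You close this gap by invoking Assumption \ref{asmp:independent}, under which $\pi(a|\zeta) = \sum_{z} \pi(a|z^{\dagger})\zeta(z)$ makes the objective a finite linear form $\sum_z \zeta(z) r_z$ with coefficients $r_z$ finite because $R$ is bounded on the compact $\mathcal{A}$; linearity gives continuity for free, and as a bonus the minimum sits at a vertex of the simplex, consistent with Proposition \ref{prop:optimal_diagnosis_zero_entropy}. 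The one thing to flag in your write-up is that Assumption \ref{asmp:independent} is not listed among the hypotheses of Proposition \ref{prop:optimal_diagnosis}, so your proof proves a slightly different (conditional) statement than the one as stated; but since the proposition as stated is not provable without \emph{some} such assumption on $\pi$, importing the mixture structure is the right repair, and it is the same structure the paper itself relies on in the companion proof in Appendix \ref{sec:assumption_upd}.
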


\begin{proof}
The expected risk $\mathbb{E}_{a \sim \pi(\cdot | \zeta)}[R(a)]$ is a continuous function of $\zeta$ by the continuity of $R$ and the compactness of $\mathcal{A}$. Since the diagnosis space $\Delta(\mathcal{Z})$ is also compact (being a probability simplex), the extreme value theorem guarantees the existence of a minimizer $\zeta^*$.
\end{proof}

\vspace{-4mm}
\begin{customblockquote}
\textbf{Takeaway}. The existence of an optimal diagnosis establishes a foundation for designing algorithms that can accurately approximate it in practice. By identifying the underlying reasons for performance degradation, a high-quality diagnosis enables a self-healing system to take the most effective adaptation actions.
\end{customblockquote}

\vspace{-3mm}
\section{Building self-healing systems: $\mathcal{H}$-LLM}
\label{sec:hllm}

This section outlines the challenges of building SHML systems (Sec. \ref{sec:challenges_of_shml}), describes how LLMs can address these challenges (Sec. \ref{sec:benefit_llms}), and introduces the first self-healing system, $\mathcal{H}$-LLM (Sec. \ref{sec:design_hllm}).

\vspace{-2mm}
\subsection{Unique challenges of building self-healing systems}
\label{sec:challenges_of_shml}

Implementing SHML systems (Sec. \ref{sec:four_stages}) poses unique challenges in diagnosis and adaptation.

\textbf{Challenges in diagnosis}. Discovering the reasons for model degradation poses significant practical challenges because: (i) the space of possible reasons $\mathcal{Z}$ is often poorly defined or intractable to specify exhaustively in real-world scenarios; and (ii) assigning well-calibrated probabilities to reasons for model degradation is difficult due to both the epistemic and aleatoric uncertainty that exists in real-world environments. This makes it difficult to approximate the optimal diagnosis (Def. \ref{def:optimaldiagnosis}).

\textbf{Challenges in adaptation}. The adaptation policy $\pi$ (Eq. \ref{eq:adaptation_policy}) requires selecting optimal adaptation actions $a$ based on the diagnosis $\zeta$. This is challenging because (i) it requires reasoning about how actions interact with diagnoses; and (ii) the space of adaptation actions may be extremely large in practice, making it difficult to find the optimal action (Eq. \ref{eq:optimal_policy}).

\vspace{-2mm}
\subsection{Language models to empower self-healing}
\label{sec:benefit_llms}
We posit that LLMs have the potential to satisfy many of the required properties of self-healing components because of the following capabilities: (i) \textbf{Hypothesis proposers}. LLMs are known to be ``phenomenal hypotheses proposers'' \citep{qiu2023phenomenal} which are required to hypothesizing diagnoses of ML model performance degradation; (ii) \textbf{Contextual understanding}. LLMs have been pretrained with a vast corpus of information and hence have extensive prior knowledge around different contexts and settings \citep{singhal2023large, chowdhery2022palm}; (iii) \textbf{Language model agents}. Language models can work as agents within a larger system \citep{wang2024survey, xi2023rise} which is required to actively interact with a deployed model, trigger and implement changes. We therefore see LLMs as capable proxies for different self-healing components.

\vspace{-2mm}
\subsection{Design of $\mathcal{H}$-LLM}
\label{sec:design_hllm}
\vspace{-1mm}
We instantiate the healing mechanism $\mathcal{H}$ with an LLM $l$, using its useful properties (Sec. \ref{sec:benefit_llms}) to address the practical challenges of designing SHML systems (Sec. \ref{sec:challenges_of_shml}). 

\begin{table*}[t]
\vspace{-3mm}
\tiny
\centering
\setlength{\tabcolsep}{3.1pt} 
\scalebox{0.925}{
\begin{tabular}{p{3cm}p{10cm}p{1.5cm}}
\toprule
Diagnosis & Evidence & Confidence (1-10) \\  
\midrule
There are outliers in the data & The maximum values for many variables in the new dataset are significantly higher than in the old dataset, suggesting the presence of outliers & 9 \\
Incorrect data transformations have been applied & The mean values for many variables in the new dataset are significantly higher than in the old dataset, suggesting that the data may have been incorrectly transformed & 8\\
There are data entry errors & The minimum value for Insulin in the new dataset is negative, which is not possible in a real-world context & 10 \\
\bottomrule
\end{tabular}}
\caption{Example diagnoses suggested by $\mathcal{H}$-LLM. The system proposes diagnoses and suggests evidence for the diagnosis. A post-hoc relative confidence score, constructed using the ``evidence'' column, helps to guide which diagnoses to pay most attention to while designing adaptation policies.}
\label{tab:diagnoses}
\vspace{-4mm}
\end{table*}

\vspace{1.5mm}
\begin{greycustomblock}
\textbf{$\mathcal{H}$-LLM in a nutshell.} 

\vspace{-2mm}
\rule{\linewidth}{.5pt}

\vspace{1mm}
$\mathcal{H}$-LLM is the first SHML algorithm that modulates the behavior of $f$ following Fig. \ref{fig:self-healing-design}.
\vspace{1mm}

\textbf{I. Monitoring}. We use statistical drift detection algorithms to monitor model degradation from $k$ previous time points \citep{wang2015concept, dries2009adaptive, gonccalves2014comparative}. Diagnosis is triggered if a shift is detected.

\vspace{2mm}

\textbf{II. Diagnosis}. Upon detection, we use a pre-defined prompt template to obtain information about the dataset before and after the diagnosis. The prompt template gives us numerical insights into how the dataset has changed and includes covariate information before and after the shift, together with other numerical details. We denote this prompt as an extractor function $\mathcal{E}: \mathcal{D}^* \rightarrow \mathcal{D}_c$ to obtain an information vector $\textbf{v}$. Using $\textbf{v}$ and a chain-of-thought (CoT) module with self-reflection, $\mathcal{H}$-LLM generates $k$ candidate reasons for degradation $\{\mathbf{z}_i\}_{i=1}^k \sim l(\cdot | \textbf{v})$ via Monte Carlo (MC) sampling with associated confidence scores. As before, this is obtained by following pre-defined prompt templates \textit{conditioned on} the obtained information (e.g. ``\textcolor{gray}{\textit{Suggest \{self.n\} possible reasons why the model might have failed on the basis of the issues presented}}''). These candidates form an empirical diagnosis vector $\hat{\zeta}$, approximating the optimal diagnosis $\zeta^*$. Table \ref{tab:diagnoses} illustrates diagnoses generated by $\mathcal{H}$-LLM.

\vspace{2mm}

\textbf{III. Adaptation}. Conditioned on the empirical diagnosis distribution $\hat{\zeta}$, $\mathcal{H}$-LLM generates $m$ candidate adaptation actions $\{a_j\}_{j=1}^m  \sim l(\cdot | \hat{\zeta})$ via CoT-based MC sampling. This approximates sampling from $\pi(\cdot | \zeta^*)$ (Def. \ref{eq:adaptation_policy}). The actions sampled from $l$ are textual representations, so we use an interpreter function to execute each $a$ on $f$.

\vspace{2mm}
\textbf{IV. Testing}. The sampled actions are evaluated on an empirical dataset (Def. \ref{eq:testing}), and the empirically optimal action $\hat{a}^* = \argmin_{j \in [m]} R(a)$ is implemented. Limited access to the shifted DGP complicates evaluating $R(a)$, but it can be approximated with empirical data $\hat{\mathcal{D}}_{\text{test}}$ by using \textit{a backtesting window}, \textit{continuously incoming data}, or \textit{historical data} (Appendix \ref{sec:evaluation_strategies_testing}).

\vspace{2mm}
\textbf{Goal}. This procedure aims to approximate the optimal action (Def. \ref{eq:optimal_policy}). These actions are orchestrated by an orchestrator component in $\mathcal{H}$-LLM which can navigate between these steps.  Appendix \ref{sec:appendix_hllm} provides an extended discussion of $\mathcal{H}$-LLM, including the algorithm, prompts, examples, and outputs. The following table links $\mathcal{H}$-LLM with the theoretical framework.

\vspace{1mm}
\centering
\tiny
\renewcommand{\arraystretch}{0.8} 
\begin{tabular}{p{1cm}p{3.4cm}p{4.5cm}p{3cm}}
\toprule
\textbf{Component} & \textbf{Theory} & \textbf{$\mathcal{H}$-LLM} & \textbf{Approximation} \\
\midrule
Monitoring & Monitor for degradation & Drift detection algorithm & Any detection algorithm \\
Diagnosis & Optimal diagnosis $\zeta^* \in \Delta(\mathcal{Z})$& Empirical diagnosis via LLM $\hat{\zeta}$ & MC sampling with LLM \\
Adaptation & Sample action $a \sim \pi(\cdot|\zeta) $& Sample actions via LLM $a \sim l(\cdot|\hat\zeta) $ & MC sampling with LLM \\
Testing & Evaluate each $a$ on $\mathcal{P}_{t}$ &  Evaluate each $a$ on $\hat{\mathcal{D}}_{\text{test}}$ & Any suitable dataset \\
\bottomrule
\end{tabular}
\captionsetup{font={small,rm}, labelfont={bf}}
\captionof{table}{A comparison of the theoretical components, their implementation, and their approximations.}

\label{tab:theory_hllm_approx}
\end{greycustomblock}

\vspace{2mm}

\section{Experimental viability studies}
\label{sec:experiments}

The previous sections constituted the primary contribution of our paper---establishing SHML as a framework. The goal of this section is to provide a  \textit{viability} study by analyzing different components of SHML. We conduct six viability studies.\footnote{Code can be found at: \url{https://github.com/pauliusrauba/Self_Healing_ML} or \url{https://github.com/vanderschaarlab/Self_Healing_ML}}

\textbf{Experimental setup}. We desire to meet two properties: (i) have full control of the DGP to vary experimental parameters; and (ii) we need to benchmark against existing adaptation methods (Sec. \ref{sec:related_work}) which are predominantly tabular-based. Therefore, we simulate a diabetes prediction task \citep{smith1988using, mujumdar2019diabetes, sarwar2018prediction} based on the setup in Sec. \ref{sec:degradation}. We predict diabetes $Y_t \in \{0, 1\}$ at each time point $t$ for a set of $n$ observations, generated according to a (changing) pre-specified DGP $\log \left( \frac{P(Y_t = 1 \mid X_t)}{P(Y_t = 0 \mid X_t)} \right) = \alpha_t + \sum_{k \in K} \beta_{t,k} X_{t, k} + \epsilon_t$, where $K$ includes relevant parameters such as Age or BMI, $\beta_{t,k}$ are time-varying covariates and $\epsilon_t \sim \mathcal{N}(0, \sigma^2)$ is a noise component. For evaluating $\mathcal{H}$-LLM actions, we use a \textit{backtesting window}---a representative sample of the shifted distribution obtained after detecting the change but before deploying the adapted model (Sec. \ref{sec:evaluation_strategies_testing}). Details provided in Appendix \ref{sec:experiments_appendix}.

\vspace{-2mm}
\subsection{Viability study I: Adaptation in the presence of model degradation}
\label{sec:experimental_comparison}

\vspace{-1mm}
\textcolor{pastelblue}{\textbf{Setup}}. We aim to empirically demonstrate the limitations of existing approaches in adapting to changing environments (Sec. \ref{sec:limitations}). We benchmark $\mathcal{H}$-LLM against four common drift adaptation methods: (i) \textit{new model retraining} on post-drift data, (ii) \textit{partially updating} models with new data; (iii) \textit{Ensemble methods} by re-using old models, and (iv) \textit{No retraining} of the models \citep{lu2018learning}. At time $t$, we introduce a sudden, single intervention by changing the DGP parameters \textit{and} corrupting a percentage $\tau$\ of $k$ columns. Table \ref{tab:corrupted_exp} shows the performance of different methods across $\tau$ and $k$.

\vspace{-3mm}
\begin{table}[ht]
\scriptsize
\setlength{\tabcolsep}{2.8pt} 
\centering
\begin{tabular}{@{}lcccc|ccccc@{}}
\toprule
& \multicolumn{4}{c|}{\textbf{Number of corrupted columns $k$ ($\tau = 0.05$)}} & \multicolumn{5}{c}{\textbf{Corruption percentage $\tau$ ($k = 3$)}} \\
\textit{Method} & 2 & 4 & 6 & 8 & 0.01 & 0.02 & 0.05 & 0.10 & 0.20 \\
\midrule
No retraining & 0.44 $\pm$ 0.02 & 0.44 $\pm$ 0.02 & 0.45 $\pm$ 0.02 & 0.45 $\pm$ 0.02 & 0.43 $\pm$ 0.02 & 0.44 $\pm$ 0.02 & 0.44 $\pm$ 0.02 & 0.45 $\pm$ 0.02 & 0.46 $\pm$ 0.02 \\
Partially Updating & 0.71 $\pm$ 0.02 & 0.69 $\pm$ 0.02 & 0.67 $\pm$ 0.02 & 0.54 $\pm$ 0.06 & 0.74 $\pm$ 0.03 & 0.72 $\pm$ 0.02 & 0.70 $\pm$ 0.02 & 0.66 $\pm$ 0.02 & 0.62 $\pm$ 0.02 \\
New model training & 0.70 $\pm$ 0.02 & 0.69 $\pm$ 0.02 & 0.67 $\pm$ 0.02 & 0.50 $\pm$ 0.02 & 0.77 $\pm$ 0.02 & 0.74 $\pm$ 0.02 & 0.69 $\pm$ 0.02 & 0.66 $\pm$ 0.02 & 0.61 $\pm$ 0.02 \\
Ensemble Method & 0.70 $\pm$ 0.02 & 0.69 $\pm$ 0.02 & 0.67 $\pm$ 0.02 & 0.50 $\pm$ 0.02 & 0.77 $\pm$ 0.02 & 0.74 $\pm$ 0.02 & 0.69 $\pm$ 0.02 & 0.66 $\pm$ 0.02 & 0.61 $\pm$ 0.02 \\
\hline
\textbf{$\mathcal{H}$-LLM} & \textbf{0.93 $\pm$ 0.01} & \textbf{0.87 $\pm$ 0.01} & \textbf{0.79 $\pm$ 0.02} & \textbf{0.68 $\pm$ 0.02} & \textbf{0.95 $\pm$ 0.01} & \textbf{0.94 $\pm$ 0.01} & \textbf{0.90 $\pm$ 0.02} & \textbf{0.82 $\pm$ 0.02} & \textbf{0.70 $\pm$ 0.02} \\
\bottomrule
\end{tabular}
\caption{Accuracy of a deployed model $f$ upon an intervention which changes the DGP and corrupts $\tau$ percentage of $k$ columns. Error represents standard deviation. $\uparrow$ is better.}
\label{tab:corrupted_exp}
\end{table}
\vspace{-3mm}

\textcolor{pastelorange}{\textbf{Discussion}}. The performance of $f$ degrades if the corrupted columns are not handled appropriately, such as removing or inputting the corrupted data. Defaulting to standard techniques of adapting to a changed environment results in poor performance. $\mathcal{H}$-LLM diagnoses issues by observing that some values have drifted too much from their original values \textit{and} the DGP has changed. One of the proposed adaptation strategies is to remove samples which were estimated to be corrupted, and re-training the model on the remainder of the data. This results in superior performance. 

\textcolor{pastelgreen}{\textbf{Takeaway 1}}. Diagnosing the root cause of degradation can guide better adaptation actions. 

\vspace{-2mm}
\subsection{Viability study II: Adaptation across datasets}
\label{sec:experimental_comparison_v2}

\textcolor{pastelblue}{\textbf{Setup}}. We aim to empirically analyze whether SHML can provide benefits across different datasets. We cover five different datasets: Airlines \citep{bifet2010moa}, Poker \citep{misc_poker_hand_158}, Weather \citep{elwell2011incremental}, Electricity \citep{zliobaite2013good}, Forest Type \citep{misc_covertype_31}. We simulate real-world unexpected degradations by assuming lagged labels and corrupting features at test time and evaluating models for different datasets (Table \ref{tab:corrupted_exp_upd}).

\textcolor{pastelorange}{\textbf{Discussion}}. Ground-truth labels are often not immediately available \citep{xu2021concept}, a core feature of many streaming settings. We evaluate how $\mathcal{H}$-LLM compares to existing approaches in such scenarios. Across five datasets with different characteristics, $\mathcal{H}$-LLM consistently outperforms traditional adaptation methods by adapting $f_t^a$ at each time point $t$. Therefore, SHML's ability to identify and decorrupt features provides a robust adaptation strategy across varied data distributions and schemas.

\textcolor{pastelgreen}{\textbf{Takeaway 2}}. Identifying the root cause and restoring features dacn provide consistent benefits across datasets. 

\begin{table}[!htbp]
\tiny
\setlength{\tabcolsep}{2.8pt}
\centering
\begin{tabular}{@{}lccccc|ccccc@{}}
\toprule
& \multicolumn{5}{c|}{\textbf{Accuracy when $k=5$}} & \multicolumn{5}{c}{\textbf{Accuracy when $\tau = 5$}} \\
\textit{Method} & airlines & poker & weather & elec & covType & airlines & poker & weather & elec & covType \\
\midrule
No retraining & 0.53 $\pm$ 0.01 & 0.48 $\pm$ 0.01 & 0.57 $\pm$ 0.05 & 0.66 $\pm$ 0.04 & 0.51 $\pm$ 0.03 & 0.53 $\pm$ 0.01 & 0.47 $\pm$ 0.01 & 0.59 $\pm$ 0.04 & 0.67 $\pm$ 0.03 & 0.58 $\pm$ 0.01 \\
Partially Updating & 0.53 $\pm$ 0.01 & 0.48 $\pm$ 0.01 & 0.57 $\pm$ 0.05 & 0.66 $\pm$ 0.04 & 0.51 $\pm$ 0.03 & 0.53 $\pm$ 0.01 & 0.47 $\pm$ 0.01 & 0.59 $\pm$ 0.04 & 0.67 $\pm$ 0.03 & 0.58 $\pm$ 0.01 \\
New model training & 0.54 $\pm$ 0.02 & 0.49 $\pm$ 0.01 & 0.56 $\pm$ 0.03 & 0.66 $\pm$ 0.05 & 0.51 $\pm$ 0.02 & 0.53 $\pm$ 0.02 & 0.47 $\pm$ 0.00 & 0.60 $\pm$ 0.02 & 0.67 $\pm$ 0.03 & 0.58 $\pm$ 0.02 \\
Ensemble Method & 0.51 $\pm$ 0.01 & 0.48 $\pm$ 0.01 & 0.58 $\pm$ 0.06 & 0.57 $\pm$ 0.09 & 0.52 $\pm$ 0.02 & 0.52 $\pm$ 0.01 & 0.46 $\pm$ 0.00 & 0.59 $\pm$ 0.05 & 0.65 $\pm$ 0.04 & 0.59 $\pm$ 0.01 \\
\hline \textbf{$\mathcal{H}$-LLM} & \bf 0.56 $\pm$ 0.00 & \bf 0.70 $\pm$ 0.03 & \bf 0.66 $\pm$ 0.02 & \bf 0.72 $\pm$ 0.01 & \bf 0.73 $\pm$ 0.00 & \bf 0.56 $\pm$ 0.00 & \bf 0.70 $\pm$ 0.03 & \bf 0.66 $\pm$ 0.02 & \bf 0.72 $\pm$ 0.01 & \bf 0.73 $\pm$ 0.00 \\
\bottomrule
\end{tabular}
\caption{Accuracy of various methods on different datasets with corrupted columns and varying corruption values. Error represents standard deviations of five runs. $\uparrow$ is better. We simulate concept drift by corrupting the test set as follows: we randomly selected $k$ features and multiplied their values by a corruption factor $\tau$. $\mathcal{H}$-LLM identifies that the test data has been corrupted and perform a relevant transformation to \textit{decorrupt} the value to its original feature space at test time.}
\label{tab:corrupted_exp_upd}
\end{table}

\vspace{-7mm}
\subsection{Viability study III: Monitoring}
\label{sec:monitoring_exp}

\begin{wrapfigure}[14]{r}{0.25\columnwidth}
    \vspace*{-8mm}
    \centering
    \includegraphics[width=\linewidth]{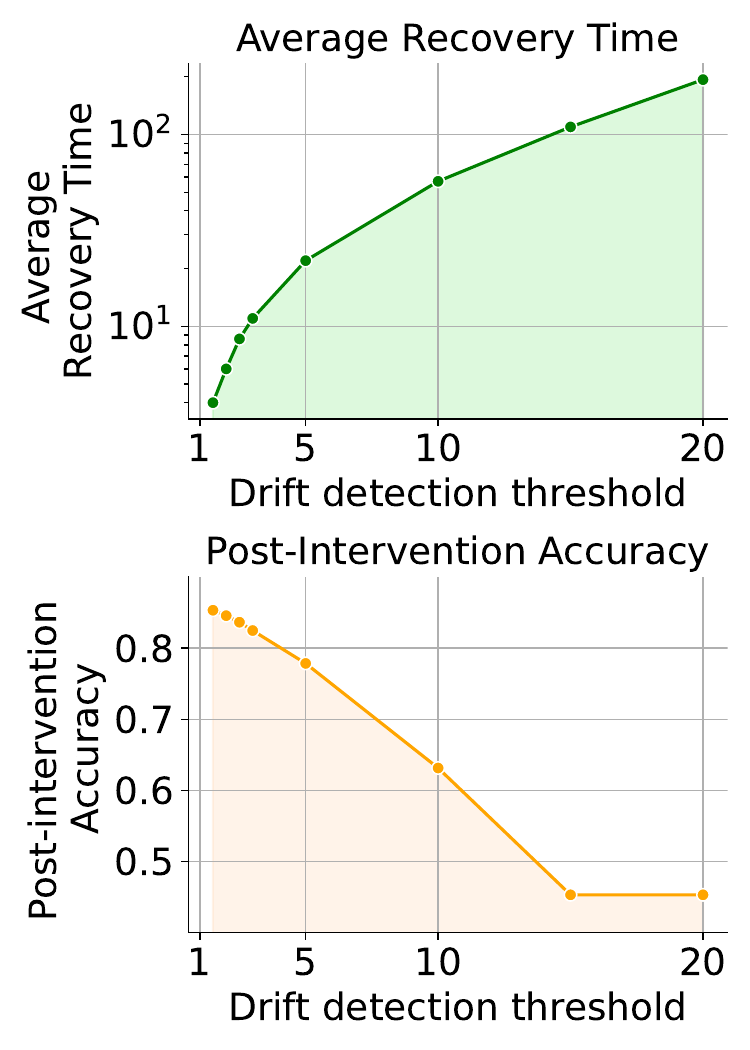}
    \caption{Lower drift detection thresholds can benefit SHML.}
    \label{fig:drift_detection}
    \vspace{-2mm}
    \rule{\linewidth}{.5pt}
\end{wrapfigure}

\vspace{-1mm}
\textcolor{pastelblue}{\textbf{Setup}}. We use the same setup as experiment I and vary the drift detection threshold which influences the sensitivity of a detection system to changes in the DGP. Low values mean high sensitivity, and high values mean low sensitivity \citep{liu2022concept}. We measure \textit{average recovery time} for $\mathcal{H}$-LLM to return recover from degradation and \textit{post-intervention accuracy}, $\mathcal{H}$-LLMs average performance after intervention. Fig. \ref{fig:drift_detection} shows this relationship.

\textcolor{pastelorange}{\textbf{Discussion}}. Intuitively, one might expect earlier drift detection (lower threshold) to consistently yield faster recovery and higher accuracy. In reality, concept drift algorithms often struggle with false positives which can result in \textit{worse} model performance because of unnecessary re-training \citep{agrahari2022concept, hinder2023hardness}. Self-healing ML exhibits greater robustness to these false positives, as any action will be implemented only if it outperforms doing nothing. This contrasts with traditional systems which would automatically trigger the selected action. In Fig. \ref{fig:drift_detection}, this represents the higher post-intervention accuracy with smaller thresholds.

\textcolor{pastelgreen}{\textbf{Takeaway 3}}. SHML has greater robustness to implementing poor adaptation actions.

\vspace{-2mm}
\subsection{Viability study IV: Diagnosis}
\label{sec:diagnosis_exp}

\begin{wrapfigure}[11]{r}{0.4\columnwidth}
    \vspace*{-10mm}
    \centering
    \includegraphics[width=\linewidth]{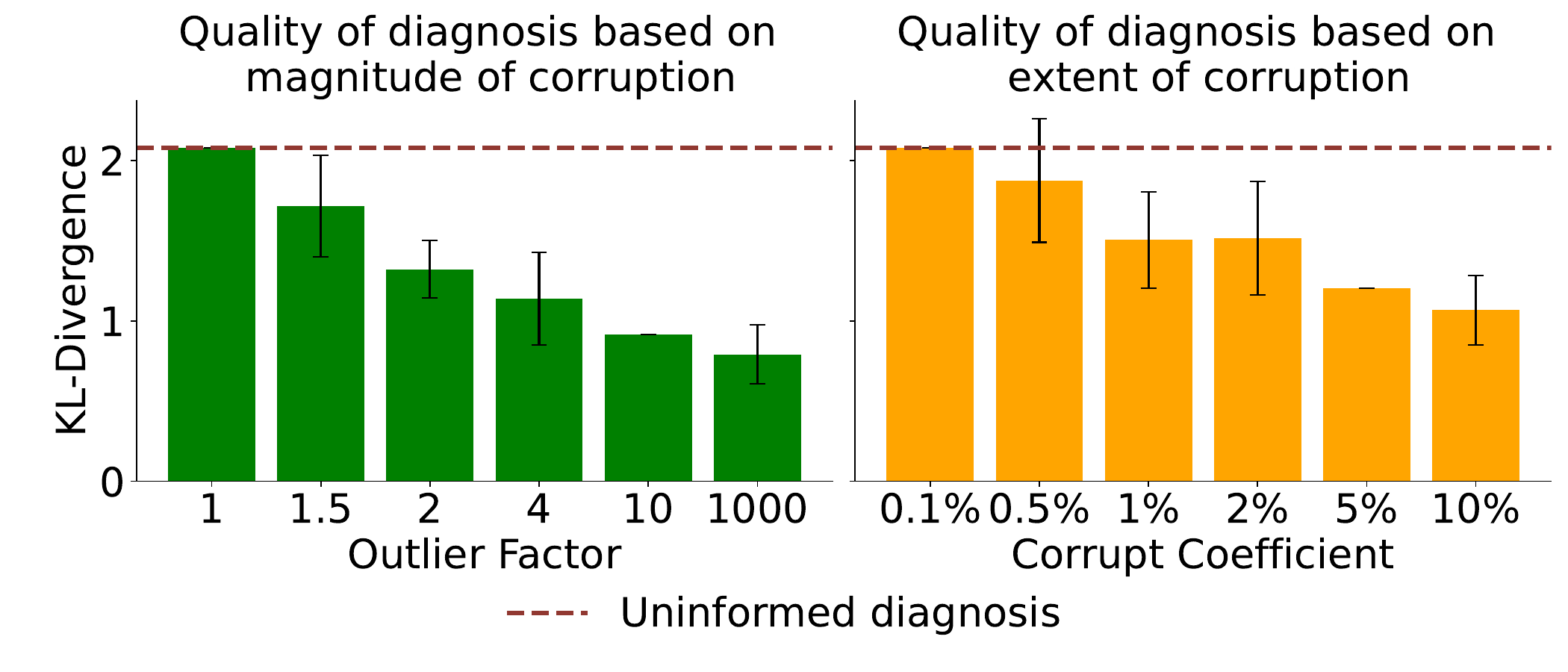}
    \caption{KL-Divergence between estimated probabilities of which variables are corrupted, and true probabilities, based on \textit{outlier factors} and \textit{corruption coefficients}. $\downarrow$ is better.}
    \label{fig:policies_corruption}
    \vspace{-2mm}
    \rule{\linewidth}{.5pt}
\end{wrapfigure}
\vspace{-5pt}

\textcolor{pastelblue}{\textbf{Setup}}. We evaluate how well self-healing systems identify the root causes of problems. We corrupt a proportion of observations (\textit{corruption coefficient}) by multiplying their values by a factor (\textit{outlier factor}) and see if the $\mathcal{H}$-LLM detects issues related to these factors. We output a probability distribution over diagnoses of which variable is corrupted. Knowing the true corrupted variable, we measure the difference between the distributions using KL-Divergence, with lower values indicating closer matches to true corruption. A uniform diagnosis baseline represents random guessing. Fig. \ref{fig:policies_corruption} shows these differences.

\textcolor{pastelorange}{\textbf{Discussion}}. As the \textit{outlier factor} and \textit{corruption coefficient} increase, making data issues more apparent, $\mathcal{H}$-LLM assigns higher probabilities to the corrupted variables. Thus, the diagnosis accuracy improves as the problem becomes more evident.

\textcolor{pastelgreen}{\textbf{Takeaway 4}}. The quality of the diagnosis improves when the issues become more apparent.

\vspace{-5pt}
\subsection{Viability study V: Adaptation}
\label{sec:adaptation_exp}

\begin{wrapfigure}[8]{r}{0.5\columnwidth}
    \vspace*{-10mm}
    \centering
    \includegraphics[width=\linewidth]{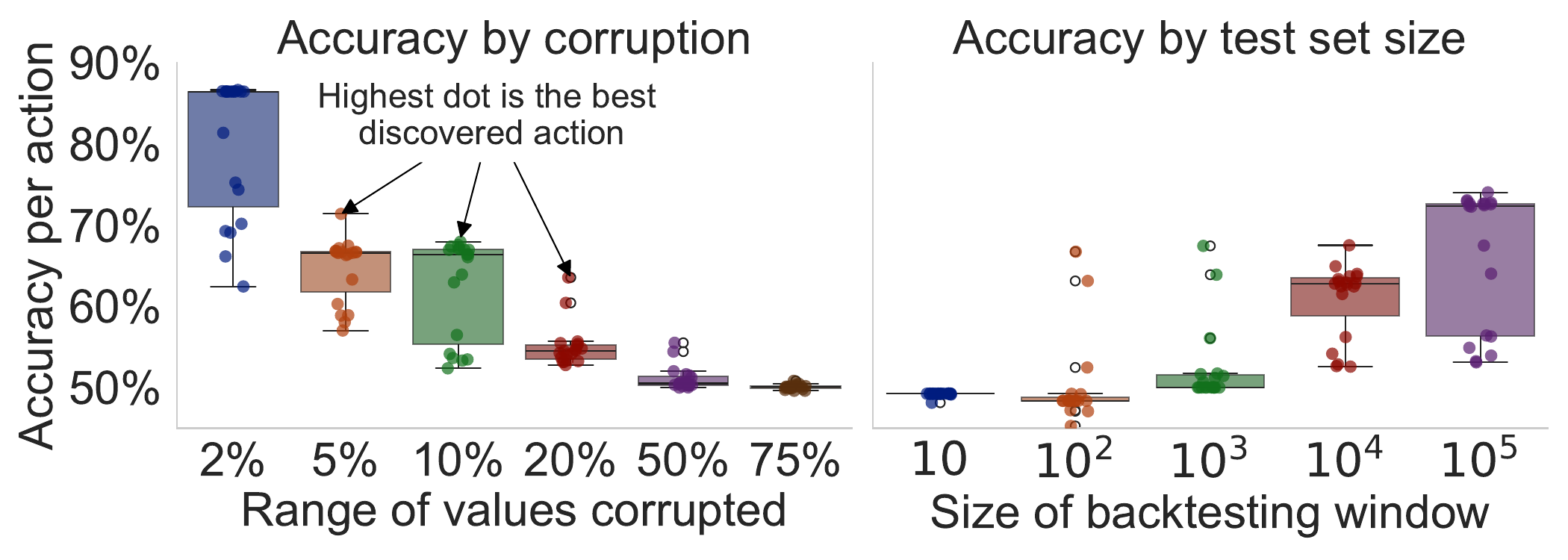}
    \caption{Model $f$ accuracy for actions with varying corruption range and backtesting window size.}    \label{fig:policy_performance}
    \vspace{-2mm}
    \rule{\linewidth}{.5pt}
\end{wrapfigure}

\vspace{-3pt}
\textcolor{pastelblue}{\textbf{Setup}}. We study the sensitivity of SHML adaptation actions by examining how well actions perform based on (i) the number of corrupted values and (ii) the size of the backtesting dataset. Fig. \ref{fig:policy_performance} shows this relationship.

\textcolor{pastelorange}{\textbf{Discussion}}. As more values are corrupted, adaptation actions become more concentrated and less effective. With a larger backtesting dataset, actions are more spread out. This suggests (i) action evaluation is more reliable with non-corrupted data and (ii) larger backtesting windows help in selecting better adaptations.

\textcolor{pastelgreen}{\textbf{Takeaway 5}}. A large test dataset and high-quality data can improve adaptation action selection.

\vspace{-3pt}
\subsection{Viability study VI: Testing}
\label{sec:testing_exp}

\begin{wrapfigure}[10]{r}{0.35\columnwidth}
    \vspace*{-12mm}
    \centering
    \includegraphics[width=\linewidth]{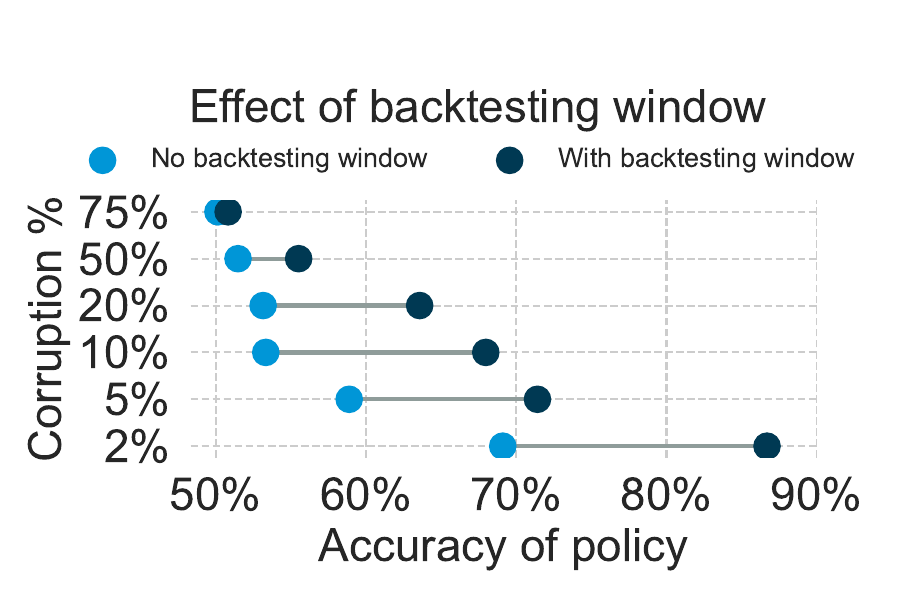}
    \caption{Accuracies of optimal actions with and without testing components}
    \label{fig:testing_window}
    \vspace{-2mm}
    \rule{\linewidth}{.5pt}
\end{wrapfigure}
\vspace{-5pt}

\textcolor{pastelblue}{\textbf{Setup}}. We study the importance of the testing component (Eq. \ref{eq:testing}) by evaluating $\mathcal{H}$-LLM suggested actions with and without the testing phase (backtesting window) and comparing their accuracies. Fig. \ref{fig:testing_window} shows this relationship.

\textcolor{pastelorange}{\textbf{Discussion}}. Having a dataset to evaluate actions significantly improves the self-healing process. We see that better data quality results in more reliable adaptation policies.

\textcolor{pastelgreen}{\textbf{Takeaway 6}}. The testing component is important to effectively evaluate the proposed actions. 

\vspace{-2mm}
\subsection{Other studies}
We provide further experiments in Appendix \ref{appendix:extended_experiments}. Our framework shows strong performance with lower warm-start parameters and increasing benefit as data degradation becomes more severe (Sec. \ref{sec:corruption_effects}). A component-wise ablation analysis (Sec. \ref{sec:ablation}) reveals each stage of SHML is essential. Extended benchmarks (Sec. \ref{sec:extended_benchmarks}) and model agnostic evaluations (Sec. \ref{sec:model_agnostic}) demonstrate consistent improvements across different adaptation approaches and ML architectures.

\vspace{-3mm}
\section{Discussion}
\label{sec:discussion}
\vspace{-2mm}

Algorithms hold significant decision-making power in high-stakes applications, yet little has been done to ensure their optimal performance. This work presents a major leap towards that goal. By enabling systems to autonomously diagnose and adapt to new environments, we aim to create a wave of self-healing systems beneficial to both the ML community and society. Our theoretical framework (Sec. \ref{sec:four_stages}, \ref{sec:reason_diagnosis}) builds the foundation for the development of self-healing theory, such as optimal adaptation or diagnosis methods, and our viability study shows the potential benefits of SHML. Our largest contribution is formalizing this field---we hope to spur new theoretical developments and encourage the adoption of such systems in critical domains like medicine \citep{vela2022temporal, young2022empirical, adam2020hidden, sahiner2023data, beyene2015improved, huggard2020detecting} and finance \citep{vzliobaite2016overview}.

\textbf{Limitations}. SHML's success relies on accurate root cause identification and finding effective adaptation policies which could pose challenges in some complex, real-world settings (Sec. \ref{sec:challenges_of_shml}). Furthermore, the prioritization of adaptation strategies is also not trivial. Currently, $\mathcal{H}$-LLM primarily looks for subgroup-level issues. We see future work tackling all areas of self-healing ML: finding better diagnosis strategies, improving adaptation selection, and enabling better testing of actions in the presence of changing environments.

\textbf{Broader impact}. Because SHML could empower many \textit{positive} technologies, it could also be \textit{misused} to amplify the impact of more problematic systems, such as surveillance technologies.

\vspace{-2mm}
\section*{Acknowledgements}
\vspace{-1mm}
We would like to thank the anonymous reviewers, Julianna Piskorz, Katarzyna Kobalczyk, Haris Mackevicius, and Andrew Rashbass for their helpful feedback. PR is supported by GSK, KK is supported by Roche, NS by the Cystic Fibrosis Trust. This work was supported by Microsoft's Accelerate Foundation Models Academic Research initiative.

\bibliography{library}

\begin{thebibliography}{75}
\providecommand{\natexlab}[1]{#1}
\providecommand{\url}[1]{\texttt{#1}}
\expandafter\ifx\csname urlstyle\endcsname\relax
  \providecommand{\doi}[1]{doi: #1}\else
  \providecommand{\doi}{doi: \begingroup \urlstyle{rm}\Url}\fi

\bibitem[Lu et~al.(2018)Lu, Liu, Dong, Gu, Gama, and Zhang]{lu2018learning}
Jie Lu, Anjin Liu, Fan Dong, Feng Gu, Joao Gama, and Guangquan Zhang.
\newblock Learning under concept drift: A review.
\newblock \emph{IEEE transactions on knowledge and data engineering}, 31\penalty0 (12):\penalty0 2346--2363, 2018.

\bibitem[Raza et~al.(2014)Raza, Prasad, and Li]{raza2014adaptive}
Haider Raza, Girijesh Prasad, and Yuhua Li.
\newblock Adaptive learning with covariate shift-detection for non-stationary environments.
\newblock In \emph{2014 14th UK Workshop on Computational Intelligence (UKCI)}, pages 1--8. IEEE, 2014.

\bibitem[Rabanser et~al.(2019)Rabanser, G{\"u}nnemann, and Lipton]{rabanser2019failing}
Stephan Rabanser, Stephan G{\"u}nnemann, and Zachary Lipton.
\newblock Failing loudly: An empirical study of methods for detecting dataset shift.
\newblock \emph{Advances in Neural Information Processing Systems}, 32, 2019.

\bibitem[Bayram et~al.(2022)Bayram, Ahmed, and Kassler]{bayram2022concept}
Firas Bayram, Bestoun~S Ahmed, and Andreas Kassler.
\newblock From concept drift to model degradation: An overview on performance-aware drift detectors.
\newblock \emph{Knowledge-Based Systems}, 245:\penalty0 108632, 2022.

\bibitem[Agrahari and Singh(2022)]{agrahari2022concept}
Supriya Agrahari and Anil~Kumar Singh.
\newblock Concept drift detection in data stream mining: A literature review.
\newblock \emph{Journal of King Saud University-Computer and Information Sciences}, 34\penalty0 (10):\penalty0 9523--9540, 2022.

\bibitem[Gon{\c{c}}alves~Jr and De~Barros(2013)]{gonccalves2013rcd}
Paulo~Mauricio Gon{\c{c}}alves~Jr and Roberto Souto~Maior De~Barros.
\newblock Rcd: A recurring concept drift framework.
\newblock \emph{Pattern Recognition Letters}, 34\penalty0 (9):\penalty0 1018--1025, 2013.

\bibitem[Alippi et~al.(2013)Alippi, Boracchi, and Roveri]{alippi2013just}
Cesare Alippi, Giacomo Boracchi, and Manuel Roveri.
\newblock Just-in-time classifiers for recurrent concepts.
\newblock \emph{IEEE transactions on neural networks and learning systems}, 24\penalty0 (4):\penalty0 620--634, 2013.

\bibitem[Bach and Maloof(2008)]{bach2008paired}
Stephen~H Bach and Marcus~A Maloof.
\newblock Paired learners for concept drift.
\newblock In \emph{2008 Eighth IEEE International Conference on Data Mining}, pages 23--32. IEEE, 2008.

\bibitem[Gama and Kosina(2014)]{gama2014recurrent}
Joao Gama and Petr Kosina.
\newblock Recurrent concepts in data streams classification.
\newblock \emph{Knowledge and Information Systems}, 40:\penalty0 489--507, 2014.

\bibitem[Widmer and Kubat(1996)]{widmer1996learning}
Gerhard Widmer and Miroslav Kubat.
\newblock Learning in the presence of concept drift and hidden contexts.
\newblock \emph{Machine learning}, 23:\penalty0 69--101, 1996.

\bibitem[Hulten et~al.(2001)Hulten, Spencer, and Domingos]{hulten2001mining}
Geoff Hulten, Laurie Spencer, and Pedro Domingos.
\newblock Mining time-changing data streams.
\newblock In \emph{Proceedings of the seventh ACM SIGKDD international conference on Knowledge discovery and data mining}, pages 97--106, 2001.

\bibitem[Wang et~al.(2022)Wang, Jin, Woo, Woodward, and Davies]{wang2022noise}
Pingfan Wang, Nanlin Jin, Wai~Lok Woo, John~R Woodward, and Duncan Davies.
\newblock Noise tolerant drift detection method for data stream mining.
\newblock \emph{Information Sciences}, 609:\penalty0 1318--1333, 2022.

\bibitem[Vela et~al.(2022)Vela, Sharp, Zhang, Nguyen, Hoang, and Pianykh]{vela2022temporal}
Daniel Vela, Andrew Sharp, Richard Zhang, Trang Nguyen, An~Hoang, and Oleg~S Pianykh.
\newblock Temporal quality degradation in ai models.
\newblock \emph{Scientific reports}, 12\penalty0 (1):\penalty0 11654, 2022.

\bibitem[Young and Steele(2022)]{young2022empirical}
Zachary Young and Robert Steele.
\newblock Empirical evaluation of performance degradation of machine learning-based predictive models--a case study in healthcare information systems.
\newblock \emph{International Journal of Information Management Data Insights}, 2\penalty0 (1):\penalty0 100070, 2022.

\bibitem[Adam et~al.(2020)Adam, Chang, Haibe-Kains, and Goldenberg]{adam2020hidden}
George~Alexandru Adam, Chun-Hao~Kingsley Chang, Benjamin Haibe-Kains, and Anna Goldenberg.
\newblock Hidden risks of machine learning applied to healthcare: unintended feedback loops between models and future data causing model degradation.
\newblock In \emph{Machine Learning for Healthcare Conference}, pages 710--731. PMLR, 2020.

\bibitem[Sahiner et~al.(2023)Sahiner, Chen, Samala, and Petrick]{sahiner2023data}
Berkman Sahiner, Weijie Chen, Ravi~K Samala, and Nicholas Petrick.
\newblock Data drift in medical machine learning: implications and potential remedies.
\newblock \emph{The British Journal of Radiology}, 96\penalty0 (1150):\penalty0 20220878, 2023.

\bibitem[Beyene et~al.(2015)Beyene, Welemariam, Persson, and Lavesson]{beyene2015improved}
Ayne~A Beyene, Tewelle Welemariam, Marie Persson, and Niklas Lavesson.
\newblock Improved concept drift handling in surgery prediction and other applications.
\newblock \emph{Knowledge and Information Systems}, 44:\penalty0 177--196, 2015.

\bibitem[Huggard et~al.(2020)Huggard, Koh, Dobbie, and Zhang]{huggard2020detecting}
Hamish Huggard, Yun~Sing Koh, Gillian Dobbie, and Edmond Zhang.
\newblock Detecting concept drift in medical triage.
\newblock In \emph{Proceedings of the 43rd International ACM SIGIR Conference on Research and Development in Information Retrieval}, pages 1733--1736, 2020.

\bibitem[Mai et~al.(2021)Mai, Hoang, Baigutanova, Alina, and Kim]{mai2021customs}
Tung-Duong Mai, Kien Hoang, Aitolkyn Baigutanova, Gaukhartas Alina, and Sundong Kim.
\newblock Customs fraud detection in the presence of concept drift.
\newblock In \emph{2021 International Conference on Data Mining Workshops (ICDMW)}, pages 370--379. IEEE, 2021.

\bibitem[{\v{Z}}liobait{\.e} et~al.(2016){\v{Z}}liobait{\.e}, Pechenizkiy, and Gama]{vzliobaite2016overview}
Indr{\.e} {\v{Z}}liobait{\.e}, Mykola Pechenizkiy, and Joao Gama.
\newblock An overview of concept drift applications.
\newblock \emph{Big data analysis: new algorithms for a new society}, pages 91--114, 2016.

\bibitem[Gama et~al.(2014)Gama, {\v{Z}}liobait{\.e}, Bifet, Pechenizkiy, and Bouchachia]{gama2014survey}
Jo{\~a}o Gama, Indr{\.e} {\v{Z}}liobait{\.e}, Albert Bifet, Mykola Pechenizkiy, and Abdelhamid Bouchachia.
\newblock A survey on concept drift adaptation.
\newblock \emph{ACM computing surveys (CSUR)}, 46\penalty0 (4):\penalty0 1--37, 2014.

\bibitem[Klinkenberg and Renz(1998)]{klinkenberg1998adaptive}
Ralf Klinkenberg and Ingrid Renz.
\newblock Adaptive information filtering: Learning in the presence of concept drifts.
\newblock \emph{Learning for text categorization}, pages 33--40, 1998.

\bibitem[Gama et~al.(2004)Gama, Medas, Castillo, and Rodrigues]{gama2004learning}
Joao Gama, Pedro Medas, Gladys Castillo, and Pedro Rodrigues.
\newblock Learning with drift detection.
\newblock In \emph{Advances in Artificial Intelligence--SBIA 2004: 17th Brazilian Symposium on Artificial Intelligence, Sao Luis, Maranhao, Brazil, September 29-Ocotber 1, 2004. Proceedings 17}, pages 286--295. Springer, 2004.

\bibitem[Baena-Garc{\i}a et~al.(2006)Baena-Garc{\i}a, del Campo-{\'A}vila, Fidalgo, Bifet, Gavalda, and Morales-Bueno]{baena2006early}
Manuel Baena-Garc{\i}a, Jos{\'e} del Campo-{\'A}vila, Raul Fidalgo, Albert Bifet, Ricard Gavalda, and Rafael Morales-Bueno.
\newblock Early drift detection method.
\newblock In \emph{Fourth international workshop on knowledge discovery from data streams}, volume~6, pages 77--86. Citeseer, 2006.

\bibitem[Krawczyk et~al.(2018)Krawczyk, Pfahringer, and Wo{\'z}niak]{krawczyk2018combining}
Bartosz Krawczyk, Bernhard Pfahringer, and Micha{\l} Wo{\'z}niak.
\newblock Combining active learning with concept drift detection for data stream mining.
\newblock In \emph{2018 IEEE international conference on big data (big data)}, pages 2239--2244. IEEE, 2018.

\bibitem[Astorga et~al.(2024)Astorga, Liu, Seedat, and van~der Schaar]{astorga2024poca}
Nicol{\'a}s Astorga, Tennison Liu, Nabeel Seedat, and Mihaela van~der Schaar.
\newblock Partially observable cost-aware active-learning with large language models.
\newblock In \emph{The Thirty-Eighth Annual Conference on Neural Information Processing Systems}, 2024.

\bibitem[Halstead et~al.(2022)Halstead, Koh, Riddle, Pears, Pechenizkiy, Bifet, Olivares, and Coulson]{halstead2022analyzing}
Ben Halstead, Yun~Sing Koh, Patricia Riddle, Russel Pears, Mykola Pechenizkiy, Albert Bifet, Gustavo Olivares, and Guy Coulson.
\newblock Analyzing and repairing concept drift adaptation in data stream classification.
\newblock \emph{Machine Learning}, 111\penalty0 (10):\penalty0 3489--3523, 2022.

\bibitem[Farid et~al.(2013)Farid, Zhang, Hossain, Rahman, Strachan, Sexton, and Dahal]{farid2013adaptive}
Dewan~Md Farid, Li~Zhang, Alamgir Hossain, Chowdhury~Mofizur Rahman, Rebecca Strachan, Graham Sexton, and Keshav Dahal.
\newblock An adaptive ensemble classifier for mining concept drifting data streams.
\newblock \emph{Expert Systems with Applications}, 40\penalty0 (15):\penalty0 5895--5906, 2013.

\bibitem[Dries and R{\"u}ckert(2009)]{dries2009adaptive}
Anton Dries and Ulrich R{\"u}ckert.
\newblock Adaptive concept drift detection.
\newblock \emph{Statistical Analysis and Data Mining: The ASA Data Science Journal}, 2\penalty0 (5-6):\penalty0 311--327, 2009.

\bibitem[Museba et~al.(2021)Museba, Nelwamondo, Ouahada, and Akinola]{museba2021recurrent}
Tinofirei Museba, Fulufhelo Nelwamondo, Khmaies Ouahada, and Ayokunle Akinola.
\newblock Recurrent adaptive classifier ensemble for handling recurring concept drifts.
\newblock \emph{Applied Computational Intelligence and Soft Computing}, 2021:\penalty0 1--13, 2021.

\bibitem[Yu et~al.(2024)Yu, Lu, Zhang, and Zhang]{yu2024online}
En~Yu, Jie Lu, Bin Zhang, and Guangquan Zhang.
\newblock Online boosting adaptive learning under concept drift for multistream classification.
\newblock In \emph{Proceedings of the AAAI Conference on Artificial Intelligence}, volume~38, pages 16522--16530, 2024.

\bibitem[Yu et~al.(2022)Yu, Song, Zhang, and Lu]{yu2022learn}
En~Yu, Yiliao Song, Guangquan Zhang, and Jie Lu.
\newblock Learn-to-adapt: Concept drift adaptation for hybrid multiple streams.
\newblock \emph{Neurocomputing}, 496:\penalty0 121--130, 2022.

\bibitem[Liu et~al.(2024)Liu, Wang, Cui, and Namkoong]{liu2024need}
Jiashuo Liu, Tianyu Wang, Peng Cui, and Hongseok Namkoong.
\newblock On the need for a language describing distribution shifts: Illustrations on tabular datasets.
\newblock \emph{Advances in Neural Information Processing Systems}, 36, 2024.

\bibitem[Masegosa et~al.(2020)Masegosa, Mart{\'\i}nez, Ramos-L{\'o}pez, Langseth, Nielsen, and Salmer{\'o}n]{masegosa2020analyzing}
Andr{\'e}s~R Masegosa, Ana~M Mart{\'\i}nez, Dar{\'\i}o Ramos-L{\'o}pez, Helge Langseth, Thomas~D Nielsen, and Antonio Salmer{\'o}n.
\newblock Analyzing concept drift: A case study in the financial sector.
\newblock \emph{Intelligent Data Analysis}, 24\penalty0 (3):\penalty0 665--688, 2020.

\bibitem[Kulinski and Inouye(2023)]{kulinski2023towards}
Sean Kulinski and David~I Inouye.
\newblock Towards explaining distribution shifts.
\newblock In \emph{International Conference on Machine Learning}, pages 17931--17952. PMLR, 2023.

\bibitem[Budhathoki et~al.(2021)Budhathoki, Janzing, Bloebaum, and Ng]{budhathoki2021did}
Kailash Budhathoki, Dominik Janzing, Patrick Bloebaum, and Hoiyi Ng.
\newblock Why did the distribution change?
\newblock In \emph{International Conference on Artificial Intelligence and Statistics}, pages 1666--1674. PMLR, 2021.

\bibitem[Zhang et~al.(2022)Zhang, Singh, Ghassemi, and Joshi]{zhang2022did}
Haoran Zhang, Harvineet Singh, Marzyeh Ghassemi, and Shalmali Joshi.
\newblock " why did the model fail?": Attributing model performance changes to distribution shifts.
\newblock \emph{arXiv preprint arXiv:2210.10769}, 2022.

\bibitem[Xu and Klabjan(2021)]{xu2021concept}
Yiming Xu and Diego Klabjan.
\newblock Concept drift and covariate shift detection ensemble with lagged labels.
\newblock In \emph{2021 IEEE International Conference on Big Data (Big Data)}, pages 1504--1513. IEEE, 2021.

\bibitem[Wang and Abraham(2015)]{wang2015concept}
Heng Wang and Zubin Abraham.
\newblock Concept drift detection for streaming data.
\newblock In \emph{2015 international joint conference on neural networks (IJCNN)}, pages 1--9. IEEE, 2015.

\bibitem[Webb et~al.(2016)Webb, Hyde, Cao, Nguyen, and Petitjean]{webb2016characterizing}
Geoffrey~I Webb, Roy Hyde, Hong Cao, Hai~Long Nguyen, and Francois Petitjean.
\newblock Characterizing concept drift.
\newblock \emph{Data Mining and Knowledge Discovery}, 30\penalty0 (4):\penalty0 964--994, 2016.

\bibitem[Hern{\'a}ndez and Stolfo(1998)]{hernandez1998real}
Mauricio~A Hern{\'a}ndez and Salvatore~J Stolfo.
\newblock Real-world data is dirty: Data cleansing and the merge/purge problem.
\newblock \emph{Data mining and knowledge discovery}, 2:\penalty0 9--37, 1998.

\bibitem[Pirinen et~al.(2004)Pirinen, Yli-Hietanen, Pertila, and Visa]{pirinen2004detection}
Tuomo~W Pirinen, Jari Yli-Hietanen, P~Pertila, and Ari Visa.
\newblock Detection and compensation of sensor malfunction in time delay based direction of arrival estimation.
\newblock In \emph{2004 IEEE International Symposium on Circuits and Systems (IEEE Cat. No. 04CH37512)}, volume~4, pages IV--872. IEEE, 2004.

\bibitem[Qiu et~al.(2023)Qiu, Jiang, Lu, Sclar, Pyatkin, Bhagavatula, Wang, Kim, Choi, Dziri, et~al.]{qiu2023phenomenal}
Linlu Qiu, Liwei Jiang, Ximing Lu, Melanie Sclar, Valentina Pyatkin, Chandra Bhagavatula, Bailin Wang, Yoon Kim, Yejin Choi, Nouha Dziri, et~al.
\newblock Phenomenal yet puzzling: Testing inductive reasoning capabilities of language models with hypothesis refinement.
\newblock \emph{arXiv preprint arXiv:2310.08559}, 2023.

\bibitem[Singhal et~al.(2023)Singhal, Azizi, Tu, Mahdavi, Wei, Chung, Scales, Tanwani, Cole-Lewis, Pfohl, et~al.]{singhal2023large}
Karan Singhal, Shekoofeh Azizi, Tao Tu, S~Sara Mahdavi, Jason Wei, Hyung~Won Chung, Nathan Scales, Ajay Tanwani, Heather Cole-Lewis, Stephen Pfohl, et~al.
\newblock Large language models encode clinical knowledge.
\newblock \emph{Nature}, pages 1--9, 2023.

\bibitem[Chowdhery et~al.(2022)Chowdhery, Narang, Devlin, Bosma, Mishra, Roberts, Barham, Chung, Sutton, Gehrmann, et~al.]{chowdhery2022palm}
Aakanksha Chowdhery, Sharan Narang, Jacob Devlin, Maarten Bosma, Gaurav Mishra, Adam Roberts, Paul Barham, Hyung~Won Chung, Charles Sutton, Sebastian Gehrmann, et~al.
\newblock Palm: Scaling language modeling with pathways.
\newblock \emph{arXiv preprint arXiv:2204.02311}, 2022.

\bibitem[Wang et~al.(2024)Wang, Ma, Feng, Zhang, Yang, Zhang, Chen, Tang, Chen, Lin, et~al.]{wang2024survey}
Lei Wang, Chen Ma, Xueyang Feng, Zeyu Zhang, Hao Yang, Jingsen Zhang, Zhiyuan Chen, Jiakai Tang, Xu~Chen, Yankai Lin, et~al.
\newblock A survey on large language model based autonomous agents.
\newblock \emph{Frontiers of Computer Science}, 18\penalty0 (6):\penalty0 1--26, 2024.

\bibitem[Xi et~al.(2023)Xi, Chen, Guo, He, Ding, Hong, Zhang, Wang, Jin, Zhou, et~al.]{xi2023rise}
Zhiheng Xi, Wenxiang Chen, Xin Guo, Wei He, Yiwen Ding, Boyang Hong, Ming Zhang, Junzhe Wang, Senjie Jin, Enyu Zhou, et~al.
\newblock The rise and potential of large language model based agents: A survey.
\newblock \emph{arXiv preprint arXiv:2309.07864}, 2023.

\bibitem[Gon{\c{c}}alves~Jr et~al.(2014)Gon{\c{c}}alves~Jr, de~Carvalho~Santos, Barros, and Vieira]{gonccalves2014comparative}
Paulo~M Gon{\c{c}}alves~Jr, Silas~GT de~Carvalho~Santos, Roberto~SM Barros, and Davi~CL Vieira.
\newblock A comparative study on concept drift detectors.
\newblock \emph{Expert Systems with Applications}, 41\penalty0 (18):\penalty0 8144--8156, 2014.

\bibitem[Smith et~al.(1988)Smith, Everhart, Dickson, Knowler, and Johannes]{smith1988using}
Jack~W Smith, James~E Everhart, WC~Dickson, William~C Knowler, and Robert~Scott Johannes.
\newblock Using the adap learning algorithm to forecast the onset of diabetes mellitus.
\newblock In \emph{Proceedings of the annual symposium on computer application in medical care}, page 261. American Medical Informatics Association, 1988.

\bibitem[Mujumdar and Vaidehi(2019)]{mujumdar2019diabetes}
Aishwarya Mujumdar and Vb~Vaidehi.
\newblock Diabetes prediction using machine learning algorithms.
\newblock \emph{Procedia Computer Science}, 165:\penalty0 292--299, 2019.

\bibitem[Sarwar et~al.(2018)Sarwar, Kamal, Hamid, and Shah]{sarwar2018prediction}
Muhammad~Azeem Sarwar, Nasir Kamal, Wajeeha Hamid, and Munam~Ali Shah.
\newblock Prediction of diabetes using machine learning algorithms in healthcare.
\newblock In \emph{2018 24th international conference on automation and computing (ICAC)}, pages 1--6. IEEE, 2018.

\bibitem[Bifet et~al.(2010)Bifet, Holmes, Pfahringer, Kranen, Kremer, Jansen, and Seidl]{bifet2010moa}
Albert Bifet, Geoff Holmes, Bernhard Pfahringer, Philipp Kranen, Hardy Kremer, Timm Jansen, and Thomas Seidl.
\newblock Moa: Massive online analysis, a framework for stream classification and clustering.
\newblock In \emph{Proceedings of the first workshop on applications of pattern analysis}, pages 44--50. PMLR, 2010.

\bibitem[Cattral and Oppacher(2007)]{misc_poker_hand_158}
Robert Cattral and Franz Oppacher.
\newblock {Poker Hand}.
\newblock UCI Machine Learning Repository, 2007.
\newblock {DOI}: https://doi.org/10.24432/C5KW38.

\bibitem[Elwell and Polikar(2011)]{elwell2011incremental}
Ryan Elwell and Robi Polikar.
\newblock Incremental learning of concept drift in nonstationary environments.
\newblock \emph{IEEE transactions on neural networks}, 22\penalty0 (10):\penalty0 1517--1531, 2011.

\bibitem[Zliobaite(2013)]{zliobaite2013good}
Indre Zliobaite.
\newblock How good is the electricity benchmark for evaluating concept drift adaptation.
\newblock \emph{arXiv preprint arXiv:1301.3524}, 2013.

\bibitem[Blackard(1998)]{misc_covertype_31}
Jock Blackard.
\newblock {Covertype}.
\newblock UCI Machine Learning Repository, 1998.
\newblock {DOI}: https://doi.org/10.24432/C50K5N.

\bibitem[Liu et~al.(2022)Liu, Lu, Song, Xuan, and Zhang]{liu2022concept}
Anjin Liu, Jie Lu, Yiliao Song, Junyu Xuan, and Guangquan Zhang.
\newblock Concept drift detection delay index.
\newblock \emph{IEEE Transactions on Knowledge and Data Engineering}, 35\penalty0 (5):\penalty0 4585--4597, 2022.

\bibitem[Hinder et~al.(2023)Hinder, Vaquet, Brinkrolf, and Hammer]{hinder2023hardness}
Fabian Hinder, Valerie Vaquet, Johannes Brinkrolf, and Barbara Hammer.
\newblock On the hardness and necessity of supervised concept drift detection.
\newblock In \emph{ICPRAM}, pages 164--175, 2023.

\bibitem[Souza et~al.(2020)Souza, dos Reis, Maletzke, and Batista]{souza2020challenges}
Vinicius~MA Souza, Denis~M dos Reis, Andre~G Maletzke, and Gustavo~EAPA Batista.
\newblock Challenges in benchmarking stream learning algorithms with real-world data.
\newblock \emph{Data Mining and Knowledge Discovery}, 34\penalty0 (6):\penalty0 1805--1858, 2020.

\bibitem[Cruz et~al.(2018)Cruz, Sabourin, and Cavalcanti]{cruz2018dynamic}
Rafael~MO Cruz, Robert Sabourin, and George~DC Cavalcanti.
\newblock Dynamic classifier selection: Recent advances and perspectives.
\newblock \emph{Information Fusion}, 41:\penalty0 195--216, 2018.

\bibitem[Li et~al.(2024)Li, Yu, Zhang, Luo, and Xie]{li2024concept}
Jinpeng Li, Hang Yu, Zhenyu Zhang, Xiangfeng Luo, and Shaorong Xie.
\newblock Concept drift adaptation by exploiting drift type.
\newblock \emph{ACM Transactions on Knowledge Discovery from Data}, 18\penalty0 (4):\penalty0 1--22, 2024.

\bibitem[Huang et~al.(2006)Huang, Zhu, and Siew]{huang2006extreme}
Guang-Bin Huang, Qin-Yu Zhu, and Chee-Kheong Siew.
\newblock Extreme learning machine: theory and applications.
\newblock \emph{Neurocomputing}, 70\penalty0 (1-3):\penalty0 489--501, 2006.

\bibitem[Xu and Wang(2017)]{xu2017dynamic}
Shuliang Xu and Junhong Wang.
\newblock Dynamic extreme learning machine for data stream classification.
\newblock \emph{Neurocomputing}, 238:\penalty0 433--449, 2017.

\bibitem[Hadsell et~al.(2020)Hadsell, Rao, Rusu, and Pascanu]{hadsell2020embracing}
Raia Hadsell, Dushyant Rao, Andrei~A Rusu, and Razvan Pascanu.
\newblock Embracing change: Continual learning in deep neural networks.
\newblock \emph{Trends in cognitive sciences}, 24\penalty0 (12):\penalty0 1028--1040, 2020.

\bibitem[Raschka(2018)]{raschka2018model}
Sebastian Raschka.
\newblock Model evaluation, model selection, and algorithm selection in machine learning.
\newblock \emph{arXiv preprint arXiv:1811.12808}, 2018.

\bibitem[Cerquitelli et~al.(2019)Cerquitelli, Proto, Ventura, Apiletti, and Baralis]{cerquitelli2019automating}
Tania Cerquitelli, Stefano Proto, Francesco Ventura, Daniele Apiletti, and Elena Baralis.
\newblock Automating concept-drift detection by self-evaluating predictive model degradation.
\newblock \emph{arXiv preprint arXiv:1907.08120}, 2019.

\bibitem[Hittmeir et~al.(2019)Hittmeir, Ekelhart, and Mayer]{hittmeir2019utility}
Markus Hittmeir, Andreas Ekelhart, and Rudolf Mayer.
\newblock On the utility of synthetic data: An empirical evaluation on machine learning tasks.
\newblock In \emph{Proceedings of the 14th International Conference on Availability, Reliability and Security}, pages 1--6, 2019.

\bibitem[Keromytis(2007)]{keromytis2007characterizing}
Angelos~D Keromytis.
\newblock Characterizing self-healing software systems.
\newblock 2007.

\bibitem[Ghosh et~al.(2007)Ghosh, Sharman, Rao, and Upadhyaya]{ghosh2007self}
Debanjan Ghosh, Raj Sharman, H~Raghav Rao, and Shambhu Upadhyaya.
\newblock Self-healing systems—survey and synthesis.
\newblock \emph{Decision support systems}, 42\penalty0 (4):\penalty0 2164--2185, 2007.

\bibitem[Psaier and Dustdar(2011)]{psaier2011survey}
Harald Psaier and Schahram Dustdar.
\newblock A survey on self-healing systems: approaches and systems.
\newblock \emph{Computing}, 91:\penalty0 43--73, 2011.

\bibitem[Garlan and Schmerl(2002)]{garlan2002model}
David Garlan and Bradley Schmerl.
\newblock Model-based adaptation for self-healing systems.
\newblock In \emph{Proceedings of the first workshop on Self-healing systems}, pages 27--32, 2002.

\bibitem[Gheibi et~al.(2021)Gheibi, Weyns, and Quin]{gheibi2021applying}
Omid Gheibi, Danny Weyns, and Federico Quin.
\newblock Applying machine learning in self-adaptive systems: A systematic literature review.
\newblock \emph{ACM Transactions on Autonomous and Adaptive Systems (TAAS)}, 15\penalty0 (3):\penalty0 1--37, 2021.

\bibitem[Torralba and Efros(2011)]{torralba2011unbiased}
Antonio Torralba and Alexei~A Efros.
\newblock Unbiased look at dataset bias.
\newblock In \emph{CVPR 2011}, pages 1521--1528. IEEE, 2011.

\bibitem[Jordon et~al.(2022)Jordon, Szpruch, Houssiau, Bottarelli, Cherubin, Maple, Cohen, and Weller]{jordon2022synthetic}
James Jordon, Lukasz Szpruch, Florimond Houssiau, Mirko Bottarelli, Giovanni Cherubin, Carsten Maple, Samuel~N Cohen, and Adrian Weller.
\newblock Synthetic data--what, why and how?
\newblock \emph{arXiv preprint arXiv:2205.03257}, 2022.

\bibitem[Detommaso et~al.(2024)Detommaso, Bertran, Fogliato, and Roth]{detommaso2024multicalibration}
Gianluca Detommaso, Martin Bertran, Riccardo Fogliato, and Aaron Roth.
\newblock Multicalibration for confidence scoring in llms.
\newblock \emph{arXiv preprint arXiv:2404.04689}, 2024.

\end{thebibliography}
\bibliographystyle{unsrtnat}

\clearpage
\onecolumn
\appendix
\renewcommand \thepart{}
\renewcommand \partname{}

\addcontentsline{toc}{section}{Appendix}
\part{Appendix: Self-Healing Machine Learning: A framework for autonomous adaptation in real-world environments}
\mtcsetdepth{parttoc}{3} 
\parttoc
\newpage

\section{Extended related work}
\label{sec:extended_related_work}

In this section, we describe and contrast our work with other related areas.

\subsection{Comparison to other fields}

\textbf{Concept drift adaptation.} Concept drift adaptation algorithms, a key component of self-
healing ML systems, primarily handle drifts by re-training models on new data \citep{lu2018learning,raza2014adaptive,rabanser2019failing,bayram2022concept,agrahari2022concept} or older, pre-trained stored models \citep{gonccalves2013rcd,alippi2013just,lu2018learning,bach2008paired}. These approaches can be implicit, like continuous retraining, or explicit, based on drift detection in data or model error \citep{agrahari2022concept,gama2004learning,halstead2022analyzing}. Drift detection methods compare distributions, analyze data sequentially, or use statistical process control \citep{souza2020challenges}. For instance, the DDM algorithm \citep{gama2004learning} has in-control, warning, and out-of-control states. 

\textbf{Specialized drift handling.} Techniques have been developed for various drift scenarios. For recurring drifts, methods store and reuse historical models \citep{gonccalves2013rcd,alippi2013just,gama2014recurrent}. Streaming data is handled by blind approaches, like sliding windows \citep{widmer1996learning} or adaptive decision trees \citep{hulten2001mining}, and informed approaches with explicit drift detection \citep{gama2014survey,klinkenberg1998adaptive,gama2004learning,baena2006early}. Resampling can repair adaptation errors \citep{halstead2022analyzing}, while dynamic classifier selection finds the best model for each input \citep{cruz2018dynamic}. Methods have been proposed for robustness to noise \citep{wang2022noise}, specific drift types \citep{li2024concept}, and other issues \citep{huang2006extreme,xu2017dynamic}. Recent work explores understanding distribution shifts through latent variable models \citep{masegosa2020analyzing} and other techniques \citep{kulinski2023towards}. Some adaptive methods of re-training the model also include adding more hidden layer to a learner upon detection of a drift \citep{huang2006extreme, xu2017dynamic}. Another area of research closely linked within the field is dynamic selection which attempts to find the most suitable classifier conditional on the covariates \citep{cruz2018dynamic}.

\textbf{On ``repairing concept drift''}. There have been other methods propose that implicitly try to adapt by detecting changes \citep{halstead2022analyzing}. However, these adaptations are still based on the observed empirical distributions as opposed to observing the reason for degradation. By periodically sampling the accuracy of inactive classifers, the authors identify cases where change was missed or misclassifed. However, this falls under the broader umbrella of trying out many pre-determined actions without directly reasoning about the reason for model degradation. 

\textbf{Continual learning}. One might get the impression that self-healing machine learning might bear close resemblance to continual learning. Continual learning focuses on developing models that learn continuously from a stream of data, acquiring, retaining, and transferring knowledge across tasks over time \citep{hadsell2020embracing}. This contrasts strongly with self-healing machine learning. Below, \textbf{we outline seven criteria by which self-healing machine learning and continual learning differ}.

\begin{greycustomblock}
\textbf{Differences between continual learning and self-healing machine learning}.
\begin{enumerate}
    \item \textbf{Objective}. The objectives of the two fields are different. Continual learning aims to learn sequentially from a stream of tasks while mitigating catastrophic forgetting. SHML focuses on autonomously diagnosing and recovering from performance degradation within a single task due to distribution shifts.
    \item \textbf{Knowledge retention}. A core goal of continual learning is to preserve previously acquired knowledge while learning new tasks. SHML does not explicitly aim to retain prior knowledge or acquire new knowledge, but rather to maintain stable performance on the current task by adapting to the reason for degradation.
    \item \textbf{Stability-Plasticity Dilemma}. Continual learning grapples with the trade-off between being plastic enough to learn new tasks and stable enough to remember old ones. In contrast, there is no such dilemma within SHML. 
    \item \textbf{Task expansion}. Continual learning seeks to expand the model's capabilities by increasing the number of tasks it can perform. In contrast, SHML operates on a single well-defined task---ensuring the optimal performance of a model, typically by minimizing empirical risk--- and does not aim to increase the number of tasks. Instead, the focus is on ensuring optimal performance under a single task. 
    \item \textbf{Adaptation mechanism}. The underlying logic or mechanism of adaptation is different. Continual learning typically adapts by modifying model architecture, updating parameters via constrained optimization, using memory replay. In contrast, SHML explicitly adapts by diagnosing the root cause of performance drops and conditioning an adaptation action on the basis of that diagnosis. This explicit mechanism which is conditioned on is not a part of a continual learning system.
    \item \textbf{Shift assumptions}. Continual learning primarily handles shifts across distinct tasks, where the input or output distribution changes between tasks. In contrast, SHML considers shifts within the same task, where the joint distribution might change.
    \item \textbf{Theoretical formalism}. Continual learning is often formalized as a sequence of constrained optimization problems to mitigate interference between tasks. In contrast, SHML is formalized as finding an optimal policy that can propose actions on the basis of diagnoses.
\end{enumerate}

\end{greycustomblock}

\subsection{Comparison on a component level}
Here, we focus on some related work within each sub-component. Table \ref{tab:v2} provides key related work within each column. We do not focus separately on \textit{adaptation} and \textit{testing} because adaptation is covered above, whereas testing is simply a stage which helps to evaluate proposed actions.

\vspace{-2mm}
\begin{table}[h]
\tiny
\centering
\begin{tabular}{
    >{\centering\arraybackslash}p{1.3cm} |
    >{\centering\arraybackslash}p{0.7cm}
    >{\centering\arraybackslash}p{2.3cm}
    >{\centering\arraybackslash}p{1.3cm}
    >{\centering\arraybackslash}p{4.7cm}
    >{\centering\arraybackslash}p{1cm}
}\toprule
\textbf{component} & \textbf{Definition} & \textbf{Methodological contribution} & \textbf{Experimental contribution} & \textbf{Main practical implications} & \textbf{Related work} \\
\midrule
\multirow{2}{*}{\textbf{Monitoring}} & Eq. \ref{eq:monitoring} &  \textcolor{gray}{n/a} & Sec. \ref{sec:monitoring_exp} & More robust models against false positive drift detection (Sec.\ref{sec:monitoring_exp})  & \citep{gama2004learning, baena2006early, dries2009adaptive, wang2015concept, gonccalves2014comparative, bayram2022concept} \\
\addlinespace
\multirow{2}{*}{\textbf{Diagnosis}} & Eq. \ref{eq:diagnosis} & Def. \ref{def:certainty}, Def. \ref{def:optimaldiagnosis}, Prop. \ref{prop:optimal_diagnosis} & Sec. \ref{sec:diagnosis_exp} & Established framework to reason about \textit{why} models degrade (Sec. \ref{sec:reason_diagnosis}) & \citep{budhathoki2021did, zhang2022did}\\ 
\addlinespace
\multirow{2}{*}{\textbf{Adaptation}} & Eq. \ref{eq:adaptation_policy} & Asmp. \ref{asmp:independent} & Sec. \ref{sec:adaptation_exp} & Targeted adaptation by identifying the root cause (Sec. \ref{sec:reason_diagnosis}) & \citep{halstead2022analyzing, gama2014survey, li2024concept, yu2022learn, yu2024online, lu2018learning, museba2021recurrent} \\
\addlinespace
\textbf{Testing} & Eq. \ref{eq:testing} & Def. \ref{def:backtesting_window} & Sec. \ref{sec:testing_exp} & Principled framework to evaluate actions (Sec. \ref{sec:testing_exp}) & \citep{raschka2018model, cerquitelli2019automating, hittmeir2019utility} \\
\addlinespace
\hline
\multirow{2}{*}{\textbf{Self-healing ML}} & Eq. \ref{eq:optimal_policy}, Sec. \ref{sec:four_stages} & Fig. \ref{fig:adaptation_options}, Fig. \ref{fig:self-healing-design}, Sec. \ref{sec:four_stages}, Sec. \ref{sec:challenges_of_shml}, Sec. \ref{sec:benefit_llms}, Sec. \ref{sec:design_hllm}  & Sec. \ref{sec:experimental_comparison} & New self-healing paradigm (Sec. \ref{sec:four_stages}) addressing prior limitations (Sec. \ref{sec:limitations}); first self-healing system (Sec. \ref{sec:hllm}). & \citep{keromytis2007characterizing, ghosh2007self, psaier2011survey, garlan2002model, gheibi2021applying}\\
\bottomrule
\end{tabular}
\caption{Summary table of self-healing ML. Use this as a guiding source to navigate the paper. Related work defines the most similar available work within each component}
\label{tab:v2}
\end{table}

\textbf{Monitoring}. Related work within monitoring largely relates to different statistical techniques for discovering the presence of shifts/drifts or model degradation. We see them as an integral part of SHML. However, they are also actively used by other adaptation methods to trigger adaptation systems.

\textbf{Diagnosis}. The diagnosis component is a core component of SHML. Two primary works are closely related. The first work, ``why did the distribution change?'' \citep{budhathoki2021did}, attempts to factorize the change of the joint distribution into conditional distributions of each variable and attribute some changes to one of the marginals. This is achieved by modeling the change and relationship between variables as a causal mechanism. The second work, ``why did the model fail?'' \citep{zhang2022did} attributes  model performance degradation via a causal mechanism. They assume that distribution shifts are induced due to an intervention in the causal mechanism which results in model performance changes, and uses Shapley values to attribute changes to specific distributions. These two methods are fundamentally different from SHML in multiple respects. First and most important, these works do not propose any actions on the basis of these failures or shifts. The primary goal of both works is to understand why a distribution has changed or a model has failed, attributing it to a causal mechanism, instead of \textit{adapting} the model to perform optimally. Second, the theoretical formalism introduced is substantially different and comes with different properties. Both works operate within the directed acyclic graph (DAG) framework, whereas we operate under a diagnosis component which is defined as a vector over a space of possible reasons. Other key differences relate to the adaptation mechanism, shift assumptions, adaptation assumptions, level of granularity of the diagnosis, level of granularity of the adaptation, or testing.

Recent work has already started coming out on \textit{understanding} distribution shifts \citep{kulinski2023towards}. It is known that understanding why a distribution shift happens is important for mitigating that shift \citep{kulinski2023towards}. Some other people have looked at modeling shifts via latent variable models without relying on access to labels at test time \citep{masegosa2020analyzing}. However, as before, these methods do not share the objective of finding optimal actions for adaptation.

\textbf{Self-healing systems outside ML}. Self-healing systems have been proposed outside of machine learning \citep{keromytis2007characterizing, ghosh2007self, psaier2011survey, garlan2002model, gheibi2021applying}. We view these as inspirations for our work but consider them disparate and separate because none of them touch upon the core problem of machine learning model degradation, and have not been applied in practice.

\subsection{Unique properties of self-healing machine learning}

The core of self-healing machine learning revolves around two primary components: the deployed ML model $f$ and the healing system $\mathcal{H}$. Here, we provide additional clarity on these components and their interactions:

\paragraph{Definition of the Deployed Model $f$} 
The model $f$ represents the deployed machine learning model that we aim to heal. It is the function that makes predictions on input data and whose performance we're trying to maintain and improve. In our viability studies, we demonstrate this framework using logistic regression models as $f$, though the approach generalizes to any predictive model.

\paragraph{Relationship Between $f$ and $\pi$}
While $f$ is the model making predictions, $\pi$ is the adaptation policy---a function that determines what actions to take to modify $f$ based on the diagnosed reasons for its performance degradation. The healing system $\mathcal{H}$ follows policy $\pi$ to output actions $a$ (such as $a_1$: retrain a model or $a_2$: remove corrupted features) which are then implemented onto $f$. Therefore, $\mathcal{H}$ follows policy $\pi$ which helps to determine optimal actions $a$ that change/modulate the deployed ML model $f$.

\paragraph{Practical Implementation}
In our viability studies with H-LLM, the policy $\pi$ is instantiated with an LLM (GPT-4) which uses the diagnosed reasons for model failures (also achieved with an LLM) to propose concrete actions. For instance, if $f$ is a diabetes prediction model and $\pi$ diagnoses that $f$'s performance has degraded due to concept drift, $\pi$ might suggest an action to retrain $f$ with more recent data or to adjust feature weights.

\newpage
\section{$\mathcal{H}$-LLM}
\label{sec:appendix_hllm}

This section provides more details on $\mathcal{H}$-LLM.

\subsection{Algorithm and details H-LLM}

The algorithm of $\mathcal{H}$-LLM is presented in Algorithm 1.  

\begin{wrapfigure}[24]{r}{0.50\textwidth} 
\begin{minipage}{0.50\textwidth}
\begin{algorithm}[H]
\caption{$\mathcal{H}$-LLM}
\DontPrintSemicolon
\SetKwInOut{Input}{Require}
\SetKwInOut{Output}{Output}
\Input{$f, \mathcal{H}_M, \mathcal{H}_D, \mathcal{H}_A, \mathcal{H}_T, \tau, m, k$}
\BlankLine
$t \gets 1$\;
$t^* \gets \text{null}$\; 
\While{$t \leq T$}{ 
    $s_t \gets \mathcal{H}_M(\{(\mathbf{x}_i, y_i)\}_{i=1}^t)$\; 
    \If{$s_t > \tau$ and $t^* = \text{null}$}{ 
        $t^* \gets t$\; 
    }
    \If{$t^* \neq \text{null} \text{ and } t - t^* > \text{Detection Window}$}{
        $t' \gets t$\; 
        $\mathbf{v} \gets \mathcal{E}((\mathbf{x},y) \sim \mathcal{P}_{t^*}, c \in \mathcal{C})$\;
        \For{$i = 1$ to $k$}{ 
            $\mathbf{z}_i \sim l(\cdot | \mathbf{v})$\; 
        }
        $\hat{\zeta} \gets \{\mathbf{z}_i\}_{i=1}^k$\; 
        \For{$j = 1$ to $m$}{ 
            $a_j \sim l(\cdot | \hat{\zeta})$\; 
        }
        $\hat{a}^* \gets \argmin_{j \in [m]} \mathcal{H}_T(f^{a_j}, \hat{\mathcal{D}}_{[t^*, t']})$\; 
        $f \gets f^{\hat{a}^*}$\; 
        $t^* \gets \text{null}$\; 
    }
    $t \gets t + 1$\; 
}
\end{algorithm}
\label{alg_hllm}
\end{minipage}
\end{wrapfigure}

\textbf{Extended discussion}.

\textbf{I. Monitoring}. We use statistical drift detection algorithms to monitor model degradation from $k$ previous time points \citep{wang2015concept, dries2009adaptive, gonccalves2014comparative}. Diagnosis is triggered if a shift is detected. For our practical implementation, we use the Drift Detection Method, a popular method for binary drift detection classification. 

\textbf{II. Diagnosis}. Upon detection, $\mathcal{H}$-LLM uses an extractor function $\mathcal{E}: \mathcal{D}^* \rightarrow \mathcal{D}_c$ to transform the dataset information into an information vector $\textbf{v}$. This extractor function is a mapping from the dataset to information about the dataset. It takes information before the shift happened and calculates summary statistics, such as the mean, average, standard deviation, percentiles, etc., within each column, as well as the performance of a deployed model $f$ under various data slices. For instance, this would also involve looping over all variables, binning them into 10 discrete values and calculating the average model performance across each bin. This is done to ensure that the information contained within the information vector are both summary statistics, i.e. how the data has changed, as well as specific performance metrics within data slices. The information is used to generate specific diagnoses as to what has happened. We observe, for instance, that summary statistics are extremely helpful if there are any larger deviations from average, as the diagnosis module within $\mathcal{H}$-LLM picks up on these clues. This information is provided as textual information to the next step which is the diagnosis phase.

The information vector is used as a textual representation within the next LLM call to generate concrete hypotheses / diagnoses about the reason for the $f$ failure. This is where additional context $c$ could be added, if available, such as the presence of any particular exogeneous events that could have affected model performance and could guide the diagnosis search. In the future, we envision that the additional context could be acquired by the system itself. This is used in a chain-of-thought module with self-reflection, where $k$ candidates for degradation are generated along with associated scores. We employ different ``diagnosis'' modules within $\mathcal{H}$-LLM. For instance, there is a specific diagnosis module that only attempts to find which covariates are responsible for degradation. The system level instruction could be as follows: ``Find covariates that are responsible for the model degrading''. However, we also supplement this with more broader reasons for degradation, such as ``Find and hypothesize reasons that could have resulted in model degradation , given the information provided''. We provide three prompt templates used to hypothesize issues in Section  \ref{appendix:prompts_diagnosis}. We sample such prompts $m$ times using MC sampling. The chain-of-thought and self-reflection is implemented by calling $\mathcal{H}$-LLM multiple times to re-consider the evidence and hypotheses. Table \ref{tab:diagnoses} illustrates diagnoses generated by $\mathcal{H}$-LLM.

\vspace{2mm}

\textbf{III. Adaptation}. Conditioned on the empirical diagnosis distribution $\hat{\zeta}$, $\mathcal{H}$-LLM generates $m$ candidate adaptation actions $\{a_j\}_{j=1}^m  \sim l(\cdot | \hat{\zeta})$ via CoT-based MC sampling. Specifically, we focus on three kinds of adaptation actions.

\begin{itemize}
    \item Generic adaptation actions
    \item Adaptation actions by removing corrupted data
    \item Adaptation actions by training multiple models for subsets of the data
\end{itemize}

This is reflected in three different prompt templates in Appendix \ref{appendix:prompts_adaptation}.

\textbf{Generic adaptation actions}. The first attempt is to find generic adaptation actions that the diagnosis module suggests on the basis of the identified evidence. These are often quite generic, for instance, ``add new covariates that could control for the seasonality''. In many such cases, within the confines of our experiments, we do not have the ability to resolve the issues on the basis of the proposed solutions. Therefore, we add two more directly actionable adaptation actions that are also attempted by $\mathcal{H}$-LLM \textit{after} the generic adaptation actions have been attempted.

\textbf{Adaptation actions by removing corrupted data}. Another concrete adaptation action is that we instruct $\mathcal{H}$-LLM to hypothesize specific data slices that might have been corrupted. This could be, for instance, biologically implausible values (negative insulin, age > 200, implausible hba1c levels), mismatches (e.g. height, weight do not match BMI), sudden shifts in the data (ages change from averages of 30 to 60), and other. The adaptation module then proposes \textit{which data slices to remove} to achieve superior performance. These suggested data slices are then removed and re-trained in the next batch.

\textbf{Adaptation actions by training multiple models}. The final concrete adaptation action is to propose specific data slices where the model might have drifted within that slice. This is done because instead of \textit{global drifts}, models sometimes drift locally and require complete re-training of the new dataset.

Example outputs of such strategies are presented in Appendix \ref{sec:example_outputs}. We note, however, that, in reality, there might be many possible adaptation actions, such as re-training the model on combinations of old and historical data, re-using old models, re-using parts of old models, creating custom ensembles, changing models altogether, changing hyperparameters or adding regularization terms, building different models for different samples based on their difficulty, switching between symbolic and predictive ML models in the face of high uncertainty, and many more. Our approach is to introduce only the primary few ways with the hope of extending this in the future.

As before, because the actions sampled from $l$ are textual representations, we use an interpreter function to execute each $a$ on $f$.

\textbf{IV. Testing}. The sampled actions are evaluated on an empirical dataset (Def. \ref{eq:testing}), and the empirically optimal action $\hat{a}^* = \argmin_{j \in [m]} R(a)$ is implemented. Limited access to the shifted DGP complicates evaluating $R(a)$, but it can be approximated with empirical data $\hat{\mathcal{D}}_{\text{test}}$ by using \textit{a backtesting window}, \textit{continuously incoming data}, or \textit{historical data}. In all of our experiments, we use a backtesting window. However, other strategies could be attempted. The different strategies are explained in greater detail in Appendix \ref{sec:evaluation_strategies_testing}.

\textbf{Goal}. This procedure aims to approximate the optimal action (Def. \ref{eq:optimal_policy}). We remark that there might be better adaptation policies that could be suggested on the basis of evidence. Likewise, there might be better diagnosis modules available. We see $\mathcal{H}$-LLM as a first attempt to integrate self-healing into ML.

\subsection{{Prompt templates used}}
\label{appendix:prompt_templates_general}
The following are some of the primary prompt templates used within $\mathcal{H}$-LLM.
\subsubsection{Prompts related to diagnosis}
\label{appendix:prompts_diagnosis}

\begin{lstlisting}[language=Python, caption=Generic diagnosis prompt]
"""
        Given the following information:
        - Data before the shift: {x_before.describe()}
        - Data after the shift: {x_after.describe()}
        - Context: {context}
        - Model performance across each covariate before the shift: {covariate_performance_before}
        - Model performance across each covariate after the shift: {covariate_performance_after}

        You know for a fact that the model has degraded. Analyze the covariates and think why.
        
        Review each existing covariate and provide a hypothesis on whether it might have changed and resulted in the model underperforming. Provide evidence for each hypothesis and the strength of belief for each covariate.

        Format your output as follows:
        Covariate: <covariate>; Hypothesis: ...; Evidence: ...; Strength of belief: ...

        After reviewing all the covariates, assign a confidence score for each covariate indicating your confidence level that the covariate has issues. Use the following confidence levels: extremely confident, confident, somewhat confident, unsure, completely unsure. Only use 'extremely confident' if you have overwhelming evidence for your decision. Prioritize making more confident beliefs. Avoid being uncertain. Use the available inputs as well as the data to make the best possible decision. Your goal is to be correct while reducing entropy of the probabilities (be confidently correct).
        """
"""
\end{lstlisting}

\begin{lstlisting}[language=Python, caption=Generic diagnosis prompt for searching combinations of covariates responsible for degradation]
"""
        Given the following information:
        - Data before the shift: {x_before.describe()}
        - Data after the shift: {x_after.describe()}
        - Context: {context}
        - Model performance across each covariate before the shift: {covariate_performance_before}
        - Model performance across each covariate after the shift: {covariate_performance_after}

        You know for a fact that the model has degraded. Analyze the covariates and think why.
        
        Then, hypothesize {n} possible covariates or combinations of covariates that might have changed and resulted in the model underperforming. Each possibility should be mutually exclusive. For example, [X1] is one possibility, [X2] is another, and [X1, X2] is a third.
        """
\end{lstlisting}

\begin{lstlisting}[language=Python, caption=Diagnosis probability prompt]
"""
        Given the following information:
        - Data before the shift: {x_before.describe()}
        - Data after the shift: {x_after.describe()}
        - Context: {context}
        - Initial hypotheses on covariates or combinations of covariates that might have changed and resulted in model underperformance: {covariate_guesses}

        Summarize the provided hypotheses and assign probabilities to each hypothesis such that the total probability sums to 100%. 

        Your probabilities should be reflective of the evidence and data. Uniform probabilities (10% each) implies no knowledge. 100% probability on one covariate implies certain belief. Prioritize making more confident beliefs. Avoid being uncertain. Use the available inputs as well as the data to make the best possible decision. Your goal is to be correct while reducing entropy of the probabilities (be confidently correct).

        Format each hypothesis and its probability as follows:
        Hypothesis: [<covariate1>, <covariate2>, ...]; Probability: <probability>
        """
\end{lstlisting}

\subsubsection{Prompt templates related to adaptation}
\label{appendix:prompts_adaptation}

\begin{lstlisting}[language=Python, caption=Generic adaptation prompt]
        """

       Suppose the following hypothesized issues in the dataset: {issues}
        Data before the shift: {x_before.describe()}
        Data after the shift: {x_after.describe()}

        Suggest {self.n} possible reasons why the model might have failed on the basis of the issues presented. These reasons should be hypotheses that might have resulted in the degradation of the model if such hypotheses turn out to be true. These hypotheses also have to be likely on the basis of the issues provided. These hypotheses should be specific to the data itself. The goal is to track down specific changes within the data that could have resulted in the model degradation.

        Format your output as follows:

        Hypothesis: <>; Evidence: <>
        
        """
        
\end{lstlisting}

\begin{lstlisting}[language=Python, caption=Subgroup adaptation prompt]
        
        f"""
        Suppose the following issues in the dataset: {issues}
        Data before the shift: {x_before.describe()}
        Data after the shift: {x_after.describe()}

        Suggest {self.n} possible subgroups that if removed could result in better performance for the model.
        The subgroups can be single (e.g. X > x) but could also be multiple combinations (e.g. X > x and Y < y)
        """
        
\end{lstlisting}

\begin{lstlisting}[language=Python, caption=Subgroup retrain prompt]
        """Suggest solutions based on removing data"""
        task = f"""
        Suppose the following issues in the dataset: {issues}
        Data before the shift: {x_before.describe()}
        Data after the shift: {x_after.describe()}

        Suggest {self.n} possible subgroups that might need re-training. That is, fitting a separate model on these subgroups might result in superior performance. 
        The subgroups can be single (e.g. X > x) but could also be multiple combinations (e.g. X > x and Y < y)
        """
\end{lstlisting}

\subsection{Example outputs of H-LLM}
\label{sec:example_outputs}

\begin{lstlisting}[language=Python, caption={Output for guesses which covariates have shifted. This example showcases that when there is little evidence that any specific covariate has shifted more than the others, H-LLM outputs a uniform hypothesis}]
"""
Covariate: HbA1c; Hypothesis: The distribution of HbA1c has shifted after the shift, which might have resulted in the model underperforming; Evidence: The mean of HbA1c has slightly increased from 5.699 to 5.730, and the standard deviation has also increased from 0.492 to 0.505. The model performance across different ranges of HbA1c has significantly dropped after the shift; Strength of belief: Extremely Confident

Covariate: FastingGlucose; Hypothesis: The distribution of FastingGlucose has shifted after the shift, which might have resulted in the model underperforming; Evidence: The mean of FastingGlucose has slightly increased from 99.639 to 100.519, and the standard deviation has slightly decreased from 15.341 to 15.280. The model performance across different ranges of FastingGlucose has significantly dropped after the shift; Strength of belief: Extremely Confident

Covariate: Age; Hypothesis: The distribution of Age has shifted after the shift, which might have resulted in the model underperforming; Evidence: The mean of Age has slightly increased from 49.721 to 49.900, and the standard deviation has slightly increased from 12.066 to 12.310. The model performance across different ranges of Age has significantly dropped after the shift; Strength of belief: Extremely Confident

Covariate: BMI; Hypothesis: The distribution of BMI has shifted after the shift, which might have resulted in the model underperforming; Evidence: The mean of BMI has slightly decreased from 25.075 to 24.985, and the standard deviation has slightly decreased from 4.015 to 3.903. The model performance across different ranges of BMI has significantly dropped after the shift; Strength of belief: Extremely Confident

Covariate: BloodPressure; Hypothesis: The distribution of BloodPressure has shifted after the shift, which might have resulted in the model underperforming; Evidence: The mean of BloodPressure has slightly increased from 119.417 to 120.106, and the standard deviation has increased from 14.177 to 15.205. The model performance across different ranges of BloodPressure has significantly dropped after the shift; Strength of belief: Extremely Confident

Covariate: Cholesterol; Hypothesis: The distribution of Cholesterol has shifted after the shift, which might have resulted in the model underperforming; Evidence: The mean of Cholesterol has slightly decreased from 201.108 to 198.927, and the standard deviation has slightly increased from 39.557 to 40.269. The model performance across different ranges of Cholesterol has significantly dropped after the shift; Strength of belief: Extremely Confident

Covariate: Insulin; Hypothesis: The distribution of Insulin has shifted after the shift, which might have resulted in the model underperforming; Evidence: The mean of Insulin has slightly increased from 84.667 to 84.925, and the standard deviation has slightly decreased from 45.090 to 45.057. The model performance across different ranges of Insulin has significantly dropped after the shift; Strength of belief: Extremely Confident

Covariate: PhysicalActivity; Hypothesis: The distribution of PhysicalActivity has shifted after the shift, which might have resulted in the model underperforming; Evidence: The mean of PhysicalActivity has slightly decreased from 3.080 to 3.020, and the standard deviation has slightly increased from 0.994 to 1.014. The model performance across different ranges of PhysicalActivity has significantly dropped after the shift; Strength of belief: Extremely Confident.
"""

#### Generated probability outputs based on the above

# Prompt for getting these hypothesses:

"""
Your goal is to provide a summary of probabilities on likelihood of each of the covariates resulting in the model failing
"""

Outputs:

Hypothesis: [HbA1c]; Probability: 12.5%
Hypothesis: [FastingGlucose]; Probability: 12.5%
Hypothesis: [Age]; Probability: 12.5%
Hypothesis: [BMI]; Probability: 12.5%
Hypothesis: [BloodPressure]; Probability: 12.5%
Hypothesis: [Cholesterol]; Probability: 12.5%
Hypothesis: [Insulin]; Probability: 12.5%
Hypothesis: [PhysicalActivity]; Probability: 12.5%

\end{lstlisting}

\begin{lstlisting}[language=Python, caption=Generic issue response which identifies overall issues within the dataset.]
"""
1. Issue: Increase in standard deviation; Evidence: The standard deviation for most of the variables has increased in the new dataset, indicating increased variability in the data; Confidence: 8
2. Issue: Change in mean values; Evidence: The mean values for most of the variables have changed, which could indicate a shift in the population being studied; Confidence: 7
3. Issue: Change in minimum and maximum values; Evidence: The minimum and maximum values for most of the variables have changed, which could indicate outliers or a change in the range of data; Confidence: 7
4. Issue: Change in quartile values; Evidence: The 25%, 50%, and 75% quartile values for most of the variables have changed, indicating a change in the distribution of the data; Confidence: 7
5. Issue: Negative values for Insulin and PhysicalActivity; Evidence: The minimum values for Insulin and PhysicalActivity are negative, which is not possible in a real-world scenario and indicates data errors; Confidence: 10
6. Issue: Change in distribution of data; Evidence: The changes in mean, standard deviation, and quartile values suggest a change in the distribution of the data, which could affect the model's performance; Confidence: 8
7. Issue: Increase in age range; Evidence: The minimum and maximum age has increased, indicating a broader age range in the new dataset; Confidence: 6
8. Issue: Decrease in BMI; Evidence: The mean BMI has decreased in the new dataset, which could indicate a change in the health status of the population being studied; Confidence: 6
9. Issue: Increase in Blood Pressure; Evidence: The mean Blood Pressure has increased in the new dataset, which could indicate a change in the health status of the population being studied; Confidence: 6
10. Issue: Decrease in Cholesterol; Evidence: The mean Cholesterol has decreased in the new dataset, which could indicate a change in the health status of the population being studied; Confidence: 6
"""
\end{lstlisting}

\begin{lstlisting}[language=Python, caption=Example response about which subgroups to remove]
"""
1. Subgroup: Individuals with age > 85; Reason: The maximum age has increased in the new dataset, which could be due to outliers or errors in the data.
2. Subgroup: Individuals with age < 12; Reason: The minimum age has decreased in the new dataset, which could be due to outliers or errors in the data.
3. Subgroup: Individuals with Insulin < 0; Reason: Negative values for Insulin are not possible in a real-world scenario and indicate data errors.
4. Subgroup: Individuals with PhysicalActivity < 0; Reason: Negative values for PhysicalActivity are not possible in a real-world scenario and indicate data errors.
5. Subgroup: Individuals with BMI < 12.8; Reason: The minimum BMI has decreased in the new dataset, which could be due to outliers or errors in the data.
6. Subgroup: Individuals with BloodPressure < 70.5; Reason: The minimum Blood Pressure has decreased in the new dataset, which could be due to outliers or errors in the data.
7. Subgroup: Individuals with Cholesterol < 66.3; Reason: The minimum Cholesterol has decreased in the new dataset, which could be due to outliers or errors in the data.
8. Subgroup: Individuals with FastingGlucose > 154; Reason: The maximum FastingGlucose has increased in the new dataset, which could be due to outliers or errors in the data.
9. Subgroup: Individuals with HbA1c < 4; Reason: The minimum HbA1c has decreased in the new dataset, which could be due to outliers or errors in the data.
10. Subgroup: Individuals with BMI > 39.6; Reason: The maximum BMI has increased in the new dataset, which could be due to outliers or errors in the data.
"""
\end{lstlisting}

\begin{lstlisting}[language=Python, caption=Example response about which subgroups to retrain the model on]
"""
1. Subgroup: Individuals with age > 85; Reason: The maximum age has increased in the new dataset, indicating a broader age range.
2. Subgroup: Individuals with age < 12; Reason: The minimum age has decreased in the new dataset, indicating a broader age range.
3. Subgroup: Individuals with BMI < 12.83; Reason: The minimum BMI has decreased in the new dataset, indicating a change in the health status of the population.
4. Subgroup: Individuals with BMI > 37.07; Reason: The maximum BMI has increased in the new dataset, indicating a change in the health status of the population.
5. Subgroup: Individuals with Blood Pressure > 166.85; Reason: The maximum Blood Pressure has increased in the new dataset, indicating a change in the health status of the population.
6. Subgroup: Individuals with Blood Pressure < 70.49; Reason: The minimum Blood Pressure has decreased in the new dataset, indicating a change in the health status of the population.
7. Subgroup: Individuals with Cholesterol < 44.64; Reason: The minimum Cholesterol has decreased in the new dataset, indicating a change in the health status of the population.
8. Subgroup: Individuals with Cholesterol > 347.08; Reason: The maximum Cholesterol has increased in the new dataset, indicating a change in the health status of the population.
9. Subgroup: Individuals with Insulin < -79.81; Reason: The minimum Insulin has decreased in the new dataset, indicating a data error.
10. Subgroup: Individuals with PhysicalActivity < -0.30; Reason: The minimum PhysicalActivity has decreased in the new dataset, indicating a data error.
"""
\end{lstlisting}

\subsection{Evaluation strategies of self-healing algorithms}
\label{sec:evaluation_strategies_testing}

Self-healing relies on a testing phase, i.e. the ability to test whether the proposed actions perform well on a test dataset. However, given that the distribution has shifted and the historical data no longer represents the new distribution, one might ask: how can we test models on this new distribution? The primary alternative used in our experiments is a backtesting window which we define formally below.

\begin{definition}[Backtesting Window]
\label{def:backtesting_window}
Let $\{\mathcal{P}_t\}_{t \in \mathbb{T}}$ be a sequence of probability measures on $\mathcal{X} \times \mathcal{Y}$, and suppose a distributional shift occurs at time $t^* \in \mathbb{T}$, i.e., $\mathcal{P}_{t^*} \neq \mathcal{P}_{t^*-1}$. Let $t' > t^*$ be the time at which the self-healing system detects the shift. The \textbf{backtesting window} is the time interval $[t^*, t']$ satisfying the following properties:
\begin{align*}
&\forall t \in [t^*, t']: (\mathbf{x}_t, y_t) \sim \mathcal{P}_{t^*}, \\
&\forall t \in [t^*, t']: (\mathbf{x}_t, y_t) \not\sim \mathcal{P}_{t^*-1}.
\end{align*}

\end{definition}

We notice that the backtesting window is a unique property that arises uppon sudden shifts in the data generating process. Specifically, because we assume only two data generating processes and a transition between them at time point $t$, then all points $k$ where $k > t$ will be from the new DGP and all points $k < t$ will be from the old DGP. Since a drift detection algorithm requries some time to detect the drift, by the time a drift has been detected, we have some collected data from the new distribution which we call the \textit{backtesting window}. We can therefore optimize our actions on this specific window of the dataset. 

Clearly, this does not hold when the assumptions about the nature of the shift change. In such a case, we could always use continuously incoming streaming data. Upon the arrival of each new batch, we can test each proposed action and validate it, consistently upgrading and using the actions that perform well on the most recent batch of data. This strategy assumes that the labels are almost immediately available at prediction time. If not, another strategy employed could be to test such actions on the mot recent available data with labels.

Other approaches could include generating synthetic data to imitate the new shift with labels or using historical data by de-biasing it. However, these are experimental approaches which need further validation.

\subsection{Computational notes}
\textbf{Computational overhead.} SHML methods have larger overhead than reason-agnostic approaches due to the self-healing system (LLM pipeline) identifying model failure reasons. Practically, it takes 20-40 seconds to implement a full pipeline and correct a model upon drift detection. This overhead is negligible for real-world systems given the benefits. Overhead may vary across systems.

\textbf{Sample efficiency.} No differences exist as failure detection doesn't depend on sample size, but on self-healing pipeline complexity.

\newpage
\section{Case study design}
\label{sec:experiments_appendix}

Code can be found at: \url{https://github.com/pauliusrauba/Self_Healing_ML} or \url{https://github.com/vanderschaarlab/Self_Healing_ML}

\subsection{Details on the experimental setup}
\textbf{Experimental setup}. To evaluate the performance of self-healing systems, we require to manipulate the data generating process (DGP) and ask ``what-if'' questions. Real-world datasets, while valuable, do not offer control over the DGP and come with pre-embedded biases that can implicitly affect detection systems \citep{torralba2011unbiased}. In contrast, by using synthetic data to control the DGP, we can run controlled \textit{in silico} experiments and perform viability studies \citep{jordon2022synthetic}. Furthermore, the overwhelming majority of model adaptation methods are designed for tabular data (refer to Sec. \ref{sec:related_work} and Sec. \ref{sec:extended_related_work}) which includes our benchmarks (see Sec. \ref{sec:experimental_comparison}). Therefore, we simulate a diabetes prediction task \citep{smith1988using, mujumdar2019diabetes, sarwar2018prediction}. We perfectly mimic the introduced setup in Sec. \ref{sec:degradation}. Our goal is to predict the presence of diabetes $Y_t \in \{0, 1\}$ at each time point $t$ for a set of $n$ observations, generated according to a (changing) pre-specified DGP $\log \left( \frac{P(Y_t = 1 \mid X_t)}{P(Y_t = 0 \mid X_t)} \right) = \alpha_t + \sum_{k \in K} \beta_{t,k} X_{t, k} + \epsilon_t$, where $K$ includes relevant covariates such as Age or BMI, $\beta_{t,k}$ are time-varying covariates and $\epsilon_t \sim \mathcal{N}(0, \sigma^2)$ is a noise component. 

We generated synthetic data for the diabetes prediction task. Each feature is sampled from a normal distribution with specified parameters:

\begin{itemize}
    \item Hemoglobin A1c (HbA1c) levels are sampled from a normal distribution: $\text{HbA1c} \sim \mathcal{N}(5.7, 0.5^2)$.
    \item Fasting Glucose levels are sampled from a normal distribution: $\text{Fasting Glucose} \sim \mathcal{N}(100, 15^2)$.
    \item Age is sampled from a normal distribution: $\text{Age} \sim \mathcal{N}(50, 12^2)$.
    \item Body Mass Index (BMI) is sampled from a normal distribution: $\text{BMI} \sim \mathcal{N}(25, 4^2)$.
    \item Blood Pressure is sampled from a normal distribution: $\text{Blood Pressure} \sim \mathcal{N}(120, 15^2)$.
    \item Cholesterol levels are sampled from a normal distribution: $\text{Cholesterol} \sim \mathcal{N}(200, 40^2)$.
    \item Insulin levels are sampled from a normal distribution: $\text{Insulin} \sim \mathcal{N}(85, 45^2)$.
    \item Physical Activity is sampled from a normal distribution: $\text{Physical Activity} \sim \mathcal{N}(3, 1^2)$.
\end{itemize}

The observations \( X \) are constructed as a matrix where each row is an instance of the generated features. The outcomes are then determined by running the model through a logistic regression and obtaining a binary outcome value.

\subsection{Details on viability studies}

\subsubsection{Viability Study I}

\textbf{Viability Study I}. To simulate covariate shift and introduce data corruption, we follow these steps:

\begin{enumerate}
    \item Generate two datasets with different coefficients and noise parameters:
    \begin{itemize}
        \item The first dataset with \( n_1 = 100,000 \) samples, coefficients \( \beta_1 = [0.3, 0.0075, -0.01, 0.05, 0.04, -0.03, -0.02, -0.1] \), and noise \( \epsilon_1 \sim \mathcal{N}(0, 0.2^2) \).
        \item The second dataset with \( n_2 = 100,000 \) samples, coefficients \( \beta_2 = [-0.3, -0.0075, 0.2, -0.05, -0.015, -0.001, 0.02, -2] \), and noise \( \epsilon_2 \sim \mathcal{N}(0, 0.2^2) \).
    \end{itemize}
    \item Split the first dataset into training and testing sets, using a 70/30 split.
    \item Combine the testing set of the first dataset with the entire second dataset to form the complete testing set. The second testing set therefore contains a shift where the transitions between the DGPs happen.
    \item In addition to the shift in the DGP, we introduce outliers in the second dataset by multiplying selected features by an outlier factor that control. By default, it is set to 
    \item This outlier factor corrupts the number of columns corrupted by $k$, and corrupts a percentage of values within the column, denoted as $\tau$.
    \item The shift index is determined as the starting point of the second dataset in the combined testing set.
    \item We measure and report the performance of the model during the second data generating process.
\end{enumerate}

\textbf{Summary of parameters}:

\begin{itemize}
    \item \( n_1 = 100,000 \): Number of samples in the first dataset.
    \item \( n_2 = 100,000 \): Number of samples in the second dataset.
    \item \( \beta_1 = [0.3, 0.0075, -0.01, 0.05, 0.04, -0.03, -0.02, -0.1] \): Coefficients for the first dataset.
    \item \( \beta_2 = [-0.3, -0.0075, 0.2, -0.05, -0.015, -0.001, 0.02, -2] \): Coefficients for the second dataset.
    \item \( \epsilon_1 \sim \mathcal{N}(0, 0.2^2) \): Noise for the first dataset.
    \item \( \epsilon_2 \sim \mathcal{N}(0, 0.2^2) \): Noise for the second dataset.
    \item Seed for reproducibility: 42.
    \item Proportion of outliers introduced: 20\%.
    \item Features corrupted varies.
\end{itemize}

\textbf{Viability Study I. Summary of the benchmarks}. Below we describe the key benchmarks.

\textbf{Benchmark 1. New model retraining}. We use the Drift Detection Method (DDM) to monitor changes in the data distribution and retrain the model when a drift is detected. The procedure includes:

\begin{enumerate}
    \item Split the test data into multiple batches.
    \item Train the model on the initial training dataset.
    \item For each batch in the test set:
    \begin{itemize}
        \item Predict the outcomes and calculate the accuracy.
        \item Update the drift detector with the prediction error (1 - accuracy).
        \item If drift is detected:
        \begin{itemize}
            \item Retrain the model on the most recent batch.
        \end{itemize}
    \end{itemize}
\end{enumerate}

\textbf{Benchmark 2. Ensemble method}. This algorithm uses an ensemble of models to improve robustness against data shifts. It combines the predictions of multiple models, each trained on different segments of the data. The procedure involves:

\begin{enumerate}
    \item Initialize an ensemble with a single model trained on the initial training dataset.
    \item Split the test data into multiple batches.
    \item For each batch in the test set:
    \begin{itemize}
        \item Aggregate predictions from all models in the ensemble, weighted by their current accuracies.
        \item Make final predictions based on the weighted aggregation.
        \item Calculate the accuracy and update the drift detector with the prediction error.
        \item If drift is detected:
        \begin{itemize}
            \item Train a new model on the current batch and add it to the ensemble.
            \item Update the weights of all models based on their accuracies.
        \end{itemize}
    \end{itemize}
\end{enumerate}

This method maintains a diverse set of models that can adapt to different aspects of the data distribution, enhancing overall performance and stability.

\textbf{Benchmark 3. Partial updating}. The model is retrained using a sliding window of the most recent data batches. This allows continuous adaptation to recent changes in the data distribution. The steps are:

\begin{enumerate}
    \item Split the test data into multiple batches.
    \item Train the model on the initial training dataset.
    \item Maintain a buffer to store the most recent batches.
    \item For each batch in the test set:
    \begin{itemize}
        \item Predict the outcomes and calculate the accuracy.
        \item Update the buffer with the current batch.
        \item If the buffer exceeds a predefined size (window size), remove the oldest batch.
        \item Retrain the model using the data in the buffer.
    \end{itemize}
\end{enumerate}

\textbf{Our method. $\mathcal{H}$-LLM}. In this example, we use $\mathcal{H}$-LLM to identify corrupted columns and values and identify whether they need removal. The overall setup is as follows:

\begin{enumerate}
    \item Split the test data into multiple batches.
    \item Train the model on the initial training dataset.
    \item Maintain buffers to store the most recent batches and a backtesting window.
    \item For each batch in the test set:
    \begin{itemize}
        \item Predict the outcomes and calculate the accuracy.
        \item Update the buffers with the current batch.
        \item Update the drift detector with the prediction error.
        \item If drift is detected:
        \begin{itemize}
            \item Use the self-healing mechanism to inspect the most recent and previous batches.
            \item Propose multiple adaptation strategies
            \item Select the best adaptation strategy on a backtesting window. 
            \item Retrain the model on the inspected and backtesting data to recover from the detected drift.
        \end{itemize}
    \end{itemize}
\end{enumerate}

In all cases, the optimal strategy was removing a corrupted batch of data, where the amount of corrupted values or their extent varied.

\textbf{Comments on the experimental setup of viability study I}. The goal of this setup is to showcase that blindly retraining the model or using pre-determined actions is not necessarily optimal. In this case, the strategy required is to understand that the model requires full re-training \textit{and} some values have been corrupted which require careful dealing, such as adjustments or removal.

\subsubsection{Viability Studies III - VI}.

\textbf{Viability Study III}. We employ the Drift Detection Method (DDM) and vary the sensitivity parameter indicated on the x-axis. We then calculate the recovery time --- how much time it takes to detect the shift---, as well as the post-intervention accuracy. As discussed in the main paper, this is purely determined by the DDM. For each detected drift, we fully run $\mathcal{H}$-LLM to detect issues and propose adaptation strategies that are tested on a backtesting window. If none of them beat the performance of the current model, the existing model $f$ is deployed.

\textbf{Viability Study IV}. We evaluate how well self-healing systems identify the root causes of problems. We corrupt a proportion of observations (\textit{corruption coefficient}) by multiplying their values by a factor (\textit{outlier factor}) and see if the $\mathcal{H}$-LLM detects issues related to these factors. We output a probability distribution over diagnoses of which variable is corrupted. Knowing the true corrupted variable, we measure the difference between the distributions using KL-Divergence, with lower values indicating better matches between true and estimated corruption. A uniform diagnosis baseline represents random guessing. Here is an example of what it means for the ``true probabilities'' to be corrupted when the corrupted column is ``Age''.

\begin{lstlisting}[caption=An example of true corrupted probabilities]
 true_probabilities = {'Age': 1,
 'HbA1c': 0,
 'FastingGlucose': 0,
 'BMI': 0,
 'BloodPressure': 0,
 'Cholesterol': 0,
 'Insulin': 0,
 'PhysicalActivity': 0}
\end{lstlisting}

Recall that $\mathcal{H}$-LLM produces normalized probability guesses, as shown in Sec. \ref{sec:example_outputs}. Therefore, the obtained predicted guesses of which variable is corrupted in this setup looks as follows:

\begin{lstlisting}[caption=An example of predicted corrupted probabilities]
 predicted_probabilities = {'Age': 0.125,
 'HbA1c': 0.125,
 'FastingGlucose': 0.125,
 'BMI': 0.125,
 'BloodPressure': 0.125,
 'Cholesterol': 0.125,
 'Insulin': 0.125,
 'PhysicalActivity': 0.125}
\end{lstlisting}

When the corruption coefficient is higher, the output looks as follows:

\begin{lstlisting}[caption=An example of true corrupted probabilities]
 predicted_probabilities = {'Age': 0.4,
 'HbA1c': 0.2,
 'FastingGlucose': 0.15,
 'BMI': 0.05,
 'BloodPressure': 0.05,
 'Cholesterol': 0.05,
 'Insulin': 0.05,
 'PhysicalActivity': 0.05}
\end{lstlisting}

Therefore, the KL divergence is computed between these two probability distributions. The KL is the highest when the outputted probability distribution is uniform (first example) and the lowest when it perfectly matches the reference/true probability distribution. It has been shown that with certain techniques, LLMs can generally output calibrated confidence scores or probabilities \citep{detommaso2024multicalibration}. 

The reason why the KL-divergence decreases is because the predicted probabilties put greater relative value on the true corrupted value (i.e. the ``Age'' column in this example) as (i) the outlier factor increases and as (ii) the percent of values corrupted increase.

\textbf{Viability Study V}. We study the sensitivity of SHML adaptation policies by examining how well actions perform based on (i) the number of corrupted values and (ii) the size of the backtesting dataset. Fig. \ref{fig:policy_performance} shows this relationship. The corruption coefficient is described in the overall experimental setup. The size of the backtesting window is the size of the dataset used to evaluat the proposed actions. Recall that $\mathcal{H}$-LLM has three adaptation actions in place: (i) generic; (ii) filtering corrupted data slices; and (iii) training slice-specific models (Appendix \ref{sec:appendix_hllm}). For this experiment, we focus on actions proposed by the second adaptation strategy: filtering corrupted data slices. Each adaptation action is an identified data slice by $\mathcal{H}$-LLM that might be corrupted, the removal of which might improve performance. The following is an example of proposed adaptation actions by the removal of the following queries (each query is a separate candidate adaptation action):

\begin{lstlisting}[caption=Proposed adaptation actions by removing candidate corrupted slices]
['FastingGlucose > 376.145108',
 'Insulin > 320.642677',
 'HbA1c > 21.553946',
 'Age > 187.805319',
 'BMI > 93.998780',
 'BloodPressure > 452.899287',
 'Cholesterol > 757.675355',
 'PhysicalActivity > 11.314583',
 '(HbA1c > 21.553946) & (FastingGlucose > 376.145108)',
 '(Age > 187.805319) & (BMI > 93.998780)',
 '(BloodPressure > 452.899287) & (Cholesterol > 757.675355)',
 '(Insulin > 320.642677) & (PhysicalActivity > 11.314583)',
 '(HbA1c > 21.553946) & (FastingGlucose > 376.145108) & (Age > 187.805319)',
 '(BMI > 93.998780) & (BloodPressure > 452.899287) & (Cholesterol > 757.675355)',
 '(Insulin > 320.642677) & (PhysicalActivity > 11.314583) & (HbA1c > 21.553946)',
 '(FastingGlucose > 376.145108) & (Age > 187.805319) & (BMI > 93.998780)',
 '(BloodPressure > 452.899287) & (Cholesterol > 757.675355) & (Insulin > 320.642677)',
 '(PhysicalActivity > 11.314583) & (HbA1c > 21.553946) & (FastingGlucose > 376.145108)',
 '(Age > 187.805319) & (BMI > 93.998780) & (BloodPressure > 452.899287)']
\end{lstlisting}

Such actions are proposed for each range of values corrupted and evaluated accordingly.

\textbf{Viability study VI}. We study the importance of the testing component (Eq. \ref{eq:testing}) by evaluating $\mathcal{H}$-LLM suggested actions with and without the testing phase (backtesting window) and comparing their accuracies. Fig. \ref{fig:testing_window} shows this relationship. The action with the backtesting window is the action which has received the highest empirical performance on the backtesting window. In contrast, the action proposed by ``no backtesting window'' is the action that is selected as the most likely one by $\mathcal{H}$-LLM without any empirical validation. ``Most likely'' implies that after a few iteration loops, this was the action that was listed as the first action to perform. This showcases the usefulness of having a way to filter out actions with some specific actions. We mimic the setup from study IV where each action is a specific subgroup to filter out to achieve better performance due to the corrupted nature of the data.

\subsection{Other experimental details} 
We note that all experiments were performed using two compute resources: a server with NVIDIA RTX A4000 GPU and 18-Core Intel Core i9-10980XE, as well as an Apple M1 Pro 32GB RAM. We exemplify $\mathcal{H}$-LLM with GPT-4 via an API.

\newpage
\section{Extended experiments}
\label{appendix:extended_experiments}
This section provides a few additional experiments or more detail regarding the experiments presented in the main paper.

\subsection{Monitoring}

\begin{wrapfigure}[15]{r}{0.25\columnwidth}
    \centering
    \vspace*{-10mm}
    \includegraphics[width=\linewidth]{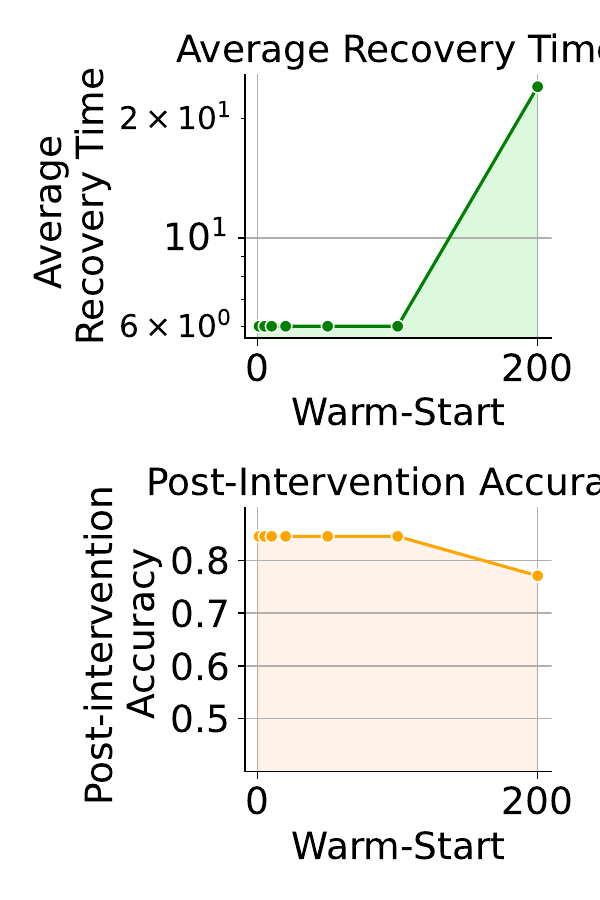}
    \caption{\footnotesize Adaptation strategies of different methods in response to three shifts. }
    \label{fig:updating_appendix}
\end{wrapfigure}

\textbf{Setup}. We vary the warm-star criterion within drift detection methods to evaluate the recovery time and post-intervention accuracy of $\mathcal{H}$-LLM. The warm start parameter is the minimum number of samples required to conclude that a drift has been detected and trigger re-training or self-healing.

\textbf{Discussion}. Fig. \ref{fig:updating_appendix} showcases the relationship between the warm-start parameter and the average recovery tiem and post-intervention accuracy. You see the massive increase in average covery time that jumps when the warm-start is set at a relatively high threshold. This results from a drift detection algoritihm detecting a false positive drift just before the actual drift. However, given the wwarm-star parameter, there was a significant delay in re-triggering the self-healing system. This suggests self-healing systems benefit from lower warm-start parameters in case the drift detection algorithms are sensitive to false positives. This corresponds with a relative drop in the post-intervention accuracy because of the longer time it took to trigger self-healing.

\textbf{Takeaway}. Self-healing systems benefit from lower warm-start parameters in case drift detection systems are sensitive to false positive drifts.-intervention accuracy with smaller thresholds.

\subsection{Diagnosis}

\begin{wrapfigure}[15]{r}{0.6\columnwidth}
    \centering
    \includegraphics[width=\linewidth]{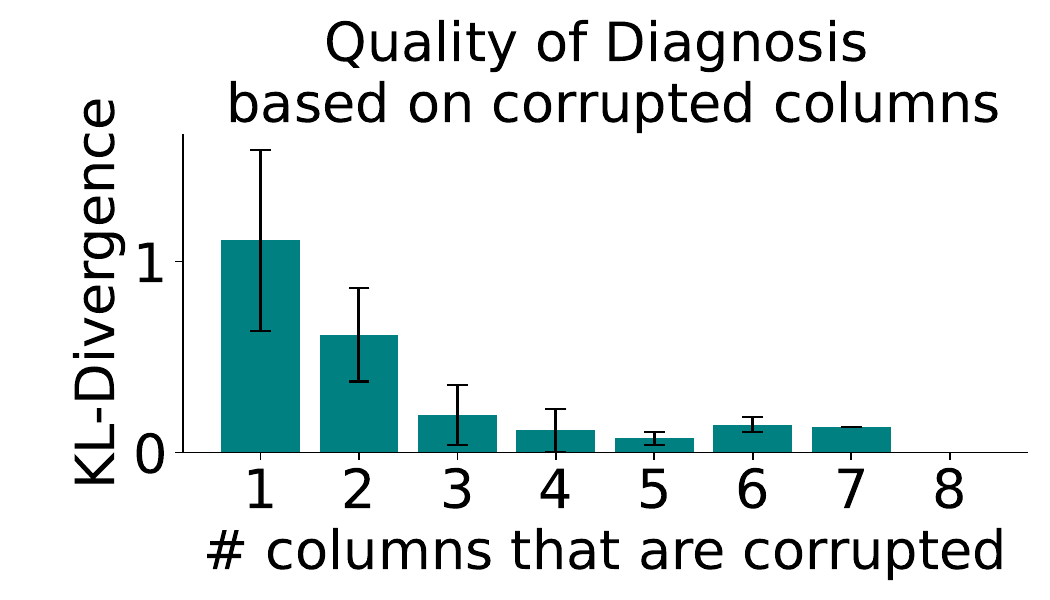}
    \caption{The qualtiy of diagnosis based on n columns. Lower is better}
    \label{fig:policies_corruption_appendix}
\end{wrapfigure}

\textbf{Setup}. In this experiment, instead of corrupting a single variable which is responsible for model degradation, we corrupt $n$ variables to evaluate how well $\mathcal{H}$-LLM can diagnose multiple corrupted values at once. With each corrupted columns, the true corrupted probability changes. For instance, if there are four columns and there is a single corrupted column, the true corruption vector is [1, 0, 0, 0]. If there are four corrupted columns, then it is [0.25, 0.25, 0.25, 0.25]. We use these probabilities and compare them to the corruption probabilities outputted by $\mathcal{H}$-LLM. This is shown in Fig. \ref{fig:policies_corruption_appendix}.

\textbf{Discussion}. This showcases that the more columns are corrupted, the better the predictive diagnosis becomes. For instance, once all columns are corrupted, $\mathcal{H}$-LLM outputs a uniform diagnosis because it has no information given the evidence observed. This exactly corresponds to the true corruption probability, outputting a KL of 0. We notice that the KL generally decreases with the number of corrupted columns for this reason. 

\textbf{Takeaway}. Greater uncertainty results in more uniform diagnosis. However, less uncertainty can make it difficult to directly pinpoint the exact cause, causing more uncertainty.

\newpage
\subsection{Adaptation experiment}
This section expands on the adaptation experiments by providing more variables and values by corruption coefficient and the number of columns corrupted.

\begin{table}[ht]
\tiny
\caption{Accuracy based on the number of corrupted columns, where 5\% of values given a selected column are corrupted on a shifted dataset with a number of corrupted values. Higher is better. (with a corruption coefficient of 0.05)}
\begin{tabular}{lllllllll}
\toprule
 & 1 & 2 & 3 & 4 & 5 & 6 & 7 & 8 \\
\midrule
No retraining & 0.43 ± 0.02 & 0.44 ± 0.02 & 0.44 ± 0.02 & 0.44 ± 0.02 & 0.45 ± 0.02 & 0.45 ± 0.02 & 0.45 ± 0.02 & 0.45 ± 0.02 \\
Partially Updating & 0.72 ± 0.02 & 0.71 ± 0.02 & 0.70 ± 0.02 & 0.69 ± 0.02 & 0.68 ± 0.02 & 0.67 ± 0.02 & 0.65 ± 0.02 & 0.54 ± 0.06 \\
New model training & 0.71 ± 0.02 & 0.70 ± 0.02 & 0.69 ± 0.02 & 0.69 ± 0.02 & 0.68 ± 0.02 & 0.67 ± 0.02 & 0.64 ± 0.02 & 0.50 ± 0.02 \\
Ensemble Method & 0.71 ± 0.02 & 0.70 ± 0.02 & 0.69 ± 0.02 & 0.69 ± 0.02 & 0.68 ± 0.02 & 0.67 ± 0.02 & 0.64 ± 0.02 & 0.50 ± 0.02 \\
\hline
$\mathcal{H}$-LLM & 0.95 ± 0.01 & 0.93 ± 0.01 & 0.90 ± 0.02 & 0.87 ± 0.01 & 0.84 ± 0.02 & 0.79 ± 0.02 & 0.77 ± 0.02 & 0.68 ± 0.02 \\
\bottomrule
\end{tabular}
\end{table}

\begin{table}[ht]
\tiny
\caption{Accuracy based on the number of percent of corrupted value within a given column (with three corrupted columns with three corrupted columns)}
\begin{tabular}{llllllll}
\toprule
 & 0.01 & 0.02 & 0.05 & 0.1 & 0.2 & 0.3 & 0.5 \\
\midrule
No retraining & 0.43 ± 0.02 & 0.44 ± 0.02 & 0.44 ± 0.02 & 0.45 ± 0.02 & 0.46 ± 0.02 & 0.48 ± 0.02 & 0.49 ± 0.03 \\
Partially Updating & 0.74 ± 0.03 & 0.72 ± 0.02 & 0.70 ± 0.02 & 0.66 ± 0.02 & 0.62 ± 0.02 & 0.57 ± 0.02 & 0.52 ± 0.03 \\
New model training & 0.77 ± 0.02 & 0.74 ± 0.02 & 0.69 ± 0.02 & 0.66 ± 0.02 & 0.61 ± 0.02 & 0.55 ± 0.02 & 0.51 ± 0.03 \\
Ensemble Method & 0.77 ± 0.02 & 0.74 ± 0.02 & 0.69 ± 0.02 & 0.66 ± 0.02 & 0.61 ± 0.02 & 0.55 ± 0.02 & 0.51 ± 0.03 \\
\hline
$\mathcal{H}$-LLM & 0.95 ± 0.01 & 0.94 ± 0.01 & 0.90 ± 0.02 & 0.82 ± 0.02 & 0.70 ± 0.02 & 0.57 ± 0.02 & 0.52 ± 0.03 \\
\bottomrule
\end{tabular}
\end{table}

\subsection{Effects of Self-Healing across corruption levels}
\label{sec:corruption_effects}

We systematically analyze how self-healing effectiveness varies with corruption levels across our five datasets (Airlines, Poker, Weather, Electricity, and Forest Type). For each dataset, we vary both the corruption value $\tau$ and the number of corrupted columns $k$, measuring accuracy with and without the self-healing mechanism. Figure \ref{fig:corr_app} shows that self-healing's impact grows with corruption severity. Specifically, as either $\tau$ or $k$ increases, the gap between baseline and self-healed performance widens. This pattern holds consistently across all datasets, though with varying magnitudes. These results demonstrate that self-healing becomes more crucial as data degradation becomes more severe, providing a safety mechanism for maintaining model performance under challenging conditions.

\begin{figure}[!htbp]
    \centering
    \includegraphics[width=\linewidth]{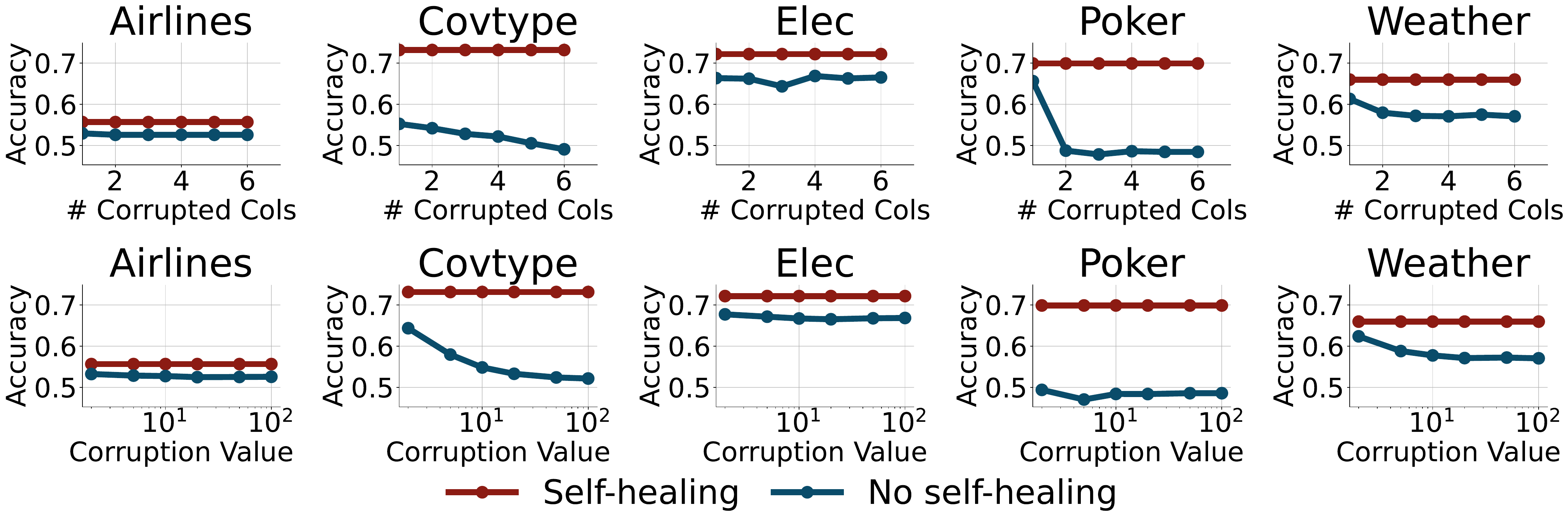}
    \vspace{-7mm}
    \caption{Effects of self-healing for five datasets as we vary the number of corrupted columns and the corruption value. Self-healing consistently identifies corrupted columns at test time. This typically becomes more important as the corruption level increases (either by corruption value or number of corrupted columns). Baseline is not implementing a self-healing mechanism upon drift detection.}
    \label{fig:corr_app}
\end{figure}

\subsection{Extended benchmarks}
\label{sec:extended_benchmarks}

We extend our comparison on the diabetes prediction task to include additional adaptation methods and adaptive algorithms. Table \ref{tab:corrupted_exp_v2} presents results from ten different approaches, including standard adaptations (no retraining, partial updates, new model training, ensemble methods), streaming-specific algorithms (ADWIN Bagging, Hoeffding Tree), and our SHML approach.

\begin{table}[ht]
\tiny
\setlength{\tabcolsep}{3pt}
\renewcommand{\arraystretch}{1}
\centering
\begin{tabular}{@{}l|ccccc|cccc|c@{}}
\toprule
\textbf{Method} & \multicolumn{5}{c|}{\textbf{Adaptations}} & \multicolumn{4}{c|}{\textbf{Algorithms}} & \textbf{SHML} \\
& \begin{tabular}[c]{@{}c@{}}No\\ retraining\end{tabular} & \begin{tabular}[c]{@{}c@{}}Partially\\ updating\end{tabular} & \begin{tabular}[c]{@{}c@{}}New model\\ training\end{tabular} & \begin{tabular}[c]{@{}c@{}}Ensemble\\ method\end{tabular} & Airstream & \begin{tabular}[c]{@{}c@{}}ADWIN\\ Bagging\end{tabular} & \begin{tabular}[c]{@{}c@{}}Hoeffding\\ Tree\end{tabular} & \begin{tabular}[c]{@{}c@{}}Adaptive\\ Voting\end{tabular} & \begin{tabular}[c]{@{}c@{}}Adaptive\\ RF\end{tabular} & \begin{tabular}[c]{@{}c@{}}Self-Healing\\ ML\end{tabular} \\
\midrule
\textbf{Accuracy} & 0.52 $\pm$ 0.16 & 0.65 $\pm$ 0.13 & 0.65 $\pm$ 0.12 & 0.64 $\pm$ 0.12 & 0.59 $\pm$ 0.13 & 0.68 $\pm$ 0.10 & 0.70 $\pm$ 0.10 & 0.62 $\pm$ 0.11 & 0.69 $\pm$ 0.09 & \textbf{0.76 $\pm$ 0.08} \\
\bottomrule
\end{tabular}
\caption{Accuracies of various adaptations on the original diabetes dataset setup in the paper. }
\label{tab:corrupted_exp_v2}
\end{table}

The results show that while specialized streaming algorithms (e.g., Hoeffding Tree at 0.70 accuracy) outperform basic adaptations, they still fall short of SHML's performance (0.76 accuracy). 

\subsection{Component-wise Ablation Analysis}
\label{sec:ablation}

To understand the importance of each SHML component, we conduct an ablation study by systematically removing each component and observing the impact. Table \ref{tab:ablation_study} shows the results of this analysis.

\begin{table}[!htbp]
\centering
\tiny
\begin{tabular}{lcp{9cm}}
\toprule
\textbf{Ablation} & \textbf{Accuracy (\%)} & \textbf{Takeaway} \\
\midrule
Baseline (no self-healing) & 52 & Accuracy is worse without self-healing \\
Full (full self-healing) & 76 & Self-healing improves accuracy over baseline. \\
No monitoring & 52 & Monitoring is required to trigger the SHML system. $\mathcal{H}$-LLM was not triggered and no actions were proposed.  \\
No diagnosis & 52 & Diagnosis is required for proposing sensible actions. Defaults to non-sensical actions.  \\
No actions & 52 & Actions could not be implemented because they were not proposed, defaults to no behavior. \\
No testing & 62 & Actions chosen but not tested against empirical data. A suboptimal action was chosen. \\
\bottomrule
\end{tabular}
\caption{Ablation study results for $\mathcal{H}$-LLM. We systematically remove one component of the system and inspect its outputs. The takeaway represents our qualitative evaluation.}
\label{tab:ablation_study}
\end{table}

The ablation reveals that each component is crucial for effective self-healing. Removing monitoring (52\% accuracy) prevents the system from triggering adaptation. Without diagnosis, the system proposes non-sensical actions, leading to baseline performance. Removing action generation or testing similarly degrades performance to baseline levels, though testing removal shows slightly better performance (62\%) as some reasonable actions are still attempted, albeit without proper validation.

This analysis empirically validates our framework's design, showing that effective self-healing requires all four components working in concert.

\subsection{Model agnosticism}
\label{sec:model_agnostic}

We evaluate SHML's effectiveness across ten different ML models to demonstrate its model-agnostic nature. Table \ref{tab:method_comparison} shows results for models ranging from simple (e.g., Decision Trees) to complex (e.g., XGBoost), comparing various adaptation strategies. SHML consistently outperforms baseline approaches across all model types, with improvements ranging from 11 percentage points (Naive Bayes) to 31 percentage points (LDA). This consistent improvement demonstrates that SHML's benefits are not tied to any particular model architecture but rather stem from its ability to reason about and address degradation causes.

\begin{table}[!htbp]
\tiny
\setlength{\tabcolsep}{2.8pt}
\centering
\begin{tabular}{@{}lccccccccccc@{}}
\toprule
\textit{Method} & DecisionTree & KNN & LDA & LogisticRegression & MLP & NaiveBayes & Perceptron & RandomForest & SGD & XGBoost \\
\midrule
Baseline (No retraining) & 0.63 $\pm$ 0.05 & 0.51 $\pm$ 0.03 & 0.47 $\pm$ 0.03 & 0.49 $\pm$ 0.02 & 0.63 $\pm$ 0.04 & 0.51 $\pm$ 0.03 & 0.49 $\pm$ 0.01 & 0.63 $\pm$ 0.05 & 0.47 $\pm$ 0.03 & 0.67 $\pm$ 0.05 \\
Sliding Window & 0.63 $\pm$ 0.05 & 0.51 $\pm$ 0.03 & 0.47 $\pm$ 0.03 & 0.49 $\pm$ 0.02 & 0.66 $\pm$ 0.03 & 0.51 $\pm$ 0.03 & 0.49 $\pm$ 0.01 & 0.70 $\pm$ 0.05 & 0.47 $\pm$ 0.03 & 0.67 $\pm$ 0.05 \\
Drift Detection (DDM) & 0.63 $\pm$ 0.05 & 0.51 $\pm$ 0.03 & 0.47 $\pm$ 0.03 & 0.49 $\pm$ 0.02 & 0.64 $\pm$ 0.04 & 0.51 $\pm$ 0.03 & 0.49 $\pm$ 0.01 & 0.66 $\pm$ 0.05 & 0.47 $\pm$ 0.03 & 0.67 $\pm$ 0.05 \\
Ensemble with DDM & 0.63 $\pm$ 0.05 & 0.51 $\pm$ 0.03 & 0.47 $\pm$ 0.03 & 0.49 $\pm$ 0.02 & 0.65 $\pm$ 0.05 & 0.51 $\pm$ 0.03 & 0.49 $\pm$ 0.01 & 0.65 $\pm$ 0.07 & 0.47 $\pm$ 0.03 & 0.67 $\pm$ 0.05 \\
\hline
\textbf{$\mathcal{H}$-LLM} & \textbf{0.70 $\pm$ 0.04} & \textbf{0.73 $\pm$ 0.05} & \textbf{0.77 $\pm$ 0.04} & \textbf{0.76 $\pm$ 0.04} & \textbf{0.78 $\pm$ 0.05} & \textbf{0.62 $\pm$ 0.02} & \textbf{0.68 $\pm$ 0.09} & \textbf{0.72 $\pm$ 0.04} & \textbf{0.75 $\pm$ 0.04} & \textbf{0.71 $\pm$ 0.04} \\
\bottomrule
\end{tabular}
\caption{Comparison of various methods across different ML models on the weather dataset (setup above), where features are corrupted at test time. Results show mean accuracy $\pm$ standard deviation. }
\label{tab:method_comparison}
\end{table}

\newpage

\section{Optimal diagnosis}
\label{sec:assumption_upd}

Here, we prove that under the stated assumptions, the optimal diagnosis has zero entropy.

\begin{proposition}
Under Assumption \ref{asmp:independent}, the optimal diagnosis $\zeta^*$ has a zero entropy, i.e., $\mathbb{H}(\zeta^*) = 0$.
\end{proposition}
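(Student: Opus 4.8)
The plan is to exploit Assumption~\ref{asmp:independent} to rewrite the objective $\mathbb{E}_{a \sim \pi(\cdot|\zeta)}[R(a)]$ as a \emph{linear} function of the diagnosis vector $\zeta$, and then invoke the fact that a linear functional on the probability simplex attains its minimum at a vertex. First I would substitute the mixture expression $\pi(a|\zeta) = \sum_{z \in \mathcal{Z}} \pi(a|z^{\dagger})\zeta(z)$ into the expected risk and exchange the (finite) order of summation to obtain
\begin{equation}
\mathbb{E}_{a \sim \pi(\cdot|\zeta)}[R(a)] = \sum_{a \in \mathcal{A}} R(a) \sum_{z \in \mathcal{Z}} \pi(a|z^{\dagger})\zeta(z) = \sum_{z \in \mathcal{Z}} \zeta(z)\, g(z),
\end{equation}
where $g(z) \coloneqq \mathbb{E}_{a \sim \pi(\cdot|z^{\dagger})}[R(a)]$ is the expected risk incurred by the deterministic diagnosis placing all mass on reason $z$. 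The crucial structural consequence of Assumption~\ref{asmp:independent} is that the objective is now an inner product $\langle \zeta, g \rangle$, i.e. linear in $\zeta$, with the per-reason risks $g$ acting as fixed coefficients.

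Next I would minimize this linear functional over $\Delta(\mathcal{Z})$. Since $\mathcal{Z}$ is finite, $\Delta(\mathcal{Z})$ is a compact polytope whose extreme points are exactly the point masses $\{z^{\dagger} : z \in \mathcal{Z}\}$. A linear function on a polytope attains its minimum at an extreme point, so there exists $z^* \in \argmin_{z \in \mathcal{Z}} g(z)$ such that $\zeta^* = (z^*)^{\dagger}$ minimizes $\langle \zeta, g \rangle$. Concretely, for any $\zeta \in \Delta(\mathcal{Z})$ we have $\sum_{z} \zeta(z) g(z) \geq \sum_{z} \zeta(z)\, g(z^*) = g(z^*)$, so no diagnosis can outperform the point mass $(z^*)^{\dagger}$.

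Finally, I would compute the entropy of this minimizer. A point mass $(z^*)^{\dagger}$ satisfies $(z^*)^{\dagger}(z^*) = 1$ and $(z^*)^{\dagger}(z) = 0$ otherwise, so with the standard convention $0\log 0 = 0$ its entropy is $\mathbb{H}((z^*)^{\dagger}) = -1 \cdot \log 1 = 0$, giving $\mathbb{H}(\zeta^*) = 0$ as claimed. The main obstacle I anticipate is the non-uniqueness of the minimizer: if several reasons tie for the minimal value of $g$, any convex combination of the corresponding point masses is also optimal and may have strictly positive entropy. I would therefore state the conclusion as the existence of a zero-entropy optimal diagnosis---equivalently, that the minimum of the objective is always attained at a vertex of the simplex---rather than claiming that every minimizer is degenerate.
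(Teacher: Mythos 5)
Your proof is correct and follows essentially the same route as the paper's: expand the mixture from Assumption~\ref{asmp:independent}, swap the order of summation so the objective becomes linear in $\zeta$, and observe that this linear functional on the simplex is minimized at a point mass $(z^*)^{\dagger}$, whose entropy is zero. Your closing caveat about ties is a genuine refinement the paper glosses over: its proof concludes $\zeta^* = (z^*)^{\dagger}$ outright, even though when several reasons tie for minimal per-reason risk $\mathbb{E}_{a \sim \pi(\cdot|z^{\dagger})}[R(a)]$, any mixture of the tied point masses is also optimal and has strictly positive entropy, so the statement is more accurately phrased as the existence of a zero-entropy optimal diagnosis --- exactly as you note.
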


\begin{proof}
By Definition \ref{def:optimaldiagnosis},
\begin{equation}
\zeta^* = \mathop{\arg\min}\limits_{\zeta \in \Delta(\mathcal{Z})} \mathbb{E}_{a \sim \pi(\cdot | \zeta)}[R(a)]
\end{equation}
As $\mathcal{A}$ is finite, we write the expected value as follows.
\begin{equation}
   \mathbb{E}_{a \sim \pi(\cdot | \zeta)}[R(a)] = \sum_{a\in\mathcal{A}} R(a) \pi(a | \zeta)  
\end{equation}
By Assumption \ref{asmp:independent}, this can be rewritten as:
\begin{equation}
    \sum_{a\in\mathcal{A}} R(a) \left(\sum_{z\in\mathcal{Z}} \pi(a|z^{\dagger})\zeta(z)\right)
\end{equation}
We change the order of summation to arrive at the following.
\begin{equation}
\sum_{z\in\mathcal{Z}} \zeta(z) \sum_{a\in\mathcal{A}} R(a) \pi(a|z^{\dagger})
\end{equation}
The inner sum can now be rewritten as an expectation.
\begin{equation}
    \sum_{z\in\mathcal{Z}} \zeta(z) \mathbb{E}_{a \sim \pi(\cdot | z^{\dagger})} [R(a)]
\end{equation}
Thus we can rewrite the minimization problem as follows.
\begin{equation}
\zeta^* = \mathop{\arg\min}\limits_{\zeta \in \Delta(\mathcal{Z})} \sum_{z\in\mathcal{Z}} \zeta(z) \mathbb{E}_{a \sim \pi(\cdot | z^{\dagger})}[R(a)]
\end{equation}
Let $z^*\in\mathcal{Z}$ such that
\begin{equation}
z^* \in \mathop{\arg\min}\limits_{z\in\mathcal{Z}} \mathbb{E}_{a \sim \pi(\cdot | z^{\dagger})}[R(a)]
\end{equation}
Then
\begin{equation}
\begin{split}
    \sum_{z\in\mathcal{Z}} \zeta(z) \mathbb{E}_{a \sim \pi(\cdot | z^{\dagger})}[R(a)] &\geq \sum_{z\in\mathcal{Z}} \zeta(z) \mathbb{E}_{a \sim \pi(\cdot | (z^*)^{\dagger})}[R(a)]\\ &= \mathbb{E}_{a \sim \pi(\cdot | (z^*)^{\dagger})}[R(a)]\\ &= \sum_{z\in\mathcal{Z}} (z^*)^{\dagger}(z) \mathbb{E}_{a \sim \pi(\cdot | z^{\dagger})}[R(a)] 
\end{split}
\end{equation}
Therefore
\begin{equation}
    \zeta^* = (z^*)^{\dagger} 
\end{equation}
and by the definition of entropy and $(z^*)^{\dagger}$ we get
\begin{equation}
    \mathbb{H}(\zeta^*) = 0
\end{equation}
\end{proof}

\newpage
\onecolumn
\section*{NeurIPS Paper Checklist}

\begin{enumerate}

\item {\bf Claims}
    \item[] Question: Do the main claims made in the abstract and introduction accurately reflect the paper's contributions and scope?
    \item[] Answer: \answerYes{} 
    \item[] Justification: The paper presents a novel paradigm called self-healing machine learning. This is established in Section \ref{sec:four_stages} and Sec. \ref{sec:reason_diagnosis}, where we develop the theoretical underpinnings of this field. We also make significant practical contributions by proposing the first-ever self-healing algorithm presented in Sec. \ref{sec:hllm}. We discuss the positive impacts of this technology in the introduction and the discussion section, where we argue it can have transformative effects in a variety of real-world situations. We clearly show the viability of self-healing machine learning in Sec. \ref{sec:experiments}.
    \item[] Guidelines:
    \begin{itemize}
        \item The answer NA means that the abstract and introduction do not include the claims made in the paper.
        \item The abstract and/or introduction should clearly state the claims made, including the contributions made in the paper and important assumptions and limitations. A No or NA answer to this question will not be perceived well by the reviewers. 
        \item The claims made should match theoretical and experimental results, and reflect how much the results can be expected to generalize to other settings. 
        \item It is fine to include aspirational goals as motivation as long as it is clear that these goals are not attained by the paper. 
    \end{itemize}

\item {\bf Limitations}
    \item[] Question: Does the paper discuss the limitations of the work performed by the authors?
    \item[] Answer: \answerYes{} 
    \item[] Justification: Self-healing machine learning inherently assumes that diagnoses can be performed well and that actions can be proposed on the basis of these diagnoses. This poses challenges we discuss in Sec. \ref{sec:challenges_of_shml}. We attempt to overcome these limitations by using language models and incorporating their unique properties into $\mathcal{H}$-LLM.  We discuss further challenges of self-healing systems in the discussion section (Sec. \ref{sec:discussion})
    \item[] Guidelines:
    \begin{itemize}
        \item The answer NA means that the paper has no limitation while the answer No means that the paper has limitations, but those are not discussed in the paper. 
        \item The authors are encouraged to create a separate "Limitations" section in their paper.
        \item The paper should point out any strong assumptions and how robust the results are to violations of these assumptions (e.g., independence assumptions, noiseless settings, model well-specification, asymptotic approximations only holding locally). The authors should reflect on how these assumptions might be violated in practice and what the implications would be.
        \item The authors should reflect on the scope of the claims made, e.g., if the approach was only tested on a few datasets or with a few runs. In general, empirical results often depend on implicit assumptions, which should be articulated.
        \item The authors should reflect on the factors that influence the performance of the approach. For example, a facial recognition algorithm may perform poorly when image resolution is low or images are taken in low lighting. Or a speech-to-text system might not be used reliably to provide closed captions for online lectures because it fails to handle technical jargon.
        \item The authors should discuss the computational efficiency of the proposed algorithms and how they scale with dataset size.
        \item If applicable, the authors should discuss possible limitations of their approach to address problems of privacy and fairness.
        \item While the authors might fear that complete honesty about limitations might be used by reviewers as grounds for rejection, a worse outcome might be that reviewers discover limitations that aren't acknowledged in the paper. The authors should use their best judgment and recognize that individual actions in favor of transparency play an important role in developing norms that preserve the integrity of the community. Reviewers will be specifically instructed to not penalize honesty concerning limitations.
    \end{itemize}

\item {\bf Theory Assumptions and Proofs}
    \item[] Question: For each theoretical result, does the paper provide the full set of assumptions and a complete (and correct) proof?
    \item[] Answer: \answerYes{} 
    \item[] Justification: In order to establish the relationship between diagnosis and the actions, we define the certainty diagnosis (Def. \ref{def:certainty}) and the optimal diagnosis (Def. \ref{def:optimaldiagnosis}). Furthermore, we assume independent actions (Assumption \ref{asmp:independent}). Under this, we establish two propositions about the entropy of an optimal diagnosis (Proposition \ref{prop:optimal_diagnosis_zero_entropy}) and its existence (Proposition \ref{prop:optimal_diagnosis}). One of the proofs is in the main paper and the other one can be found in Appendix \ref{sec:assumption_upd}.

    \item[] Guidelines:
    \begin{itemize}
        \item The answer NA means that the paper does not include theoretical results. 
        \item All the theorems, formulas, and proofs in the paper should be numbered and cross-referenced.
        \item All assumptions should be clearly stated or referenced in the statement of any theorems.
        \item The proofs can either appear in the main paper or the supplemental material, but if they appear in the supplemental material, the authors are encouraged to provide a short proof sketch to provide intuition. 
        \item Inversely, any informal proof provided in the core of the paper should be complemented by formal proofs provided in appendix or supplemental material.
        \item Theorems and Lemmas that the proof relies upon should be properly referenced. 
    \end{itemize}

    \item {\bf Experimental Result Reproducibility}
    \item[] Question: Does the paper fully disclose all the information needed to reproduce the main experimental results of the paper to the extent that it affects the main claims and/or conclusions of the paper (regardless of whether the code and data are provided or not)?
    \item[] Answer: \answerYes{} 
    \item[] Justification: We provide all the required information to reproduce our algorithm $\mathcal{H}$-LLM (Appendix. \ref{sec:appendix_hllm}) as well as the full detail on experiments, including dataset generation, parameters, etc. (Appendix. \ref{sec:experiments_appendix}). We provide an additional appendix section with supplementary experiments that can aid the reviewers where we discuss the setups within each experiment (Appendix. \ref{appendix:extended_experiments}). The code can be found at: \url{https://github.com/pauliusrauba/Self_Healing_ML} or \url{https://github.com/vanderschaarlab/Self_Healing_ML}
    \item[] Guidelines:
    \begin{itemize}
        \item The answer NA means that the paper does not include experiments.
        \item If the paper includes experiments, a No answer to this question will not be perceived well by the reviewers: Making the paper reproducible is important, regardless of whether the code and data are provided or not.
        \item If the contribution is a dataset and/or model, the authors should describe the steps taken to make their results reproducible or verifiable. 
        \item Depending on the contribution, reproducibility can be accomplished in various ways. For example, if the contribution is a novel architecture, describing the architecture fully might suffice, or if the contribution is a specific model and empirical evaluation, it may be necessary to either make it possible for others to replicate the model with the same dataset, or provide access to the model. In general. releasing code and data is often one good way to accomplish this, but reproducibility can also be provided via detailed instructions for how to replicate the results, access to a hosted model (e.g., in the case of a large language model), releasing of a model checkpoint, or other means that are appropriate to the research performed.
        \item While NeurIPS does not require releasing code, the conference does require all submissions to provide some reasonable avenue for reproducibility, which may depend on the nature of the contribution. For example
        \begin{enumerate}
            \item If the contribution is primarily a new algorithm, the paper should make it clear how to reproduce that algorithm.
            \item If the contribution is primarily a new model architecture, the paper should describe the architecture clearly and fully.
            \item If the contribution is a new model (e.g., a large language model), then there should either be a way to access this model for reproducing the results or a way to reproduce the model (e.g., with an open-source dataset or instructions for how to construct the dataset).
            \item We recognize that reproducibility may be tricky in some cases, in which case authors are welcome to describe the particular way they provide for reproducibility. In the case of closed-source models, it may be that access to the model is limited in some way (e.g., to registered users), but it should be possible for other researchers to have some path to reproducing or verifying the results.
        \end{enumerate}
    \end{itemize}

\item {\bf Open access to data and code}
    \item[] Question: Does the paper provide open access to the data and code, with sufficient instructions to faithfully reproduce the main experimental results, as described in supplemental material?
    \item[] Answer: \answerYes{} 
    \item[] Justification: We provide full experimental details, where all access to data and code is fully available. Code can be found at: \url{https://github.com/pauliusrauba/Self_Healing_ML} or \url{https://github.com/vanderschaarlab/Self_Healing_ML}.
    \item[] Guidelines:
    \begin{itemize}
        \item The answer NA means that paper does not include experiments requiring code.
        \item Please see the NeurIPS code and data submission guidelines (\url{https://nips.cc/public/guides/CodeSubmissionPolicy}) for more details.
        \item While we encourage the release of code and data, we understand that this might not be possible, so “No” is an acceptable answer. Papers cannot be rejected simply for not including code, unless this is central to the contribution (e.g., for a new open-source benchmark).
        \item The instructions should contain the exact command and environment needed to run to reproduce the results. See the NeurIPS code and data submission guidelines (\url{https://nips.cc/public/guides/CodeSubmissionPolicy}) for more details.
        \item The authors should provide instructions on data access and preparation, including how to access the raw data, preprocessed data, intermediate data, and generated data, etc.
        \item The authors should provide scripts to reproduce all experimental results for the new proposed method and baselines. If only a subset of experiments are reproducible, they should state which ones are omitted from the script and why.
        \item At submission time, to preserve anonymity, the authors should release anonymized versions (if applicable).
        \item Providing as much information as possible in supplemental material (appended to the paper) is recommended, but including URLs to data and code is permitted.
    \end{itemize}

\item {\bf Experimental Setting/Details}
    \item[] Question: Does the paper specify all the training and test details (e.g., data splits, hyperparameters, how they were chosen, type of optimizer, etc.) necessary to understand the results?
    \item[] Answer: \answerYes{} 
    \item[] Justification: All details are provided in Appendix. \ref{sec:experiments_appendix}. The experimental setting is presented in the core of the paper to a level of detail that is necessary to appreciate the results and make sense of them. Full details provided in the appendix.
    \item[] Guidelines:
    \begin{itemize}
        \item The answer NA means that the paper does not include experiments.
        \item The experimental setting should be presented in the core of the paper to a level of detail that is necessary to appreciate the results and make sense of them.
        \item The full details can be provided either with the code, in appendix, or as supplemental material.
    \end{itemize}

\item {\bf Experiment Statistical Significance}
    \item[] Question: Does the paper report error bars suitably and correctly defined or other appropriate information about the statistical significance of the experiments?
    \item[] Answer: \answerYes{} 
    \item[] Justification: All appropriate experiments report error bars. 
    \item[] Guidelines:
    \begin{itemize}
        \item The answer NA means that the paper does not include experiments.
        \item The authors should answer "Yes" if the results are accompanied by error bars, confidence intervals, or statistical significance tests, at least for the experiments that support the main claims of the paper.
        \item The factors of variability that the error bars are capturing should be clearly stated (for example, train/test split, initialization, random drawing of some parameter, or overall run with given experimental conditions).
        \item The method for calculating the error bars should be explained (closed form formula, call to a library function, bootstrap, etc.)
        \item The assumptions made should be given (e.g., Normally distributed errors).
        \item It should be clear whether the error bar is the standard deviation or the standard error of the mean.
        \item It is OK to report 1-sigma error bars, but one should state it. The authors should preferably report a 2-sigma error bar than state that they have a 96\% CI, if the hypothesis of Normality of errors is not verified.
        \item For asymmetric distributions, the authors should be careful not to show in tables or figures symmetric error bars that would yield results that are out of range (e.g. negative error rates).
        \item If error bars are reported in tables or plots, The authors should explain in the text how they were calculated and reference the corresponding figures or tables in the text.
    \end{itemize}

\item {\bf Experiments Compute Resources}
    \item[] Question: For each experiment, does the paper provide sufficient information on the computer resources (type of compute workers, memory, time of execution) needed to reproduce the experiments?
    \item[] Answer: \answerYes{} 
    \item[] Justification: Yes, all relevant information is provided.
    \item[] Guidelines:
    \begin{itemize}
        \item The answer NA means that the paper does not include experiments.
        \item The paper should indicate the type of compute workers CPU or GPU, internal cluster, or cloud provider, including relevant memory and storage.
        \item The paper should provide the amount of compute required for each of the individual experimental runs as well as estimate the total compute. 
        \item The paper should disclose whether the full research project required more compute than the experiments reported in the paper (e.g., preliminary or failed experiments that didn't make it into the paper). 
    \end{itemize}
    
\item {\bf Code Of Ethics}
    \item[] Question: Does the research conducted in the paper conform, in every respect, with the NeurIPS Code of Ethics \url{https://neurips.cc/public/EthicsGuidelines}?
    \item[] Answer: \answerYes{} 
    \item[] Justification: Yes.
    \item[] Guidelines:
    \begin{itemize}
        \item The answer NA means that the authors have not reviewed the NeurIPS Code of Ethics.
        \item If the authors answer No, they should explain the special circumstances that require a deviation from the Code of Ethics.
        \item The authors should make sure to preserve anonymity (e.g., if there is a special consideration due to laws or regulations in their jurisdiction).
    \end{itemize}

\item {\bf Broader Impacts}
    \item[] Question: Does the paper discuss both potential positive societal impacts and negative societal impacts of the work performed?
    \item[] Answer: \answerYes{} 
    \item[] Justification: Yes. We discuss this in Sec. \ref{sec:discussion}. To reiterate, we believe our work can have significant positive effects on multiple areas within AI and have substantial practical implications, as discussed in the Discussion section. This includes having immediate practical benefits in industries where model degradation is common, such as medicine, finance, predictive policing, IoT data streams, and more. We further hope our work spurs substantial developments in self-healing theory. We also discuss that the unique improvements in systems that can employ self-healing could also be misused by agents for other purposes, such as using self-healing for surveillance systems or other ethically ambiguous technologies. The broader impact also subsumes future work in this area. We hope that a potential direction for future work is building theory around optimal diagnoses, optimal adaptation strategies, as well as scaling larger algorithms. 
    \item[] Guidelines:
    \begin{itemize}
        \item The answer NA means that there is no societal impact of the work performed.
        \item If the authors answer NA or No, they should explain why their work has no societal impact or why the paper does not address societal impact.
        \item Examples of negative societal impacts include potential malicious or unintended uses (e.g., disinformation, generating fake profiles, surveillance), fairness considerations (e.g., deployment of technologies that could make decisions that unfairly impact specific groups), privacy considerations, and security considerations.
        \item The conference expects that many papers will be foundational research and not tied to particular applications, let alone deployments. However, if there is a direct path to any negative applications, the authors should point it out. For example, it is legitimate to point out that an improvement in the quality of generative models could be used to generate deepfakes for disinformation. On the other hand, it is not needed to point out that a generic algorithm for optimizing neural networks could enable people to train models that generate Deepfakes faster.
        \item The authors should consider possible harms that could arise when the technology is being used as intended and functioning correctly, harms that could arise when the technology is being used as intended but gives incorrect results, and harms following from (intentional or unintentional) misuse of the technology.
        \item If there are negative societal impacts, the authors could also discuss possible mitigation strategies (e.g., gated release of models, providing defenses in addition to attacks, mechanisms for monitoring misuse, mechanisms to monitor how a system learns from feedback over time, improving the efficiency and accessibility of ML).
    \end{itemize}
    
\item {\bf Safeguards}
    \item[] Question: Does the paper describe safeguards that have been put in place for responsible release of data or models that have a high risk for misuse (e.g., pretrained language models, image generators, or scraped datasets)?
    \item[] Answer: \answerNA{} 
    \item[] Justification: We do not release any data or models with a high risk for misuse.
    \item[] Guidelines:
    \begin{itemize}
        \item The answer NA means that the paper poses no such risks.
        \item Released models that have a high risk for misuse or dual-use should be released with necessary safeguards to allow for controlled use of the model, for example by requiring that users adhere to usage guidelines or restrictions to access the model or implementing safety filters. 
        \item Datasets that have been scraped from the Internet could pose safety risks. The authors should describe how they avoided releasing unsafe images.
        \item We recognize that providing effective safeguards is challenging, and many papers do not require this, but we encourage authors to take this into account and make a best faith effort.
    \end{itemize}

\item {\bf Licenses for existing assets}
    \item[] Question: Are the creators or original owners of assets (e.g., code, data, models), used in the paper, properly credited and are the license and terms of use explicitly mentioned and properly respected?
    \item[] Answer: \answerYes{} 
    \item[] Justification: We cite and refer to all appropriate codebases that are employed as benchmarks.
    \item[] Guidelines:
    \begin{itemize}
        \item The answer NA means that the paper does not use existing assets.
        \item The authors should cite the original paper that produced the code package or dataset.
        \item The authors should state which version of the asset is used and, if possible, include a URL.
        \item The name of the license (e.g., CC-BY 4.0) should be included for each asset.
        \item For scraped data from a particular source (e.g., website), the copyright and terms of service of that source should be provided.
        \item If assets are released, the license, copyright information, and terms of use in the package should be provided. For popular datasets, \url{paperswithcode.com/datasets} has curated licenses for some datasets. Their licensing guide can help determine the license of a dataset.
        \item For existing datasets that are re-packaged, both the original license and the license of the derived asset (if it has changed) should be provided.
        \item If this information is not available online, the authors are encouraged to reach out to the asset's creators.
    \end{itemize}

\item {\bf New Assets}
    \item[] Question: Are new assets introduced in the paper well documented and is the documentation provided alongside the assets?
    \item[] Answer: \answerYes{} 
    \item[] Justification: The core assets, including the framework and the algorithm, are clearly described.
    \item[] Guidelines:
    \begin{itemize}
        \item The answer NA means that the paper does not release new assets.
        \item Researchers should communicate the details of the dataset/code/model as part of their submissions via structured templates. This includes details about training, license, limitations, etc. 
        \item The paper should discuss whether and how consent was obtained from people whose asset is used.
        \item At submission time, remember to anonymize your assets (if applicable). You can either create an anonymized URL or include an anonymized zip file.
    \end{itemize}

\item {\bf Crowdsourcing and Research with Human Subjects}
    \item[] Question: For crowdsourcing experiments and research with human subjects, does the paper include the full text of instructions given to participants and screenshots, if applicable, as well as details about compensation (if any)? 
    \item[] Answer: \answerNA{} 
    \item[] Justification: The paper does not involve crowdsourcing nor research with human subjects.
    \item[] Guidelines:
    \begin{itemize}
        \item The answer NA means that the paper does not involve crowdsourcing nor research with human subjects.
        \item Including this information in the supplemental material is fine, but if the main contribution of the paper involves human subjects, then as much detail as possible should be included in the main paper. 
        \item According to the NeurIPS Code of Ethics, workers involved in data collection, curation, or other labor should be paid at least the minimum wage in the country of the data collector. 
    \end{itemize}

\item {\bf Institutional Review Board (IRB) Approvals or Equivalent for Research with Human Subjects}
    \item[] Question: Does the paper describe potential risks incurred by study participants, whether such risks were disclosed to the subjects, and whether Institutional Review Board (IRB) approvals (or an equivalent approval/review based on the requirements of your country or institution) were obtained?
    \item[] Answer: \answerNA{} 
    \item[] Justification: The paper does not involve crowdsourcing nor research with human subjects.
    \item[] Guidelines: 
    \begin{itemize}
        \item The answer NA means that the paper does not involve crowdsourcing nor research with human subjects.
        \item Depending on the country in which research is conducted, IRB approval (or equivalent) may be required for any human subjects research. If you obtained IRB approval, you should clearly state this in the paper. 
        \item We recognize that the procedures for this may vary significantly between institutions and locations, and we expect authors to adhere to the NeurIPS Code of Ethics and the guidelines for their institution. 
        \item For initial submissions, do not include any information that would break anonymity (if applicable), such as the institution conducting the review.
    \end{itemize}

\end{enumerate}

\end{document}